\title{\Large \textbf{Identifying Causal Effects Under Functional Dependencies}}
\author[1]{\normalsize Yizuo Chen}
\author[1]{\normalsize Adnan Darwiche}
\affil[1]{\small Computer Science Department, University of California, Los Angeles, USA}
\date{\vspace{-5ex}}
\begin{document}
\maketitle

\begin{abstract}%
\noindent We study the identification of causal effects, motivated by two improvements to identifiability which can be attained if one knows that some variables in a causal graph are functionally determined by their parents (without needing to know the specific functions).
First, an unidentifiable causal effect may become
identifiable when certain variables are functional.
Second, certain functional variables can be excluded from being
observed without affecting the identifiability of a causal effect,
which may significantly reduce the number of needed variables in observational data.
Our results are largely based on an 
elimination procedure which removes functional variables from a causal graph while preserving 
key properties in the resulting causal graph, including the identifiability of causal effects.
\end{abstract}


\section{Introduction}
\label{sec:intro}
A causal effect measures the 
impact of an intervention
on some events of interest, and is exemplified
by the question ``what is the probability that a patient would recover had they taken a drug?''  
This type of question, also known as an interventional query, belongs to the
second rung of Pearl's causal hierarchy~\citep{pearl18} so it ultimately requires
experimental studies if it is to be estimated from
data. However, it is well known that such interventional queries 
can sometimes be answered based on observational
queries (first rung of the causal hierarchy) which can be estimated from observational data.
This becomes very significant when 
experimental studies are either not available, expensive to conduct, or would entail ethical concerns.
Hence, a key question in causal inference
asks when and how a causal effect can be estimated
from available observational data 
assuming a causal graph is provided~\citep{pearl00b}.

More precisely,
given a set of \emph{treatment} variables \(\X\)
and a set of \emph{outcome} variables \(\Y\), the causal effect
of \(\x\) on \(\Y\), denoted \(\Pr(\Y|do(\x))\) or 
\(\Pr_{\x}(\Y)\), is the 
marginal probability on \(\Y\) when an intervention sets the states of variables
\(\X\) to \(\x\). 
The problem of identifying a causal effect studies
whether \(\Pr_{\x}(\Y)\) can be uniquely determined from
a causal graph and a distribution $\Pr(\V)$ over some variables $\V$ in the causal graph~\citep{pearl00b}, where $\Pr(\V)$ is typically estimated from observational data.
The casual effect is guaranteed to be identifiable if $\V$ correspond to all variables in the causal graph; that is, if all such variables are observed.
When some variables are hidden (unobserved), 
it is possible that different parameterizations of the causal graph will induce the same distribution $\Pr(\V)$ but
different values for the causal effect \(\Pr_{\x}(\Y)\) which leads to unidentifiablility.
In the past few decades, a significant amount of effort has
been devoted to studying the identifiability of causal effects; see, e.g.,~\citep{pearl95,pearl00b,SpirtesBook,imbens_rubin_2015,PetersBook,hernanbook}.
Some early works include the \emph{back-door criterion}~\citep{pearl1993b,pearl00b}
and the \emph{front-door criterion}~\citep{pearl95,pearl00b}.
These criteria are sound but incomplete as they may fail to identify certain causal effects
that are indeed identifiable.
Complete identification methods include the 
do-calculus~\citep{pearl00b}, the identification algorithm
in~\citep{aaai/HuangV06}, and the ID algorithm proposed in~\citep{aaai/ShpitserP06}.
These methods require some positivity assumptions (constraints)
on the observational distribution $\Pr(\V)$ and can derive
an identifying formula that computes the causal effect based on $\Pr(\V)$  when 
the causal effect is identifiable.\footnote{Some recent works take a different approach by first estimating the parameters of a causal graph to obtain a fully-specified causal model which is then used to estimate causal effects through inference~\citep{nips/Zaffalon,ijar/zaffalon23,causalityAC,corr/counter-ACE}. Further works focus on the efficiency of estimating causal effects from finite data, e.g.,~\citep{aaai/Jung0B20,nips/Jung0B20,aaai/Jung0B21,icml/Jung0B21}.}

A recent line of work studies the impact of additional information
on identifiability, beyond causal graphs and observational data.
For example,~\citep{nips/TikkaHK19} showed that 
certain unidentifiable causal effects can
become identifiable given information about context-specific independence.
Our work in this paper follows the same direction as
we consider the problem of causal effect identification
in the presence of a particular type of qualitative knowledge called \emph{functional dependencies}~\citep{Pearl88b}.
We say there is a functional dependency between a variable $X$ and its parents $\P$ in the causal graph if the distribution $\Pr(X|\P)$ is deterministic but we do not know the distribution itself (i.e., the specific values of $\Pr(x|\p)$). We also say
in this case that variable $X$ is \textit{functional.}
Previous works have shown
that functional dependencies can be exploited to improve the efficiency of Bayesian network inference~\citep{DarwicheECAI20b,causalityAC,ChenChoiDarwiche20,uai/ChenDarwiche22,ijcai/HanCD23}.
We complement these works by showing 
that functional dependencies can 
also be exploited for identifiability. 
In particular, we show that some unidentifiable causal effects may become identifiable given such dependencies, propose techniques for testing identifiability in this context, and highlight other
implications of such dependencies on the practice of identifiability.

Consider a causal graph \(G\) 
and a distribution $\Pr(\V)$ over the observed variables $\V$ in $G$.
To check the identifiability of a causal effect,
it is standard to first apply the \textit{projection} operation in~\citep{Verma93,uai/TianP02b}
which constructs another causal graph \(G'\) with $\V$
as its non-root variables, and follow by applying an identification algorithm to \(G'\) like the ID algorithm~\citep{aaai/ShpitserP06}.
This two-stage procedure, which we will call \textit{project-ID,} is applicable only under some positivity constraints (assumptions)
which preclude some events from having a zero probability. 
Since such positivity constraints may contradict functional dependencies,
we formulate the notion of \textit{constrained identifiability}
which takes positivity constraints as an input (in addition
to the causal graph $G$ and distribution $\Pr(\V)$). We
also formulate the notion of \textit{functional identifiability}
which further takes functional dependencies as an input.
This allows us to explicitly treat the interactions 
between positivity constraints and functional dependencies, 
which is needed for combining classical methods like project-ID
with the results we present in this paper.

We start with some technical preliminaries in Section~\ref{sec:preliminaries}. We formally define positivity constraints and functional dependencies in 
Section~\ref{sec:f-identifiability} where we also 
introduce the problems of constrained and functional identifiability.
Section~\ref{sec:felim} introduces two primitive operations, \emph{functional elimination} and 
\textit{functional projection,}
which are needed for later treatments. 
Sections~\ref{sec:causal-id} presents our core results on
functional identifiability and how they can be combined with
existing identifiability algorithms. We finally close with concluding remarks in Section~\ref{sec:conclusion}.
Proofs of all results are included in the Appendix.

\section{Technical Preliminaries}
\label{sec:preliminaries}
We consider discrete variables in this work.
Single variables are denoted by uppercase letters (e.g., \(X\))
and their states are denoted by lowercase letters (e.g., \(x\)).
Sets of variables are denoted by bold uppercase letters (e.g., \(\X\))
and their instantiations are denoted by bold lowercase letters (e.g., \(\x\)).

\subsection{Causal Bayesian Networks and Interventions}
A (causal) Bayesian network (BN) is a pair \(\tuple{G}{\FF}\) where $G$ is a \emph{causal graph} in the form of a directed acyclic graph (DAG), and $\FF$ is a set of conditional probability tables (CPTs). We have one CPT for each variable $X$ with parents $\P$ in $G$, which specifies the
conditional probability distributions \(\Pr(X | \P)\).
This CPT will often be denoted by \(f_X(X,\P)\)
so $f_X(x,\p) \in [0,1]$ for all instantiations $x,\p$ and \(\sum_{x} f_X(x,\p) = 1\) for every instantiation \(\p\).

A BN induces a joint distribution over its variables \(\V\)
which is exactly the product of its CPTs, 
i.e., \(\Pr(\V) = \prod_{V \in \V} f_V\).
Applying a treatment \(do(\x)\) to the joint distribution
yields a new distribution called the 
\textit{interventional distribution,} denoted \(\Pr_{\x}(\V)\).
One way to compute the interventional distribution is 
to consider the \emph{mutilated BN} \(\tuple{G'}{\FF'}\) that is 
constructed from the original BN \(\tuple{G}{\FF}\) as follows:
remove from \(G\) all edges that point to variables in \(\X\), then
replace the CPT in \(\FF\) for each
\(X \in \X\)  with a CPT \(f_X(X)\) where \(f_X(x) = 1\) if \(x\) is consistent with \(\x\) and \(f_X(x) = 0\) otherwise.
Figure~\ref{sfig:mut-ex1} depicts a causal graph $G$ and
Figure~\ref{sfig:mut-ex2} depicts the mutilated causal graph $G'$ under a treatment \(do(x_1,x_2)\).
The interventional distribution \(\Pr_{\x}\)
is the distribution induced by the mutilated BN 
\(\tuple{G'}{\FF'}\), where
\(\Pr_{\x}(\Y)\) corresponds to the
causal effect \(\Pr(\Y|do(\x))\) also notated by \(\Pr(\Y_\x)\).

\subsection{Identifying Causal Effects}
A key question in causal inference
is to check whether a causal effect can be (uniquely) computed given the causal graph \(G\) and a distribution \(\Pr(\V)\) over a subset \(\V\) of its variables.
If the answer is yes, we say that the causal effect is \emph{identifiable}
given $G$ and \(\Pr(\V)\). Otherwise, the causal effect is \emph{unidentifiable}.
Variables \(\V\) are said to be \emph{observed} and other
variables are said to be \emph{hidden}, where
\(\Pr(\V)\) is usually estimated from observational data.
We start with the general definition of identifiability (not necessarily for causal effects) from~\citep[Ch.~3.2.4]{pearl00b} with a slight rephrasing. 

\begin{definition}[Identifiability~\citep{pearl00b}]
\label{def:pident}
Let $Q(M)$ be any computable quantity of a model $M$. We say that $Q$ is \underline{identifiable} in a class of models if, for any pairs of models $M_1$ and $M_2$ from this class, $Q(M_1) = Q(M_2)$ whenever $\Pr_{M_1}(\V) = \Pr_{M_2} (\V)$ where \(\V\) are the observed variables.
\end{definition}

In the context of causal effects,
the problem of identifiability is to check whether every pair of fully-specified BNs 
($M_1$ and $M_2$ in Definition~\ref{def:pident}) that induce the same distribution \(\Pr(\V)\) will also produce the same value for the causal effect. 
Note that Definition~\ref{def:pident} does not restrict the considered models $M_1$ and $M_2$ based on properties of the distributions $\Pr_{M_1}(\V)$ and $\Pr_{M_2} (\V).$
However, in the literature on 
identifying causal effects, it is quite common
to only consider BNs (models) that induce distributions which satisfy some positivity constraints, such 
as \(\Pr(\V) > 0\). We will examine such constraints
more carefully in Section~\ref{sec:f-identifiability} as they may contradict functional dependencies.


It is well known that under some positivity constraints (e.g., \(\Pr(\V) > 0\)), the identifiability of causal effects can be 
efficiently tested using what we shall call the \emph{project-ID} algorithm.
Given a causal graph \(G\), project-ID first applies the projection operation in~\citep{Verma93,uai/TianP02b,TianPearl03} to yield a new causal graph \(G'\) whose hidden variables are all roots and each has exactly two children. These properties
are needed by the ID algorithm~\citep{aaai/ShpitserP06}, 
which is
then applied to $G'$ to yield an identifying formula if the causal
effect is identifiable or FAIL otherwise.
Consider the causal effect \(\Pr_{x_1x_2}(y)\) 
in Figure~\ref{sfig:mut-ex1} where the only hidden variable is the non-root variable \(B\).
We first project the causal graph $G$ in Figure~\ref{sfig:mut-ex1} onto its observed variables
to yield the causal graph $G'$ in Figure~\ref{sfig:mut-ex3} (all hidden variables in $G'$ are auxiliary and roots).
We then run the ID algorithm on $G'$ which
returns the following (simplified) identifying formula: \(\Pr_{x_1x_2}(y)=\) \(\sum_{acd}\Pr(c|x_1)\) \(\Pr(d|x_1,x_2)\) \(\sum_{x'_1x'_2}\Pr(y|x'_1,x'_2,a,c,d)\) \(\Pr(x'_2|x'_1,a,c)\) \(\Pr(x'_1,a).\)
Hence, the causal effect $\Pr_{x_1x_2}(y)$ is identifiable and can be computed using the above formula. Moreover, all quantities in the formula can be obtained from the distribution $\Pr(A,C,D,X_1,X_2,Y)$ over observed variables, which can be estimated from observational data.
More details on the projection operation and the ID algorithm can be found in Appendix~\ref{app:proj-id}.

\begin{figure}[tb]
\centering
\begin{subfigure}[b]{0.24\linewidth}
\centering
\begin{tikzpicture}[->=stealth,auto,scale=\dagsize,transform shape]
\node[font=\huge] (A) at (0,-0.2) {$A$};
\node[state,font=\huge] (U) at (0,-1.9) {$B$};
\node[font=\huge] (X1) at (-1.5,-3) {$X_1$};
\node[font=\huge] (B) at (-1.5,-4.5) {$C$};
\node[font=\huge] (X2) at (1.5,-3) {$X_2$};
\node[font=\huge] (C) at (1.5,-4.5) {$D$};
\node[font=\huge] (Y) at (0,-5.5) {$Y$};

\path (A) edge (U);
\path (U) edge (X1);
\path (U) edge (X2);
\path (U) edge (Y);
\path (X1) edge (B);
\path (X1) edge (X2);
\path (X1) edge (C);
\path (B) edge (X2);
\path (B) edge (Y);
\path (X2) edge (C);
\path (C) edge (Y);
\end{tikzpicture}
\caption{causal graph}
\label{sfig:mut-ex1}
\end{subfigure}
\begin{subfigure}[b]{0.24\linewidth}
\centering
\begin{tikzpicture}[->=stealth,auto,scale=\dagsize,transform shape]
\node[font=\huge] (A) at (0,-0.2) {$A$};
\node[state,font=\huge] (U) at (0,-1.9) {$B$};
\node[font=\huge] (X1) at (-1.5,-3) {$X_1$};
\node[font=\huge] (B) at (-1.5,-4.5) {$C$};
\node[font=\huge] (X2) at (1.5,-3) {$X_2$};
\node[font=\huge] (C) at (1.5,-4.5) {$D$};
\node[font=\huge] (Y) at (0,-5.5) {$Y$};

\path (A) edge (U);
\path (U) edge (Y);
\path (X1) edge (B);
\path (X1) edge (C);
\path (B) edge (Y);
\path (X2) edge (C);
\path (C) edge (Y);
\end{tikzpicture}
\caption{mutilated}
\label{sfig:mut-ex2}
\end{subfigure}
\begin{subfigure}[b]{0.24\linewidth}
\centering
\begin{tikzpicture}[->=stealth,auto,scale=\dagsize,transform shape]
\node[font=\huge] (A) at (0,-1) {$A$};
\node[font=\huge] (X1) at (-1.5,-3) {$X_1$};
\node[font=\huge] (B) at (-1.5,-4.5) {$C$};
\node[font=\huge] (X2) at (1.5,-3) {$X_2$};
\node[font=\huge] (C) at (1.5,-4.5) {$D$};
\node[font=\huge] (Y) at (0,-6) {$Y$};

\path (A) edge (X1);
\path (A) edge (X2);
\path (A) edge (Y);
\path (X1) edge (B);
\path (X1) edge (X2);
\path (X1) edge (C);
\path (B) edge (X2);
\path (B) edge (Y);
\path (X2) edge (C);
\path (C) edge (Y);

\path[bidirected] (X1) edge[bend left=30] (X2);
\path[bidirected] (X1) edge[bend right=60] (Y);
\path[bidirected] (X2) edge[bend left=60] (Y);

\end{tikzpicture}
\caption{projected}
\label{sfig:mut-ex3}
\end{subfigure}
\caption{Example adapted from~\citep{kuroki1999identifiability}. Hidden variables are circled.
A bidirected edge \(X \dashleftarrow\dashrightarrow Y\)
is compact notation for $X \leftarrow H \rightarrow Y$
where $H$ is an auxiliary hidden variable.
}
\label{fig:ex-mutilated}
\end{figure}

\section{Constrained and Functional Identifiability}
\label{sec:f-identifiability}
As mentioned earlier, Definition~\ref{def:pident} of identifiability~\citep[Ch.~3.2.4]{pearl00b} does not
restrict the pair of considered models $M_1$ and $M_2.$
However, it is common in the literature on causal-effect 
identifiability to only 
consider BNs with distributions
\(\Pr(\V)\) that satisfy some positivity
constraints.
Strict positivity, \(\Pr(\V) > 0,\) is perhaps the mostly widely used constraint~\citep{aaai/HuangV06,TianPearl03,pearl00b}. That is, in Definition~\ref{def:pident},
we only consider BNs $M_1$ and $M_2$ 
which induce distributions $\Pr_{M_1}$ and $\Pr_{M_2}$ that satisfy $\Pr_{M_1}(\V)>0$
and $\Pr_{M_2}(\V)>0.$
Weaker, and somewhat intricate, positivity constraints were employed by the ID algorithm in~\citep{aaai/ShpitserP06} as
discussed in Appendix~\ref{app:proj-id}, but we will
apply this algorithm only under
strict positivity to keep things simple.
See also~\citep{uai/KivvaMEK22} for a recent discussion of positivity constraints.

Positivity constraints are motivated by two
considerations: technical convenience, and the
fact that most causal effects would be 
unidentifiable without some positivity
constraints (more on this later).
Given the multiplicity of positivity constraints
considered in the literature, and given the subtle interaction between positivity constraints
and functional dependencies (which are the
main focus of this work), we next provide
a systematic treatment of identifiability
under positivity constraints. 

\subsection{Positivity Constraints}

We will first formalize the notion of
a \textit{positivity constraint} and then
define the notion of \textit{constrained identifiability}
which takes a set of positivity constraints
as input (in addition to the causal graph
$G$ and distribution $\Pr(\V)$).

\begin{definition}
\label{def:pos-canonical}
A \underline{positivity constraint} on \(\Pr(\V)\) is an inequality of the form \(\Pr(\S|\Z) > 0\) where \(\S\subseteq \V,\) \(\Z \subseteq \V\) and \(\S \cap \Z = \emptyset\). That is, for all instantiations \(\s, \z\), if \(\Pr(\z) > 0\) then \(\Pr(\s, \z) > 0\).
\end{definition}
When \(\Z = \emptyset\), the positivity constraint is defined on a marginal distribution, \(\Pr(\S) > 0\).
We may impose multiple positivity constraints on a set of variables \(\V\). We will use \(\CC_{\V}\) to denote the set of positivity constraints imposed on \(\Pr(\V)\) and use \(\vars(\CC_\V)\) to denote all the variables mentioned by \(\CC_{\V}.\)
The weakest set of positivity constraints is 
\(\CC_{\V} = \{\}\) (no positivity constraints as in Definition~\ref{def:pident}), and the strongest positivity constraint is \(\CC_{\V} = \{\Pr(\V) > 0\}.\)

We next provide a definition of identifiability for the causal effect of treatments \(\X\) on outcomes \(\Y\) in which positivity constraints are an input to the identifiability problem. We call it \emph{constrained identifiability} in 
contrast to the (unconstrained) identifiability of Definition~\ref{def:pident}. 

\begin{definition}
We call \(\langle G, \V, \CC_\V \rangle\) an \underline{identifiability tuple} where \(G\) is a causal graph (DAG), \(\V\) is its set of observed variables, and \(\CC_\V\) is a set of positivity constraints.
\end{definition}

\begin{definition}[Constrained Identifiability]
\label{def:identifiability}
Let \(\langle G, \V, \CC_\V \rangle\) be an identifiability tuple.
The causal effect of \(\X\) on \(\Y\) is said to be \underline{identifiable} with respect to $\langle G,\V, \CC_{\V}\rangle$ if
\(\Pr^1_{\x}(\y)=\Pr^2_{\x}(\y)\)
for any pair of distributions $\Pr^1$ and $\Pr^2$ which are induced by $G$ and that satisfy \(\Pr^1(\V)=\Pr^2(\V)\) and
that also satisfy the positivity constraints \(\CC_{\V}\).
\end{definition}

For simplicity, we say ``identifiability'' to mean ``constrained identifiability'' in the rest of paper.
We next show that without some positivity
constraints, most causal effects
would not be identifiable. 
We say that a treatment \(X \in \X\) is a \emph{first ancestor} of some outcome \(Y \in \Y\) if \(X\) is an ancestor of \(Y\)
in causal graph $G,$ and there exists a directed path from \(X\) to \(Y\) that is not intercepted by \(\X \setminus \{X\}\).
A first ancestor must exist if some treatment variable is
an ancestor of some outcome variable.

\begin{proposition}
\label{prop:treatment-positivity}
The casual effect of \(\X\) on \(\Y\) is not identifiable wrt an identifiability tuple \(\langle G, \V, \CC_{\V}\rangle \) if some \(X \in \X\) is a first ancestor of some $Y \in \Y$, and \(\CC_{\V}\) does not imply \(\Pr(X) > 0\).
\end{proposition}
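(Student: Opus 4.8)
The plan is to prove the statement directly from Definition~\ref{def:identifiability}: I will exhibit two parameterizations of $G$ that agree on $\Pr(\V)$ and satisfy $\CC_{\V}$, yet assign different values to the causal effect. Because $\CC_{\V}$ does not imply $\Pr(X) > 0$, fix a state $x^*$ of $X$ for which $\CC_{\V}$ is consistent with $\Pr(X = x^*) = 0$, and take the treatment $\x$ to assign $X = x^*$ (other variables of $\X$ arbitrarily). The intuition driving the construction is that $X = x^*$ never occurs in the observational regime, so every mechanism entry conditioned on $X = x^*$ is invisible to $\Pr(\V)$ and to the constraints in $\CC_{\V}$, while being exactly what $do(\x)$ activates.

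Let $X \to V_1 \to \cdots \to V_k = Y$ be a directed path witnessing that $X$ is a first ancestor of $Y$, so none of $V_1, \dots, V_{k-1}$ lies in $\X$. I would take a base BN $M_1 = \langle G, \FF_1 \rangle$ with $f^1_X(x^* \mid \cdot) = 0$, which makes $X = x^*$ a zero-probability event in the full joint, and with every remaining entry strictly positive (the path mechanisms as specified below). Such an $M_1$ satisfies $\CC_{\V}$: a positivity constraint $\Pr(\S \mid \Z) > 0$ can be violated by this model only through an instantiation that sets $X = x^*$, and if that instantiation lies in $\Z$ the constraint is vacuous (its conditioning event has probability $0$), whereas if it lies in $\S$ the constraint would force $\Pr(X) > 0$, contradicting the hypothesis. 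I then define $M_2$ to agree with $M_1$ everywhere except on the entries of $V_1$'s CPT conditioned on $X = x^*$, the only entries in the network that mention $X = x^*$ as a parent value. Since these entries occur only in terms of $\Pr(\V) = \sum_{\U} \prod_V f_V$ that are killed by $f_X(x^* \mid \cdot) = 0$, we get $\Pr^1(\V) = \Pr^2(\V)$, and $M_2$ inherits $\CC_{\V}$ from $M_1$.

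It remains to separate the causal effects, which is where I expect the real work. Under $do(\x)$ the mutilated graph fixes $X = x^*$, and the first-ancestor condition guarantees the path $X \to V_1 \to \cdots \to Y$ is left intact, so the now-active $x^*$-entries of $f_{V_1}$ genuinely drive $V_1$ and hence $Y$. As these entries are unconstrained, I would set them to distinct point masses in $M_1$ and $M_2$ (e.g.\ $V_1 = a$ versus $V_1 = b$, $a \neq b$), and I would choose the base model's downstream path mechanisms $f_{V_2}, \dots, f_{V_k}$ to be near-deterministic channels that copy their path-parent, so a difference at $V_1$ propagates, surviving marginalization over the other parents, to a difference $\Pr^1_{\x}(\y) \neq \Pr^2_{\x}(\y)$ at some $\y$. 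The crux is exactly this propagation: the downstream CPTs do affect $\Pr(\V)$, so under positivity they cannot be made exactly deterministic; I therefore rely on noisy-copy channels, using the freedom that we are building $\Pr(\V)$ rather than matching a prescribed one, so that the attenuated-but-nonzero correlation along a finite path still transmits the perturbation. A degeneracy caveat, dispatched in passing, is that $Y$ and the path variables must have at least two states, which holds whenever $\Pr_{\x}(\Y)$ is non-trivial.
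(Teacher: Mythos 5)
Your proposal is correct and follows essentially the same route as the paper's proof: both zero out the treated state \(x^*\) of \(X\), let the two models differ only in the CPT entries of the first path variable conditioned on \(X = x^*\) (which are invisible to \(\Pr(\V)\) and to \(\CC_\V\) but activated by \(do(\x)\)), verify the positivity constraints by the same case analysis on whether \(X\) appears in \(\S\), in \(\Z\), or in neither, and propagate the difference to \(Y\) along the first-ancestor path via \(\epsilon\)-noisy near-deterministic channels. The only differences are cosmetic (point masses versus \(1-\epsilon\) assignments at the first path variable), and your explicit flagging of the propagation step and the nondegeneracy caveat is, if anything, slightly more careful than the paper's ``by construction'' conclusion.
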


\begin{wrapfigure}[5]{r}{0.15\linewidth}
\centering
\vspace{-7mm}
\begin{tikzpicture}[->=stealth,auto,scale=\dagsize,transform shape]
\node[state,font=\huge] (U) at (0,0) {$U$};
\node[font=\huge] (X1) at (-1.5,-1.5) {$X_1$};
\node[font=\huge] (X2) at (-1.5,-3) {$X_2$};
\node[font=\huge] (Y2) at (0,-4.5) {$Y_2$};
\node[font=\huge] (A) at (1.5,-1.5) {$A$};
\node[font=\huge] (Y1) at (1.5,-3) {$Y_1$};

\path (U) edge (X1);
\path (U) edge (A);
\path (X1) edge (X2);
\path (X2) edge (Y2);
\path (U) edge (Y2);
\path (A) edge (Y1);
\path (Y1) edge (Y2);
\end{tikzpicture}
\end{wrapfigure}
Hence, identifiability is not possible without some positivity constraints if at least one treatment variable is an ancestor of some outcome variable (which is common). Consider the causal graph on the right where \(U\) is the only hidden variable. By Proposition~\ref{prop:treatment-positivity},
the causal effect of \(\{X_1, X_2\}\) on \(\{Y_1, Y_2\}\)
is not identifiable if the considered distributions do not satisfy
\(\Pr(X_2) > 0\) as $X_2$ is a first ancestor of $Y_2$.

As positivity constraints become stronger, more causal effects
become identifiable since the set of
\begin{wrapfigure}[3]{r}{0.15\linewidth}
\centering
\vspace{-3mm}
\begin{tikzpicture}[->=stealth,auto,scale=\dagsize,transform shape]
\node[font=\huge] (Z) at (0,0) {$Z$};
\node[font=\huge] (X) at (-1.5,-1.5) {$X$};
\node[font=\huge] (Y) at (1.5,-1.5) {$Y$};

\path (Z) edge (X);
\path (Z) edge (Y);
\path (X) edge (Y);
\end{tikzpicture}
\end{wrapfigure}
considered models
becomes smaller. Consider the causal graph on the right in which all variables are observed, $\V = \{X,Y,Z\}$.
Without positivity constraints, $\CC_\V = \emptyset$, the causal effect 
of $X$ on $Y$ is not identifiable. However, it becomes identifiable given strict positivity,
$\CC_\V = \{\Pr(X,Y,Z)>0\},$ leading to the identifying formula
$\Pr_x(y) = \sum_z \Pr(y|x,z)\Pr(z).$ This causal effect
is also identifiable under the weaker positivity constraint
$\CC_\V = \{\Pr(X|Z) > 0\},$ which implies $\Pr(X)>0,$ so strict positivity is not necessary 
for identifiability even though it is typically assumed for
this folklore result. This is an example where strict positivity
may be assumed for technical convenience only as it may facilitate the application of some
identifiability techniques like the do-calculus~\citep{pearl00b}.

\subsection{Functional Dependencies}

A variable \(X\) in a causal graph is said to 
\textit{functionally depend} on its
parents $\P$ if its distribution is deterministic:
\(\Pr(x|\p) \in \{0, 1\}\) for every instantiation 
\(x, \p\). Variable $X$ is also said to be 
\emph{functional} in this case. 
In this work, we assume \textit{qualitative} functional dependencies: \textit{we do not know the distribution $\Pr(X|\P),$ 
we only know that it is deterministic.}\footnote{We assume 
that root variables cannot be functional
as such variables can be removed from the causal graph.}

\begin{wrapfigure}[5]{r}{0.35\linewidth}
\centering
\vspace{-8mm}
\begin{minipage}{0.99\linewidth}
\centering
\begin{table}[H]
\centering
\scriptsize
\begin{tabular}{|c||c|c||c|c|}
\hline
\(A\) & \(B\) & $C$ & \(\Pr(B|A)\) & \(\Pr(C|A)\)  \\
\hline
0 & 0 & 0 & 0.2 & 0\\
0 & 1 & 1 & 0.8 & 1\\ \hline
1 & 0 & 0 & 0.6 & 1\\
1 & 1 & 1 & 0.4 & 0\\
\hline
\end{tabular}
\end{table}
\end{minipage}
\end{wrapfigure}
The table on the right
shows two variables $B$ and $C$ that both have $A$ as
their parent. Variable $C$ is functional but variable $B$
is not. The CPT for variable $C$ will be called a \textit{functional
CPT} in this case.
Functional CPTs are also known as (causal) mechanisms and are expressed using structural equations in structural causal models (SCMs)~\citep{uai/BalkeP95,galles1998axiomatic,halpern2000axiomatizing}. 
By definition, in an SCM, every non-root variable is assumed to be functional (when
noise variables are represented explicitly
in the causal graph).

Qualitative functional dependencies are a longstanding concept.
For example, they are common in relational databases, see, e.g.,~\citep{database/Halpin,database/Date}, and their 
relevance to probabilistic reasoning had been brought up early in~\citep[Ch.~3]{Pearl88b}. 
One example of a (qualitative) functional dependency is that different countries have different driving ages, so we know that “Driving Age” functionally depends on “Country” even though we may not know the specific driving age for each country. 
Another example is that a letter grade for a class is functionally
dependent on the student's weighted average even though we may
not know the scheme for converting a weighted average to a letter
grade.

In this work, we assume that we are given a causal graph $G$ in which some variables $\W$ have been designated as functional. The presence of functional variables further restricts the set of distributions \(\Pr\) that we consider when checking identifiability. This leads to a more refined problem that we call \textit{functional identifiability (F-identifiability)}, which depends on four elements.

\begin{definition}
We call \(\langle G,\V, \CC_{\V},\W\rangle\) an \underline{F-identifiability tuple} when
\(G\) is a DAG, \(\V\) is its set of observed variables, \(\CC_{\V}\) is a set of positivity constraints, and \(\W\) is a set of functional variables in $G.$
\end{definition}

\begin{definition}[F-Identifiability]
\label{def:identifiability-func}
Let \(\langle G,\V, \CC_{\V},\W\rangle\) be an F-identifiability tuple.
The causal effect of \(\X\) on $\Y$ is \underline{F-identifiable} wrt $\langle G,\V, \CC_{\V},\W\rangle$ if
\(\Pr^1_{\x}(\y)=\Pr^2_{\x}(\y)\)
for any pair of distributions $\Pr^1$ and $\Pr^2$ which are induced by $G$, and that
satisfy \(\Pr^1(\V)=\Pr^2(\V)\) and
the positivity constraints \(\CC_{\V}\),
and in which variables $\W$ functionally
depend on their parents.
\end{definition}

Both $\CC_\V$ and $\W$
represent constraints on the models (BNs) we consider when checking identifiability, and these two types of constraints may contradict each other. We next define two notions that characterize some important interactions between positivity constraints and functional variables.

\begin{definition}
\label{def:pos-consistent-separable}
Let \(\langle G,\V, \CC_{\V},\W\rangle\) be an F-identifiability tuple.
Then $\CC_\V$ and $\W$ are \underline{consistent} if there exists a parameterization for \(G\) which induces a distribution satisfying \(\CC_{\V}\) and in which \(\W\) functionally depend on their parents.
Moreover, $\CC_\V$ and $\W$ are \underline{separable}
if $\W \cap \vars(\CC_\V) = \emptyset.$
\end{definition}
If $\CC_\V$ is inconsistent with $\W$ then the set of distributions $\Pr$ 
considered in Definition~\ref{def:identifiability-func}
is empty and, hence, the causal
effect is not well defined (and
trivially identifiable according to Definition~\ref{def:identifiability-func}).
As such, one would usually want to ensure such consistency. 
Here are some examples of positivity constraints that are always consistent with a set of functional variables $\W$: positivity on each treatment variable, i.e., \(\{\Pr(X) > 0, X \in \X\},\)
positivity on the set of non-functional treatments, i.e., \(\{\Pr(\X \setminus \W) > 0\},\) positivity on all non-functional variables, i.e., \(\{\Pr(\V \setminus \W) > 0\}.\) 
All these examples are special cases of the following condition.
For a functional variable \(W \in \W\), let \(\H_W\) be variables that intercept
all directed paths from non-functional variables to \(W\) (such \(\H_W\) may not be unique).
If every positivity constraint in \(\CC_\V\) will never mention \(\H_W\) if it mentions $W,$ then \(\CC_\V\) and $\W$ are guaranteed to be consistent (see Proposition~\ref{prop:consistency-characterization} in Appendix~\ref{app:proofs}). 

Separability is a stronger condition
and it intuitively implies that the positivity constraints will not rule out any possible functions for the variables in $\W.$ We need such a condition for one of the results we present later. Some examples of positivity constraints that are separable from $\W$ are \(\{\Pr(\X \setminus \W) > 0\}\) and \(\{\Pr(\V \setminus \W) > 0\}.\) Studying the interactions between positivity constraints and functional variables, as we did in this section, will prove helpful later when utilizing existing identifiability algorithms (which require positivity constraints) for testing functional identifiability.

\section{Functional Elimination and Projection}
\label{sec:felim}

Our approach for testing identifiability under functional
dependencies will be based on eliminating functional variables
from the causal graph, which may be followed by invoking 
the project-ID algorithm on the resulting graph.
This can be subtle though since the described process will not work
for every functional variable as we discuss in the next section.
Moreover, one needs to handle the
interaction between positivity constraints and functional
variables carefully. The first step though is to formalize the 
process of eliminating a functional variable and to study the
associated guarantees.

Eliminating variables from a probabilistic model is a well studied operation, also known as marginalization; see, e.g.,~\citep{jair/ZhangP96,dechterUAI96,DarwicheBook09}.
When eliminating variable $X$
from a model that represents distribution $\Pr(\Z)$, the goal is to obtain a
model that represents the marginal distribution $\Pr(\Y) = \sum_x \Pr(x,\Y)$ where \(\Y = \Z \setminus \{X\}.\) Elimination can also be applied to a DAG $G$ that represents conditional independencies $\IND$, leading to a new DAG $G'$ that represents independencies $\IND'$ that are implied by $\IND$. In fact, the projection operation of~\citep{Verma93,uai/TianP02b} we discussed earlier can be understood in these terms. We next propose an operation that eliminates functional variables from a DAG and that comes with stronger guarantees compared to earlier elimination operations as far as preserving independencies. 

\begin{definition}
\label{def:felim}
The \underline{functional elimination} of 
a variable $X$ from a DAG \(G\) yields a new DAG attained
by adding an edge from each parent of \(X\) to each child of \(X\) and then removing \(X\) from \(G\).\footnote{Appendix~\ref{app:felim-bn} extends this definition to Bayesian networks (i.e., updating both CPTs and causal graph).}
\end{definition}

For convenience, we sometimes say ``elimination'' to mean ``functional elimination'' when the context is clear.
From the viewpoint of independence relations, functional elimination is not sound if the eliminated variable is not functional. In particular, the DAG $G'$ that results from this elimination process may satisfy independencies (identified by d-separation) that do not hold in the original DAG $G$. As we show later, however, every independence implied by $G'$ must be implied by $G$ if the eliminated variable is functional. In the context of SCMs, functional elimination may be interpreted as replacing the eliminated variable \(X\) by its function in all structural equations that contain \(X\). Functional elimination applies in broader contexts than SCMs though. 
Eliminating multiple functional variables in any order yields
the same DAG (see Proposition~\ref{thm:felim-bn-order} in Appendix~\ref{app:felim-bn}). For example, eliminating variables \(\{C, D\}\) from the DAG in Figure~\ref{sfig:proj1} yields the DAG in Figure~\ref{sfig:proj3} whether we use the order $\pi_1= C,D$ or the order $\pi_2 = D, C$.

Functional elimination preserves independencies that hold in the original DAG and which are not preserved by other elimination methods including projection as defined in~\citep{Verma93,uai/TianP02b}. These independencies are captured using the notion of
D-separation~\citep{uai/GeigerP88,networks/GeigerVP90} which is more refined than the classical notion of d-separation~\citep{ai/Pearl86,uai/VermaP88} (uppercase D- versus lowercase d-).
The original definition of D-separation can be found in~\citep{networks/GeigerVP90}.
We provide a simpler definition next, stated as Proposition~\ref{thm:D-separation}, as the equivalence between the two definitions is not immediate.

\begin{proposition}
\label{thm:D-separation}
Let \(\X, \Y, \Z\) be disjoint variable sets and \(\W\) be a set of
functional variables in DAG $G$. 
Then \(\X\) and \(\Y\) are D-separated by \(\Z\) in \(\langle G,\W\rangle\) iff
\(\X\) and \(\Y\) are d-separated by \(\Z'\) 
in $G$ where \(\Z'\) is obtained as follows. Initially, \(\Z' = \Z\).
Repeat the next step until \(\Z'\) stops changing: add to \(\Z'\) every variable in $\W$ whose parents are in \(\Z'\).
\end{proposition}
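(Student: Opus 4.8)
The plan is to reduce the claimed equivalence to a comparison that proceeds one path at a time. Both sides are defined by requiring that \emph{every} path between \(\X\) and \(\Y\) be blocked, and neither the underlying DAG nor its descendant relation is altered by forming \(\Z'\); hence it suffices to fix an arbitrary path \(p\) from a node of \(\X\) to a node of \(\Y\) and prove that \(p\) is D-blocked by \(\Z\) in \(\langle G,\W\rangle\) (in the sense of~\citep{networks/GeigerVP90}) if and only if \(p\) is d-blocked by \(\Z'\) in \(G\). The first ingredient is to unwind the auxiliary notion underlying the original definition, namely that a node \(v\) is \emph{functionally determined} by \(\Z\) when \(v\in\Z\), or \(v\in\W\) and all parents of \(v\) are functionally determined by \(\Z\). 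A straightforward induction matching the depth of this recursion to the number of rounds of the closure loop shows that the set \(\Z'\) in the statement is exactly the collection of nodes functionally determined by \(\Z\); that is, \(\Z'\) is the least fixpoint of this recursion. This identification lets me translate ``functionally determined by \(\Z\)'' into ``membership in \(\Z'\)'' throughout the argument.

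With this translation in hand, the non-collider case is immediate: a non-collider \(w\) on \(p\) blocks \(p\) under the original criterion exactly when \(w\) is functionally determined by \(\Z\), which now reads \(w\in\Z'\), and this is precisely the condition under which \(w\) blocks \(p\) as a non-collider relative to \(\Z'\). The only remaining nodes to handle are colliders, where the two criteria are phrased differently: the original definition activates a collider \(w\) when \(w\) or some descendant of \(w\) lies in \(\Z\), whereas d-separation relative to \(\Z'\) activates \(w\) when \(w\) or some descendant lies in \(\Z'\).

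I expect the collider case to be the crux. The two activation conditions can disagree only when \(w\) has a descendant \(d\in\Z'\setminus\Z\), which forces \(d\) to be functional and functionally determined by \(\Z\). I would trace this determination back along a directed path \(w\rightarrow\cdots\rightarrow d\): walking from \(d\) toward \(w\), each visited node lies in \(\Z'\), and is either already in \(\Z\)---exhibiting a descendant of \(w\) (or \(w\) itself) in \(\Z\), so that \(w\) is active under the original rule too---or is functional with all parents in \(\Z'\), which lets me step one edge closer to \(w\). The walk therefore either meets a \(\Z\)-node or reaches \(w\) with \(w\in\Z'\); in the remaining case \(w\) is functional and both of its \(p\)-neighbors, being parents of \(w\), lie in \(\Z'\), so \(p\) is already blocked at one of these non-colliders and the collider's status is irrelevant. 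Thus on every path the two criteria reach the same verdict. A small boundary check is needed when such a blocking parent coincides with an endpoint of \(p\) (an element of \(\X\) or \(\Y\) that is itself functionally determined by \(\Z\)); disjointness of \(\X,\Y,\Z\) confines this to the degenerate situation where \(\X\) and \(\Y\) are separated trivially on both sides.

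Aggregating the per-path equivalence over all paths between \(\X\) and \(\Y\) then yields the proposition: every path is D-blocked by \(\Z\) if and only if every path is d-blocked by \(\Z'\). The only delicate point is the collider reconciliation above; everything else is bookkeeping once ``functionally determined by \(\Z\)'' has been identified with ``membership in \(\Z'\)''.
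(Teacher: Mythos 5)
Your proof is correct and takes essentially the same route as the paper's: a per-path, per-valve comparison in which \(\Z'\) is identified with the set of nodes functionally determined by \(\Z\), non-colliders translate immediately through that identification, and a collider activated by \(\Z'\) but not by \(\Z\) is traced back along the determining directed path to show it is functional with all parents in \(\Z'\), so the path is blocked at its non-collider neighbors. Your explicit walk argument fills in a step the paper merely asserts (``the valve is still closed when conditioned on \(\Z'\) unless the parents of \(W\) are in \(\Z'\)''), and the endpoint boundary case you flag is likewise glossed over in the paper's own proof.
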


To illustrate the difference between d-separation and D-separation,
consider again the DAG in Figure~\ref{sfig:proj1} and assume that variables \(C, D\) are functional. 
Variable \(G\) and \(I\) are not d-separated by \(A\) but they are D-separated by \(A\). That is, there are distributions which are induced by the DAG in Figure~\ref{sfig:proj1} and in which 
\(G\) and \(I\) are not independent given \(A\).
However, \(G\) and \(I\) are independent given \(A\) in every induced distribution in which the variables $C, D$ are functionally determined by their parents. Functional elimination preserves D-separation in the following sense.

\begin{theorem}
\label{thm:felim-Dsep}
Consider a DAG \(G\) with functional variables \(\W\).
Let $G'$ be the result of functionally eliminating 
variables \(\W' \subseteq \W\) from \(G\).
For any disjoint sets $\X$, $\Y$, $\Z$ in $G'$,
$\X$ and $\Y$ are D-separated by \(\Z\) in $\tuple{G}{\W}$ iff
$\X$ and $\Y$ are D-separated by \(\Z\) in $\tuple{G'}{\W \setminus \W'}$.
\end{theorem}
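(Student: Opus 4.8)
The plan is to leverage Proposition~\ref{thm:D-separation}, which reduces D-separation in a functional-variable setting to ordinary d-separation in the original DAG after augmenting the conditioning set by a closure operation. The claim is an ``iff'' about D-separation being preserved by functional elimination of a subset $\W' \subseteq \W$, so my strategy is to translate \emph{both} sides of the biconditional into statements about d-separation (via Proposition~\ref{thm:D-separation}) and then show these two d-separation statements coincide. Concretely, let $\Z^*$ denote the closure of $\Z$ in $\tuple{G}{\W}$ (repeatedly adding every functional variable in $\W$ whose parents are already in the set), and let $\bar{\Z}$ denote the closure of $\Z$ in $\tuple{G'}{\W \setminus \W'}$. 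Then the left-hand side says $\X,\Y$ are d-separated by $\Z^*$ in $G$, and the right-hand side says $\X,\Y$ are d-separated by $\bar{\Z}$ in $G'$. So it suffices to relate d-separation-by-$\Z^*$ in $G$ to d-separation-by-$\bar{\Z}$ in $G'$.

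First I would establish the purely structural heart of the argument: functional elimination of a single variable preserves d-separation \emph{once the eliminated variable has been added to the conditioning set}. That is, if $X \in \W'$ is eliminated to produce $G''$, then for disjoint sets $\A,\B$ and a conditioning set $\S$ with $X \in \S$, we have $\A \indep \B \mid \S$ in $G$ iff $\A \indep \B \mid \S \setminus \{X\}$ in $G''$. The intuition is that conditioning on a variable whose only role (after the parent-to-child edges are added) has been absorbed into the new edges leaves all active/blocked path status unchanged: any path through $X$ in $G$ (entering via a parent and leaving via a child, with $X$ a non-collider since $X$ is instantiated) corresponds exactly to a direct parent-to-child edge in $G''$, and collider paths through $X$ are blocked on both sides because $X$ (and hence, by the closure, any descendant issue) is conditioned on. I would prove this single-variable statement carefully by case analysis on whether $X$ sits on a path as a collider or non-collider.

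Next I would bridge the closure operation to this single-variable elimination lemma. The key observation is that because the variables in $\W'$ are functional, the closure $\Z^*$ in $G$ automatically contains all of $\W'$ that are ``reachable'' — more precisely, I would argue that d-separation by $\Z^*$ in $G$ equals d-separation by $\Z^* \setminus \W'$ in $G'$ (the fully eliminated graph), by peeling off one eliminated variable at a time using the single-variable lemma and checking that the closure membership is consistent under elimination. The remaining task is then to verify that $\Z^* \setminus \W'$ and $\bar{\Z}$ induce the same d-separations in $G'$; here I would show that the closures agree in the sense that matters — namely that the closure of $\Z$ under the \emph{remaining} functional variables $\W \setminus \W'$ in $G'$, together with the already-absorbed eliminated variables, yields exactly the same blocking behavior. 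This uses that eliminating a functional variable's parents-to-children edges does not change which remaining functional variables become determined once $\Z$ is fixed.

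\textbf{The hard part} will be managing the interaction between the closure operation (which depends on the current graph's parent relation) and the edge additions introduced by elimination, since eliminating $X$ changes the parent sets of $X$'s former children and this in turn can change \emph{which} functional variables get pulled into the closure. I expect the main obstacle is proving that the two closure processes — closing under $\W$ in $G$ versus closing under $\W \setminus \W'$ in $G'$ — produce conditioning sets with identical d-separation consequences, and that no spurious activations or blockages are created by the new edges. To control this I would induct on $|\W'|$, reducing to the single-variable case, and at each step carefully track a correspondence between paths in $G$ and paths in $G''$ that preserves collider/non-collider status relative to the (appropriately closed) conditioning set. Verifying that this path correspondence is a bijection on \emph{active} paths, and that the closure is invariant under the correspondence, is the technical crux; everything else reduces to bookkeeping via Proposition~\ref{thm:D-separation}.
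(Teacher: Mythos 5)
Your overall strategy (translate both sides to d-separation via Proposition~\ref{thm:D-separation} and compare closures) is a genuinely different route from the paper's proof, which is semantic rather than graphical: the paper invokes the completeness of D-separation to obtain a witnessing parameterization whenever D-separation fails, pushes that parameterization through functional elimination of the BN (which preserves marginals, Theorem~\ref{thm:felim-bn-sound}), and, for the converse direction, lifts a parameterization of \(G'\) back to \(G\) by giving the eliminated variable one state per parent instantiation. Your graphical route could in principle work, but as written it has two genuine gaps. First, your single-variable lemma is false as stated. Take \(G: P \rightarrow X \rightarrow C\) with \(X\) functional, and let the conditioning set be \(\S = \{X\}\) (so \(X \in \S\)), with \(\{P\}\) and \(\{C\}\) the sets to be separated. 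In \(G\), \(P\) and \(C\) are d-separated by \(\{X\}\); but eliminating \(X\) yields the edge \(P \rightarrow C\), and \(P\), \(C\) are not d-separated by \(\S \setminus \{X\} = \emptyset\) in \(G''\). The lemma needs the additional precondition that the \emph{parents} of \(X\) lie in \(\S\) as well --- then every path in \(G''\) that uses a new edge \(P \rightarrow C\) is blocked at the non-collider \(P\). This precondition is exactly what the closure supplies (an eliminated variable can enter \(\Z^*\) only via the closure rule, never via membership in \(\Z\), since \(\Z\) consists of variables of \(G'\)), but you never state or use it, and your own intuition sentence --- that a blocked non-collider path \(P \rightarrow X \rightarrow C\) ``corresponds exactly to a direct parent-to-child edge'' --- describes precisely the configuration that breaks the unqualified lemma.

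Second, and more seriously, your peeling reduction only applies to eliminated variables that lie in the closure \(\Z^*\): if \(T \in \W'\) but \(T \notin \Z^*\), then \(\Z^* \setminus \{T\} = \Z^*\), and a lemma whose hypothesis is ``\(T\) is in the conditioning set'' says nothing about this step. Yet this uncovered case is where functionality actually does the work. For such \(T\) you must show that every active path through \(T\) in \(G\) has an active counterpart in \(G'\); the dangerous configuration is a divergent valve \(C_1 \leftarrow T \rightarrow C_2\) with \(T \notin \Z^*\), which can be rerouted as \(C_1 \leftarrow P \rightarrow C_2\) through a parent \(P\) of \(T\) only if some parent of \(T\) lies outside \(\Z^*\) --- and the existence of such a parent follows from \(T \notin \Z^*\) precisely because \(T\) is functional (otherwise the closure would have added \(T\)). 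If your argument never passes through this step, it would apply verbatim to non-functional variables, contradicting the paper's own observation that elimination of non-functional variables does not preserve (d-)separation. The path correspondence you defer to the last paragraph as ``bookkeeping'' is in fact the mathematical core of this approach; the proof seems fixable along your lines, but both of these cases must be made explicit.
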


We now define the operation of functional projection which augments the original projection operation in~\citep{Verma93,uai/TianP02b} in the presence of functional dependencies.

\begin{definition}
\label{def:f-projection}
Let $G$ be a DAG, $\V$ be its observed variables, and $\W$ be its hidden functional variables ($\W\cap\V=\emptyset$).
The \underline{functional projection} of $G$ on $\V$ is a DAG obtained by functionally eliminating variables $\W$ from $G$ then projecting the resulting DAG on variables $\V$.
\end{definition}

\begin{figure}[tb]
\centering
\begin{subfigure}[b]{0.24\linewidth}
\centering
\begin{tikzpicture}[->=stealth,auto,scale=\dagsize,transform shape]
\node[font=\huge] (A) at (-1.5,-1) {$A$};
\node[state,font=\huge] (C) at (0,-2) {$C$};
\node[state,font=\huge] (D) at (0,-3.8) {$D$};
\node[state,font=\huge] (E) at (-1.5,-5) {$E$};
\node[state,font=\huge] (F) at (1.5,-5) {$F$};
\node[font=\huge] (B) at (-2,-3.5) {$B$};
\node[font=\huge] (G) at (-2.5,-6.5) {$G$};
\node[font=\huge] (H) at (0,-6.5) {$H$};
\node[font=\huge] (I) at (1.5,-6.5) {$I$};

\path (A) edge (C);
\path (C) edge (D);
\path (B) edge (E);
\path (D) edge (E);
\path (D) edge (F);
\path (E) edge (G);
\path (E) edge (H);
\path (F) edge (I);
\end{tikzpicture}
\caption{DAG}
\label{sfig:proj1}
\end{subfigure}
\begin{subfigure}[b]{0.24\linewidth}
\centering
\begin{tikzpicture}[->=stealth,auto,scale=\dagsize,transform shape]
\node[font=\huge] (A) at (0,-4) {$A$};
\node[font=\huge] (B) at (-2,-4) {$B$};
\node[font=\huge] (G) at (-2,-6) {$G$};
\node[font=\huge] (H) at (0,-6) {$H$};
\node[font=\huge] (I) at (2,-6) {$I$};

\path (A) edge (G);
\path (A) edge (H);
\path (A) edge (I);
\path (B) edge (G);
\path (B) edge (H);
\path[bidirected] (G) edge[bend right=60] (H);
\path[bidirected] (G) edge[bend right=60] (I);
\path[bidirected] (H) edge[bend right=60] (I);

\end{tikzpicture}
\caption{proj.~(a) on $A$, $B$, $G$, $H$, $I$}
\label{sfig:proj2}
\end{subfigure}
\begin{subfigure}[b]{0.24\linewidth}
\centering
\begin{tikzpicture}[->=stealth,auto,scale=\dagsize,transform shape]
\node[font=\huge] (A) at (0,-2.8) {$A$};
\node[state,font=\huge] (E) at (-1,-4.5) {$E$};
\node[state,font=\huge] (F) at (1,-4.5) {$F$};
\node[font=\huge] (B) at (-1.5,-2.8) {$B$};
\node[font=\huge] (G) at (-2,-6) {$G$};
\node[font=\huge] (H) at (0,-6) {$H$};
\node[font=\huge] (I) at (1,-6) {$I$};

\path (B) edge (E);
\path (A) edge (E);
\path (A) edge (F);
\path (E) edge (G);
\path (E) edge (H);
\path (F) edge (I);
\end{tikzpicture}
\caption{eliminate~$C, D$ from (a)}
\label{sfig:proj3}
\end{subfigure}
\begin{subfigure}[b]{0.24\linewidth}
\centering
\begin{tikzpicture}[->=stealth,auto,scale=\dagsize,transform shape]
\node[font=\huge] (A) at (0,-4) {$A$};
\node[font=\huge] (B) at (-2,-4) {$B$};
\node[font=\huge] (G) at (-2,-6) {$G$};
\node[font=\huge] (H) at (0,-6) {$H$};
\node[font=\huge] (I) at (2,-6) {$I$};

\path (A) edge (G);
\path (A) edge (H);
\path (A) edge (I);
\path (B) edge (G);
\path (B) edge (H);
\path[bidirected] (G) edge[bend right=60] (H);

\end{tikzpicture}
\caption{proj.~(c) on $A$, $B$, $G$, $H$, $I$}\label{sfig:proj4}
\end{subfigure}
\caption{Contrasting projection with functional projection. 
\(C,D\) are functional. Hidden variables are circled.}
\label{fig:ex-proj}
\end{figure}

We will now contrast functional projection and classical projection using the causal graph in Figure~\ref{sfig:proj1}, assuming that the observed variables are $\V=\{A,B,G,H,I\}$ and the functional variables are $\W=\{C,D\}.$
Applying classical projection to this causal 
graph yields the causal graph in 
Figure~\ref{sfig:proj2}.
To apply functional projection, we first functionally eliminate $C,D$ from Figure~\ref{sfig:proj1}, which yields
Figure~\ref{sfig:proj3}, then
project Figure~\ref{sfig:proj3} on variables $\V$ which yields 
the causal graph in Figure~\ref{sfig:proj4}.
So we now need to contrast Figure~\ref{sfig:proj2} (classical projection) with Figure~\ref{sfig:proj4} (functional projection).
The latter is a strict subset of the former as it is missing two bidrected edges. 
One implication of this is that 
variables \(G\) and \(I\) are not
d-separated by $A$ in Figure~\ref{sfig:proj2} because they are not d-separated in Figure~\ref{sfig:proj1}.
However, they are D-separated in Figure~\ref{sfig:proj1} and hence they are d-separated in Figure~\ref{sfig:proj4}. So functional projection yielded a DAG that exhibits more independencies. Again, 
this is because \(G\) and \(I\) are 
D-separated by $A$ in the original DAG, a fact that is not visible to projection but is visible to (and exploitable by) functional projection.

An important corollary of functional projection is the following. Suppose all functional variables are hidden, then two observed variables are \emph{D-separated} in the causal graph \(G\) iff they are \emph{d-separated} in the functional-projected graph \(G'\). This shows that such D-separations in $G$ appear as classical d-separations in $G'$ which allows us to feed $G'$ into existing identifiability algorithms as we show later. This is a key enabler of some results we shall present next on testing functional identifiability.

\section{Causal Identification with Functional Dependencies}
\label{sec:causal-id}

\begin{wrapfigure}[5]{r}{0.28\linewidth}
\centering
\vspace{-11mm}
\begin{subfigure}[b]{0.48\linewidth}
\centering
\begin{tikzpicture}[->=stealth,auto,scale=\dagsize,transform shape]
\node[font=\huge] (A) at (0,0) {$A$};
\node[state,font=\huge] (U) at (0,-1.7) {$B$};
\node[font=\huge] (X) at (-1.5,-3) {$X$};
\node[font=\huge] (Y) at (1.5,-3) {$Y$};

\path (A) edge (U);
\path (U) edge (X);
\path (U) edge (Y);
\path (X) edge (Y);
\end{tikzpicture}
\caption{DAG}
\label{sfig:funcid1}
\end{subfigure}
\begin{subfigure}[b]{0.48\linewidth}
\centering
\begin{tikzpicture}[->=stealth,auto,scale=\dagsize,transform shape]
\node[font=\huge] (A) at (0,-1.5) {$A$};
\node[font=\huge] (X) at (-1.5,-3) {$X$};
\node[font=\huge] (Y) at (1.5,-3) {$Y$};

\path (A) edge (X);
\path (A) edge (Y);
\path (X) edge (Y);
\path[bidirected] (X) edge[bend right=60] (Y);
\end{tikzpicture}
\caption{projection}
\label{sfig:funcid2}
\end{subfigure}
\caption{\(B\) is functional.}
\label{fig:ex-funcid}
\end{wrapfigure}

Consider the causal graph $G$ in Figure~\ref{sfig:funcid1} and let \(\V = \{A, X, Y\}\) be its observed variables. According to Definition~\ref{def:identifiability} of identifiability, the causal effect of \(X\) on \(Y\) is not identifiable with respect to $\langle G, \V, \CC_\V \rangle$ where \(\CC_\V = \{\Pr(A,X,Y) > 0\}\). We can show this by projecting the causal graph $G$ on the observed variables $\V$, which yields the causal graph $G'$ in Figure~\ref{sfig:funcid2}, then applying the ID algorithm to $G'$ which returns FAIL.
Suppose now that the hidden variable \(B\) is known to be functional.
According to Definition~\ref{def:identifiability-func} of F-identifiability,
this additional knowledge reduces the number of considered models so 
it actually renders
the causal effect identifiable --- the identifying formula is
\(\Pr_x(y) = \sum_a\)\(\Pr(a)\Pr(y|a,x)\) as we show later. 

Hence, an unidentifiable causal effect became identifiable in light of knowledge that some variable is functional even without knowing the structural equations for this variable.

The question now is: How do we algorithmically test F-identifiablity? 
We will propose two techniques for this purpose, the first of which 
is geared towards
exploiting existing algorithms for classical identifiability. 
This technique is based on
eliminating functional variables from the causal graph while preserving F-identifiability,
with the goal of getting to a point where F-identifiability
becomes equivalent to classical identifiability. If we reach this
point, we can use existing algorithms for identifiability,
like the ID algorithm, to test F-identifiability. 
This can be subtle though since hidden functional variables behave differently from observed ones. We start with the following result.

\begin{theorem}
\label{thm:func-unobs-obs}
Let \(\langle G,\V, \CC_{\V},\W\rangle\) be an F-identifiability tuple.
If \(G'\) is the result of functionally 
eliminating the hidden functional variables \((\W \setminus \V)\) from $G$, 
then the causal effect of \(\X\) on \(\Y\) is F-identifiable wrt \(\langle G,\V,\CC_{\V}, \W \rangle\) iff it is F-identifiable wrt $\langle G',\V,\CC_{\V},\V\cap\W\rangle.$
\end{theorem}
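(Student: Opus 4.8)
The plan is to prove the equivalence by exhibiting a correspondence between the two classes of models appearing in Definition~\ref{def:identifiability-func}, one for each tuple, that preserves everything the definition cares about. Write $\W_h = \W\setminus\V$ for the hidden functional variables being eliminated; since the effect must be defined in $G'$, the treatments $\X$ and outcomes $\Y$ survive the elimination, so $(\X\cup\Y)\cap\W_h=\emptyset$ and $\Pr_\x(\y)$ makes sense in $G'$. I would let $\Phi$ be the map sending a BN $M$ over $G$ in which $\W$ is functional to the BN $\Phi(M)$ over $G'$ obtained by the BN-level functional elimination of $\W_h$ (Definition~\ref{def:felim} together with its CPT-level extension in Appendix~\ref{app:felim-bn}); by order-independence of elimination (Proposition~\ref{thm:felim-bn-order}) this is well defined regardless of the order in which the variables of $\W_h$ are removed. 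The whole theorem then reduces to four properties of $\Phi$: (i) $\Phi(M)$ induces the same marginal over the variables of $G'$ as $M$, so in particular $\Pr^{\Phi(M)}(\V)=\Pr^M(\V)$ and $\Phi(M)$ satisfies $\CC_\V$ iff $M$ does; (ii) $\Phi(M)$ makes exactly $\V\cap\W$ functional; (iii) $\Phi$ preserves the causal effect, $\Pr^{\Phi(M)}_\x(\y)=\Pr^M_\x(\y)$; and (iv) $\Phi$ is onto the class of BNs over $G'$ in which $\V\cap\W$ is functional. Granting (i)--(iv), a pair of models witnessing non-F-identifiability on one side maps (or lifts) to a witnessing pair on the other, giving both directions of the iff.

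Properties (i) and (ii) are the standard guarantees of marginalization: eliminating a functional variable replaces it by its function inside the CPTs of its children, so the surviving joint is exactly $\sum_{\w_h}\Pr^M(\cdot,\w_h)$, and the composition of a functional CPT with functional CPTs is again functional, leaving precisely the observed functional variables $\V\cap\W$ deterministic. Since $\CC_\V$ mentions only variables in $\V$, which are untouched by the elimination, (i) immediately yields the claim about $\CC_\V$. The one point needing care is (iii): to transport the effect I would show that functional elimination commutes with the intervention, i.e. the mutilated network of $\Phi(M)$ equals the functional elimination of $\W_h$ from the mutilated network of $M$. This holds because mutilation only deletes the edges entering $\X$ and overwrites the CPTs of $\X$, and since $\W_h\cap\X=\emptyset$ the variables of $\W_h$ keep their functional parents and CPTs under mutilation, while any edge that elimination would add into some $X\in\X$ is deleted by mutilation on both sides. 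Combining this commutation with (i) applied to the mutilated network gives $\Pr^{\Phi(M)}_\x(\Y)=\Pr^M_\x(\Y)$.

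For (iv) I would give an explicit lift. Given a BN $M'$ over $G'$ in which $\V\cap\W$ is functional, construct $M$ over $G$ by restoring each eliminated variable $W\in\W_h$ as a deterministic copy of its $G$-parents: take the state space of $W$ to be the product of the state spaces of $\Pa_G(W)$ and let $f_W$ be the identity map onto this product (hidden variables may be assigned arbitrary cardinality, which is what makes this lossless encoding admissible). For every child $C$ of an eliminated variable, define $f_C$ over its $G$-parents by reading the values of $\Pa_G(W)$ back out of $W$ and evaluating the corresponding CPT of $M'$, whose $G'$-parents are exactly $\Pa_G(W)$ together with the remaining $G$-parents of $C$; all other CPTs are copied verbatim from $M'$. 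Processing the variables of $\W_h$ in reverse elimination order (again appealing to order-independence) one checks that $\Phi(M)=M'$ and that $\W$ is functional in $M$, which establishes surjectivity.

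Finally I would assemble the two directions by contraposition. If the effect is not F-identifiable wrt $\langle G',\V,\CC_\V,\V\cap\W\rangle$, a witnessing pair $M'_1,M'_2$ lifts via (iv) to $M_1,M_2$ over $G$ that, by (i) and (iii), agree on $\Pr(\V)$, satisfy $\CC_\V$, have $\W$ functional, yet disagree on $\Pr_\x(\y)$, so the effect is not F-identifiable wrt $\langle G,\V,\CC_\V,\W\rangle$; conversely, a witnessing pair over $G$ maps through $\Phi$ to a witnessing pair over $G'$ by (i)--(iii). I expect the main obstacle to be property (iv) together with the commutation step in (iii): making the lift genuinely invertible when the eliminated functional variables form chains (one feeding another) requires the careful reverse-order bookkeeping and the freedom to enlarge hidden domains, and the commutation of elimination with mutilation must be argued precisely around the intervened variables $\X$. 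The remaining properties are routine consequences of marginalization.
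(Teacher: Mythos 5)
Your proposal is correct and follows essentially the same route as the paper's proof: the forward direction rests on functional elimination preserving the marginal over surviving variables and the interventional distribution (the paper's Theorems~\ref{thm:felim-bn-sound} and~\ref{thm:fid-invariant}, the latter proved via exactly your elimination--mutilation commutation), and the backward direction uses the same lift that re-introduces each eliminated variable as a one-to-one deterministic encoding of its parents with children's CPTs read back through that encoding (the paper's ``construction method'' from the proof of Theorem~\ref{thm:felim-Dsep}). The only difference is presentational --- the paper inducts one hidden functional variable at a time, whereas you eliminate $\W\setminus\V$ simultaneously and invoke order-independence (Proposition~\ref{thm:felim-bn-order}) --- which does not change the substance of the argument.
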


An immediate corollary of this theorem is that 
if all functional variables are 
hidden, then we can reduce the question of F-identifiability to a question of identifiability since $\V\cap\W = \emptyset$ 
so F-identifiability wrt $\langle G',\V,\CC_\V, \V\cap\W=\emptyset\rangle$ collapses into identifiability wrt $\langle G',\V, \CC_\V\rangle.$

\begin{corollary}
\label{thm:func-unobs}
Let \(\langle G, \V, \CC_\V, \W \rangle \) be an F-identifiability tuple where \(\CC_\V = \{\Pr(\V) > 0\}\) and \(\W\) are all hidden.
If \(G'\) is the result of functionally 
projecting $G$ on variables \(\V\), then the causal effect
of \(\X\) on \(\Y\) is  F-identifiable wrt \(\langle G, \V, \CC_\V, \W \rangle \) iff it is identifiable wrt \(\langle G, \V, \CC_\V\rangle \).\footnote{We are requiring the positivity 
constraint $\Pr(\V) > 0$ as we suspect the
projection operation in~\cite{Verma93,uai/TianP02b} requires it (even though that was not explicit in these papers). If not, then $\CC_\V$ can be empty in Corollary~\ref{thm:func-unobs}.}
\end{corollary}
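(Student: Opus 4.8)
The plan is to chain three facts: Theorem~\ref{thm:func-unobs-obs}, the observation that F-identifiability degenerates to ordinary constrained identifiability when there are no functional variables left, and the standard guarantee that the classical projection operation preserves the identifiability status of a causal effect. Since all functional variables are hidden we have $\V \cap \W = \emptyset$ and hence $\W \setminus \V = \W$. First I would let $G_e$ denote the DAG obtained by functionally eliminating all of $\W$ from $G$, and apply Theorem~\ref{thm:func-unobs-obs} verbatim: the causal effect of $\X$ on $\Y$ is F-identifiable wrt $\langle G, \V, \CC_\V, \W\rangle$ iff it is F-identifiable wrt $\langle G_e, \V, \CC_\V, \V \cap \W\rangle = \langle G_e, \V, \CC_\V, \emptyset\rangle$.

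Next I would observe that when the set of functional variables is empty, Definition~\ref{def:identifiability-func} imposes no functional constraint on the admissible distributions and therefore coincides with Definition~\ref{def:identifiability} of constrained identifiability; thus F-identifiability wrt $\langle G_e, \V, \CC_\V, \emptyset\rangle$ is the same as identifiability wrt $\langle G_e, \V, \CC_\V\rangle$. By Definition~\ref{def:f-projection}, the functional projection $G'$ is exactly the classical projection of $G_e$ onto $\V$, so all that remains is the last step: the causal effect is identifiable wrt $\langle G_e, \V, \CC_\V\rangle$ iff it is identifiable wrt $\langle G', \V, \CC_\V\rangle$. This is precisely the soundness guarantee underpinning the project-ID pipeline for the projection operation of~\citep{Verma93,uai/TianP02b,TianPearl03}, which I would invoke as a black box. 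Concatenating the three equivalences gives F-identifiability wrt $\langle G, \V, \CC_\V, \W\rangle$ iff identifiability wrt $\langle G', \V, \CC_\V\rangle$, which is the claim.

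The main obstacle is this final step, where I lean on the fact that projection preserves identifiability. This is also the one place where the requirement $\CC_\V = \{\Pr(\V) > 0\}$ is genuinely used (as flagged in the footnote): projection changes the hidden structure of the graph, and it is only under strict positivity that the class of joint distributions $\Pr(\V)$ realizable by $G_e$ can be matched to that realizable by its projection $G'$, so that the two identifiability questions provably coincide in both directions. A secondary, routine point to check is that $G_e$ is a well-defined DAG on which projection is applicable; this is immediate because functional elimination returns a DAG (Definition~\ref{def:felim}) and removes exactly the hidden functional variables, leaving $\V$ together with the original hidden non-functional variables, which are precisely the variables that projection is designed to absorb into auxiliary roots.
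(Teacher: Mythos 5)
Your proposal is correct and follows essentially the same route as the paper: the paper also presents this corollary as an immediate consequence of Theorem~\ref{thm:func-unobs-obs} (with $\W \setminus \V = \W$), notes that F-identifiability with an empty functional set collapses into constrained identifiability, and implicitly relies on the classical projection of~\citep{Verma93,uai/TianP02b} preserving identifiability under $\Pr(\V)>0$ (which is exactly what its footnote flags). You also correctly read the conclusion as identifiability wrt $\langle G', \V, \CC_\V\rangle$ rather than the literal $\langle G, \V, \CC_\V\rangle$ in the statement, which is how the paper itself uses the corollary.
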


\begin{figure}[tb]
\centering
\begin{subfigure}[b]{0.19\linewidth}
\centering
\begin{tikzpicture}[->=stealth,auto,scale=\dagsize,transform shape]
\node[font=\huge] (A) at (0,0) {$A$};
\node[font=\huge] (B) at (0,-1.5) {$B$};
\node[font=\huge] (C) at (-3.5,-2.5) {$C$};
\node[font=\huge] (X) at (-3,-5.2) {$X$};
\node[font=\huge] (D) at (0,-5.2) {$F$};
\node[font=\huge] (Y) at (3,-5.2) {$Y$};
\node[state,font=\huge] (U1) at (-2.5,-0.5) {$U_1$};
\node[state,font=\huge] (U2) at (2.5,-1) {$U_2$};
\node[state,font=\huge] (U3) at (-2,-3.5) {$U_3$};
\node[state,font=\huge] (U4) at (-1.5,-2) {$D$};
\node[state,font=\huge] (U5) at (0.7,-3) {$E$};

\path (U1) edge (A);
\path (U1) edge (C);
\path (U2) edge (A);
\path (U2) edge (Y);
\path (U3) edge (C);
\path (U3) edge (X);
\path (A) edge (B);
\path (B) edge (U4);
\path (U4) edge (C);
\path (C) edge (X);
\path (U4) edge (D);
\path (B) edge (U5);
\path (U5) edge (Y);
\path (A) edge (Y);
\path (X) edge (D);
\path (D) edge (Y);
\end{tikzpicture}
\caption{causal graph}
\label{sfig:id-obs-ex0}
\end{subfigure}
\begin{subfigure}[b]{0.19\linewidth}
\centering
\begin{tikzpicture}[->=stealth,auto,scale=\dagsize,transform shape]
\node[font=\huge] (A) at (0,0) {$A$};
\node[font=\huge] (B) at (0,-2) {$B$};
\node[font=\huge] (C) at (-2,-2.5) {$C$};
\node[font=\huge] (X) at (-2,-5.2) {$X$};
\node[font=\huge] (D) at (0,-5.2) {$F$};
\node[font=\huge] (Y) at (2,-5.2) {$Y$};
\path[bidirected] (A) edge[bend right=30] (C);
\path[bidirected] (C) edge[bend left=30] (X);
\path[bidirected] (A) edge[bend left=30] (Y);
\path[bidirected] (C) edge[bend left=15] (D);
\path (A) edge (B);
\path (B) edge (C);
\path (C) edge (X);
\path (B) edge (D);
\path (B) edge (Y);
\path (A) edge (Y);
\path (X) edge (D);
\path (D) edge (Y);
\end{tikzpicture}
\caption{proj. of (a)}
\label{sfig:id-obs-ex3}
\end{subfigure}
\begin{subfigure}[b]{0.19\linewidth}
\centering
\begin{tikzpicture}[->=stealth,auto,scale=\dagsize,transform shape]
\node[font=\huge] (A) at (0,0) {$A$};
\node[font=\huge] (B) at (0,-2) {$B$};
\node[font=\huge] (C) at (-2,-2.5) {$C$};
\node[font=\huge] (X) at (-2,-5.2) {$X$};
\node[font=\huge] (D) at (0,-5.2) {$F$};
\node[font=\huge] (Y) at (2,-5.2) {$Y$};
\path[bidirected] (A) edge[bend right=30] (C);
\path[bidirected] (C) edge[bend left=30] (X);
\path[bidirected] (A) edge[bend left=30] (Y);
\path (A) edge (B);
\path (B) edge (C);
\path (C) edge (X);
\path (B) edge (D);
\path (B) edge (Y);
\path (A) edge (Y);
\path (X) edge (D);
\path (D) edge (Y);
\end{tikzpicture}
\caption{F-proj. of (a)}
\label{sfig:id-obs-ex1}
\end{subfigure}
\begin{subfigure}[b]{0.19\linewidth}
\centering
\begin{tikzpicture}[->=stealth,auto,scale=\dagsize,transform shape]
\node[font=\huge] (A) at (0,0) {$A$};
\node[font=\huge] (B) at (0,-2) {$B$};
\node[font=\huge] (C) at (-2,-2.5) {$C$};
\node[font=\huge] (X) at (-2,-5.2) {$X$};
\node[font=\huge] (Y) at (2,-5.2) {$Y$};
\path[bidirected] (A) edge[bend right=30] (C);
\path[bidirected] (C) edge[bend left=30] (X);
\path[bidirected] (A) edge[bend left=30] (Y);
\path (A) edge (B);
\path (B) edge (C);
\path (C) edge (X);
\path (B) edge (Y);
\path (A) edge (Y);
\path (X) edge (Y);
\path (D) edge (Y);
\end{tikzpicture}
\caption{F-elim. \(F\)}
\label{sfig:id-obs-ex4}
\end{subfigure}
\begin{subfigure}[b]{0.19\linewidth}
\centering
\begin{tikzpicture}[->=stealth,auto,scale=\dagsize,transform shape]
\node[font=\huge] (A) at (0,0) {$A$};
\node[font=\huge] (C) at (-2,-2.5) {$C$};
\node[font=\huge] (X) at (-2,-5.2) {$X$};
\node[font=\huge] (D) at (0,-5.2) {$F$};
\node[font=\huge] (Y) at (2,-5.2) {$Y$};
\path[bidirected] (A) edge[bend right=30] (C);
\path[bidirected] (C) edge[bend left=30] (X);
\path[bidirected] (A) edge[bend left=30] (Y);
\path (A) edge (C);
\path (C) edge (X);
\path (A) edge (D);
\path (A) edge (Y);
\path (A) edge (Y);
\path (X) edge (D);
\path (D) edge (Y);
\end{tikzpicture}
\caption{F-elim. \(B\)}
\label{sfig:id-obs-ex2}
\end{subfigure}
\caption{Variables $A,B,C,F,X,Y$ are observed. Variables
$D, E$ are functional (and hidden).}
\label{fig:func-ex2}
\end{figure}

This corollary suggests a method for using the ID algorithm, which is popular for testing identifiability, to establish F-identifiability by coupling ID with functional projection instead of classical projection.
Consider the causal graph $G$ in Figure~\ref{sfig:id-obs-ex0} with observed variables $\V= \{A,B,C,F,X,Y\}.$ 
The causal effect of \(X\) on \(Y\) is not identifiable under $\Pr(\V)>0$: 
projecting $G$ on observed variables $\V$ yields the causal graph $G'$
in Figure~\ref{sfig:id-obs-ex3} and the ID
algorithm produces FAIL on $G'$.
Suppose now that the hidden variables \(\{D, E\}\)
are functional.
To test whether the causal effect is F-identifiable using Corollary~\ref{thm:func-unobs},
we functionally project $G$ on the observed
variables $\V$ which yields the causal graph $G''$ in 
Figure~\ref{sfig:id-obs-ex1}.
Applying the ID algorithm to $G''$ 
produces the following identifying formula:
\(\Pr_x(y) = \sum_{bf}\Pr(f|b,x)\sum_{acx'}\Pr(y|a,b,c,f,x')\Pr(a,b,c,x')\)
so $\Pr_x(y)$ is F-identifiable.

We stress again that Corollary~\ref{thm:func-unobs} and the corresponding F-identifiability algorithm apply only when all functional variables are hidden. We now treat the case when some of the functional variables are observed.
The subtlety here is that, unlike hidden functional variables,
eliminating an observed functional variable does not always
preserve F-identifiability. 
However, the following result identifies conditions that guarantees
the preservation of F-identifiability in this case. 
If all observed functional variables satisfy these conditions, 
then we can again reduce F-identifiability into identifiability
so we can exploit existing methods for identifiability like
the ID algorithm and do-calculus.

\begin{theorem}
\label{thm:func-obs1}
Let \(\langle G,\V, \CC_{\V},\W\rangle\) be an F-identifiability tuple.
Let $\Z$ be a set of observed functional variables that are neither treatments nor outcomes, are separable from \(\CC_{\V},\)
and that have observed parents. 
If \(G'\) is the result of functionally eliminating 
variables \(\Z\) from $G$, then the causal effect
of \(\X\) on \(\Y\) is F-identifiable wrt \(\langle G,\V,\CC_{\V},\W\rangle\) iff it is F-identifiable wrt \(\langle G',\V\setminus\Z,\CC_{\V}, \W\setminus\Z \rangle\). 
\end{theorem}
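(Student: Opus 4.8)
The plan is to establish a correspondence between the two model classes induced by functional elimination of $\Z$, and to read both directions of the equivalence off it. Write $\MM$ for the class of Bayesian networks over $G$ in which $\W$ are functional and whose induced distribution satisfies $\CC_\V$, and $\MM'$ for the analogous class over $G'$ with $\W\setminus\Z$ functional. Let $\phi\colon\MM\to\MM'$ be the map that functionally eliminates $\Z$ at the level of CPTs (the Bayesian-network form of Definition~\ref{def:felim}, cf.\ the Appendix): each $Z\in\Z$ carries a function $f_Z$, and for every child $C$ of $\Z$ we replace $f_C$ by the composition $g_C(C\mid\cdots,\Pa(Z))=f_C(C\mid\cdots,f_Z(\Pa(Z)))$ obtained by substituting $Z=f_Z(\Pa(Z))$. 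First I would verify three facts about $\phi$: (i) it preserves the observed marginal, $\Pr^{\phi(M)}(\V\setminus\Z)=\Pr^{M}(\V\setminus\Z)$, since summing out the deterministic $\Z$ is exactly this substitution; (ii) it preserves the causal effect, $\Pr^{\phi(M)}_{\x}(\y)=\Pr^{M}_{\x}(\y)$, which is where the hypothesis $\Z\cap(\X\cup\Y)=\emptyset$ enters, so that neither the intervention nor the query is disturbed; and (iii) $\phi(M)\in\MM'$, because a composition of deterministic maps is deterministic (so $\W\setminus\Z$ stay functional) and, by \emph{separability} $\Z\cap\vars(\CC_\V)=\emptyset$, the constraints $\CC_\V$ only mention $\V\setminus\Z$ and are therefore preserved by (i).

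Facts (i)--(iii) give the easy direction immediately: assuming F-identifiability wrt $\langle G',\V\setminus\Z,\CC_\V,\W\setminus\Z\rangle$, take any $M_1,M_2\in\MM$ with $\Pr^{1}(\V)=\Pr^{2}(\V)$; then they agree on $\Pr(\V\setminus\Z)$, so $\phi(M_1),\phi(M_2)$ agree there by (i), hence have equal effects, and by (ii) so do $M_1,M_2$. The content of the theorem is therefore the converse. The key structural observation I would isolate is that, by the substitution in (i)--(ii), \emph{both} $\Pr^{M}(\V\setminus\Z)$ and $\Pr^{M}_{\x}(\y)$ depend on $M$ only through the reduced parameters $\Theta'=(\{g_C\},\text{other CPTs})$ and never on the functions $f_Z$ in isolation --- the model only ever sees the compositions $g_C$. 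Here the hypothesis that $\Z$ have \emph{observed parents} is what guarantees $\Pa(Z)\subseteq\V\setminus\Z$, so that these compositions are expressed over variables that survive into $G'$.

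For the converse I would lift models from $G'$ back to $G$. Fix once and for all a single function $f_Z^{0}$ for each $Z\in\Z$ (the \emph{same} choice for every model). Given $M'\in\MM'$ with child CPTs $g_C$, define $\psi(M')\in\MM$ by assigning $Z$ the function $f_Z^{0}$ and choosing child CPTs $f_C$ with $f_C(C\mid\cdots,f_Z^{0}(\Pa(Z)))=g_C(C\mid\cdots,\Pa(Z))$, leaving all other CPTs untouched; then $\phi(\psi(M'))$ has the same reduced parameters $\Theta'$ as $M'$, so $\psi$ preserves the observed marginal and the causal effect. Now given $M_1',M_2'\in\MM'$ agreeing on $\Pr(\V\setminus\Z)$, their lifts use the \emph{same} $f_Z^{0}$ and share the marginal on $\V\setminus\Z$, so $\Pr^{\psi(M_1')}(\V)=\unit{\Z=f_Z^{0}(\Pa(\Z))}\,\Pr^{\psi(M_1')}(\V\setminus\Z)=\Pr^{\psi(M_2')}(\V)$; applying F-identifiability wrt $\langle G,\V,\CC_\V,\W\rangle$ to this pair and transporting the conclusion back through $\psi$ forces $\Pr^{1}_{\x}(\y)=\Pr^{2}_{\x}(\y)$. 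The main obstacle is precisely the existence of these lifts: one must exhibit, for a single fixed $f_Z^{0}$, child CPTs $f_C$ realizing the arbitrary $g_C$ permitted in $\MM'$ --- equivalently, route the entire dependence of the children on $\Pa(Z)$ through the functional variable $Z$. This is the technical heart of the argument and the only place where the interaction between $Z$ and the domains of its parents matters; verifying that a suitable common $f_Z^{0}$ can always be chosen under the stated hypotheses is what I expect to require the most care, everything else being bookkeeping with $\phi$, $\psi$, and the reduced parameterization.
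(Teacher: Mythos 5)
Your overall architecture --- a forward elimination map \(\phi\) pushing models from \(G\) to \(G'\), a backward lift \(\psi\) pulling models from \(G'\) to \(G\), with separability making \(\CC_\V\) transfer and the no-treatment/no-outcome condition making the causal effect transfer --- is the same as the paper's proof (which phrases both directions contrapositively: counterexample pairs are eliminated forward and lifted backward, one variable at a time by induction). However, there is a genuine gap exactly where you flag one: you never construct the lift, and as stated it can fail. For \(f_C(C\mid\cdots,f_Z^{0}(\Pa(Z)))=g_C(C\mid\cdots,\Pa(Z))\) to be solvable for an \emph{arbitrary} \(g_C\), the fixed function \(f_Z^{0}\) must be injective on the instantiations of \(\Pa(Z)\): if \(f_Z^{0}(\p^1)=f_Z^{0}(\p^2)\) while \(g_C\) distinguishes \(\p^1\) from \(\p^2\), no \(f_C\) exists. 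An injective \(f_Z^{0}\) exists only when \(Z\) has at least as many states as \(\Pa(Z)\) has instantiations, which is false in general (e.g., a binary \(Z\) with two binary parents). So ``a suitable common \(f_Z^{0}\)'' cannot always be chosen with the domains as given; this is not bookkeeping but the crux of the theorem, and your proposal leaves it unresolved.

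The missing idea, and the paper's resolution, is that Definition~\ref{def:identifiability-func} quantifies over all parameterizations of \(G\) without fixing variable domains, so when constructing a pair of models witnessing non-identifiability you are free to \emph{enlarge the domain of \(Z\)} (both models need only share it). The paper, reusing the construction from the proof of Theorem~\ref{thm:felim-Dsep}, gives \(Z\) one state per instantiation of \(\Pa(Z)\), takes \(f_Z^{0}\) to be the induced bijection \(\alpha\) (i.e., \(f_Z(t\mid\p)=1\) iff \(\alpha(t)=\p\)), and sets \(f_C(c\mid t,\p_C)=g_C(c\mid\alpha(t),\p_C)\) when \(\alpha(t)\) is consistent with \(\p_C\), with an arbitrary deterministic distribution otherwise (which also keeps the variables in \(\W\setminus\Z\) functional, since \(f_C\) is deterministic iff \(g_C\) is). Both lifted models then place all mass on the single state \(z^*=\alpha^{-1}(\p)\), so they agree on \(\Pr(\V)\) whenever the originals agree on \(\Pr(\V\setminus\Z)\), and Theorems~\ref{thm:felim-bn-sound} and~\ref{thm:fid-invariant} transport the observed marginal and the causal effect. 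The observed-parents hypothesis enters exactly as you noted, ensuring \(\Pa(Z)\subseteq\V\setminus\Z\) so \(\alpha\) is a function of surviving observed variables. With this domain-enlargement step supplied, the rest of your plan goes through and coincides with the paper's argument.
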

We now have the following corollary of Theorems~\ref{thm:func-unobs-obs} \&~\ref{thm:func-obs1} which subsumes Corollary~\ref{thm:func-unobs}.
\begin{corollary}
\label{cor:func-obs}
Let \(\langle G, \V, \CC_\V, \W \rangle \) be an F-identifiability tuple where
\(\CC_\V = \{\Pr(\V \setminus \W) > 0\}\)
and every variable in $\W\cap\V$ satisfies the conditions of Theorem~\ref{thm:func-obs1}.
If \(G'\) is the result of functionally projecting \(G\) on \(\V \setminus \W\), then the causal effect of \(\X\) on \(\Y\) is F-identifiable wrt \(\langle G, \V, \CC_\V, \W \rangle \) iff it is identifiable wrt \(\langle G', \V \setminus \W, \CC_\V\rangle\).
\end{corollary}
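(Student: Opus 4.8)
The plan is to derive this corollary by chaining together Theorems~\ref{thm:func-unobs-obs} and~\ref{thm:func-obs1}, reducing F-identifiability in two stages until it collapses into classical identifiability. The final statement claims an equivalence between F-identifiability wrt $\langle G, \V, \CC_\V, \W\rangle$ and classical identifiability wrt $\langle G', \V\setminus\W, \CC_\V\rangle$, where $G'$ is the functional projection of $G$ on $\V\setminus\W$ and $\CC_\V = \{\Pr(\V\setminus\W) > 0\}$. Since functional projection (Definition~\ref{def:f-projection}) is itself defined as functional elimination of hidden functional variables followed by classical projection, the natural strategy is to isolate these two phases and handle the observed functional variables separately.

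First I would partition the functional variables $\W$ into the hidden ones, $\W\setminus\V$, and the observed ones, $\W\cap\V$. I would begin by applying Theorem~\ref{thm:func-obs1} to the observed functional variables $\Z = \W\cap\V$. Here I must verify the hypotheses of that theorem: these variables are neither treatments nor outcomes, they are separable from $\CC_\V$, and they have observed parents. Separability is immediate because $\CC_\V = \{\Pr(\V\setminus\W)>0\}$ never mentions any variable in $\W$, so $\W\cap\vars(\CC_\V)=\emptyset$. The remaining conditions (not treatments/outcomes, observed parents) are assumed in the corollary statement. Applying Theorem~\ref{thm:func-obs1} eliminates $\Z = \W\cap\V$ and reduces F-identifiability wrt $\langle G,\V,\CC_\V,\W\rangle$ to F-identifiability wrt $\langle G_1, \V\setminus\Z, \CC_\V, \W\setminus\Z\rangle$, where $G_1$ is the result of functionally eliminating $\Z$ from $G$. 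After this step, all remaining functional variables $\W\setminus\Z = \W\setminus\V$ are hidden, and the observed set is $\V\setminus(\W\cap\V) = \V\setminus\W$.

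Next I would apply Theorem~\ref{thm:func-unobs-obs} (or its specialization) to the remaining hidden functional variables in $G_1$. Since $\W\setminus\Z$ consists entirely of hidden functional variables relative to the observed set $\V\setminus\W$, functionally eliminating them yields a graph $G_2$ in which no functional variables remain observed, so $(\V\setminus\W)\cap(\W\setminus\Z) = \emptyset$ and F-identifiability collapses into classical identifiability, exactly as in the derivation of Corollary~\ref{thm:func-unobs}. Composing the two eliminations (observed functionals via Theorem~\ref{thm:func-obs1}, then hidden functionals via Theorem~\ref{thm:func-unobs-obs}) gives a combined functional elimination of all of $\W$; projecting the result on $\V\setminus\W$ then matches the functional projection $G'$ in the corollary statement, by Definition~\ref{def:f-projection} together with the order-independence of functional elimination (Proposition~\ref{thm:felim-bn-order}).

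The main obstacle I anticipate is checking that the positivity constraint is correctly propagated and remains consistent across both reduction steps, and that the two theorems can actually be composed without the intermediate F-identifiability tuple violating a hypothesis. Specifically, Theorem~\ref{thm:func-obs1} preserves the constraint set $\CC_\V$ literally while shrinking the observed set to $\V\setminus\Z$, so I must confirm that $\CC_\V = \{\Pr(\V\setminus\W)>0\}$ is still a well-formed positivity constraint over $\V\setminus\Z = (\V\setminus\W)\cup(\W\setminus\V)$ — which it is, since $\V\setminus\W \subseteq \V\setminus\Z$. I would also need to confirm that the hidden functional variables treated in the second step are genuinely eliminable in the sense required by Theorem~\ref{thm:func-unobs-obs}, which holds automatically because that theorem imposes no separability or parent conditions on hidden functional variables. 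The remaining bookkeeping — matching the composed elimination-then-projection against the single functional-projection operator — follows from Definition~\ref{def:f-projection} and order-independence, so I expect it to be routine once the two applications are correctly sequenced.
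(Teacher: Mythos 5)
Your proposal is correct and takes essentially the same route as the paper, which presents this corollary as an immediate consequence of chaining Theorems~\ref{thm:func-obs1} and~\ref{thm:func-unobs-obs}: eliminate the observed functional variables $\Z = \W\cap\V$ (separability is automatic since $\CC_\V$ mentions only $\V\setminus\W$), eliminate the hidden ones, let F-identifiability collapse to classical identifiability once no functional variables remain, and match the composed elimination against Definition~\ref{def:f-projection} via order-independence (Proposition~\ref{thm:felim-bn-order}); the order in which you apply the two theorems is the reverse of the paper's listing, but both orders work since eliminating the hidden functionals cannot alter the parent sets of variables in $\Z$ (their parents are observed by hypothesis). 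Two small points to tighten: the identity should read $\V\setminus\Z = \V\setminus\W$, not $(\V\setminus\W)\cup(\W\setminus\V)$ (harmless, since all your argument needs is that $\CC_\V$ mentions only variables that remain observed in the intermediate tuple), and the final passage from identifiability wrt the eliminated-but-unprojected graph to identifiability wrt $G'$ is not purely definitional bookkeeping --- it rests on the result of \citep{uai/TianP02b} that classical projection preserves identifiability (Appendix~\ref{app:proj-id}), which is exactly where the assumed constraint $\Pr(\V\setminus\W)>0$ is needed, as flagged in the footnote to Corollary~\ref{thm:func-unobs}.
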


Consider again the causal effect of \(X\) on \(Y\)
in graph $G$ of Figure~\ref{sfig:id-obs-ex0} with observed variables $\V=\{A,B,C,F,X,Y\}.$
Suppose now that the observed variable \(F\) is also functional (in addition to the hidden functional variables $D, E$) and assume \(\Pr(A,B,C,X,Y) > 0\). 
Using Corollary~\ref{cor:func-obs}, we can
functionally project $G$ on \(A, B, C, X, Y\) to yield the causal graph $G'$ in Figure~\ref{sfig:id-obs-ex4}, which reduces F-identifiability on $G$ to classical identifiability on $G'$. Since strict positivity holds in $G'$, we can apply any existing identifiability algorithm and conclude that the causal effect is not identifiable. 
For another scenario, suppose that the observed variable \(B\) (instead of \(F\)) is functional and we have \(\Pr(A,C,F,X,Y) > 0\). Again, using Corollary~\ref{cor:func-obs}, we functionally project $G$ onto \(A,C,F,X,Y\) to yield the causal graph $G''$ in Figure~\ref{sfig:id-obs-ex2}, which reduces F-identifiability on $G$ to classical identifiability on $G''.$ If we apply the ID algorithm to $G''$ we get the identifying formula (which we denote as Eq.~1):
\({\Pr}_x(y) = \sum_{af}\Pr(f|a,x)\sum_{cx'}\Pr(y|a,c,f,x')\Pr(a,c,x').\)
In both scenarios above, we were able to test F-identifiability
using an existing algorithm for identifiability.

Corollary~\ref{cor:func-obs} (and Theorem~\ref{thm:func-obs1}) has yet another key application: it can help us pinpoint observations that are not essential for identifiability. To illustrate, consider the second scenario above where the observed variable \(B\) is functional in the causal graph $G$ of Figure~\ref{sfig:id-obs-ex0}. The fact that Corollary~\ref{cor:func-obs} allowed us to eliminate variable \(B\) from $G$ implies that
observing this variable is not needed for rendering the causal effect F-identifiable and, hence, is not needed
for computing the causal effect. This can be seen by examining the identifying formula (Eq.~1) which does not contain variable \(B.\) This application of
Corollary~\ref{cor:func-obs}
can be quite significant in practice, especially when some variables are expensive to measure (observe), or when they may raise privacy concerns; see, e.g.,~\citep{jmlr/TikkaK17a,ai/ZanderLT19}.

Theorems~\ref{thm:func-unobs-obs} \&~\ref{thm:func-obs1} are more far-reaching than what the above discussion may suggest. In particular, even if we cannot eliminate every (observed) functional variable using
these theorems, we may still be able to reduce F-identifiability to identifiability due to the following result.

\begin{theorem}
\label{thm:func-obs2}
Let \(\langle G,\V, \CC_{\V},\W\rangle\) be an F-identifiability tuple. If every functional variable has at least one hidden parent,
then a causal effect of \(\X\) on \(\Y\) is F-identifiable wrt
\(\langle G, \V, \CC_{\V}, \W \rangle\) iff it is identifiable wrt \(\langle G, \V, \CC_{\V} \rangle.\)
\end{theorem}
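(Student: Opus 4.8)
My plan is to prove the two directions separately, with essentially all of the work in one of them. The forward direction, identifiability $\Rightarrow$ F-identifiability, needs neither the hidden-parent hypothesis nor any construction: by Definitions~\ref{def:identifiability} and~\ref{def:identifiability-func}, the pairs of models quantified over for F-identifiability are exactly those quantified over for constrained identifiability, further restricted to make $\W$ functional. Agreement of the causal effect over the larger class therefore forces agreement over this subclass, and I would dispatch this in a sentence.

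For the backward direction, F-identifiability $\Rightarrow$ identifiability, I would argue the contrapositive. Writing $Q(M)=\Pr_\x(\y)$ for the causal effect computed in a model $M$ (as in Definition~\ref{def:pident}), non-identifiability wrt $\langle G,\V,\CC_\V\rangle$ supplies two models $M_1,M_2$ inducing $G$, both satisfying $\CC_\V$, with $\Pr_{M_1}(\V)=\Pr_{M_2}(\V)$ but $Q(M_1)\neq Q(M_2)$; crucially these models need not make $\W$ functional. The heart of the proof is a \emph{functionalization} transform $M\mapsto M'$ that (i) keeps $G$ and the observed variables' cardinalities fixed while possibly enlarging hidden-variable cardinalities (allowed, since models induced by $G$ may give hidden variables arbitrary cardinalities), (ii) makes every $W\in\W$ a deterministic function of its parents, and (iii) preserves the joint distribution over all original variables both before and after the mutilation for $do(\x)$. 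Given such a transform, $M_1'$ and $M_2'$ have $\W$ functional, still satisfy $\CC_\V$ and $\Pr_{M_1'}(\V)=\Pr_{M_2'}(\V)$ by (iii), and still satisfy $Q(M_1')\neq Q(M_2')$, so they witness non-F-identifiability. Note that $\CC_\V$ is inert in this argument, since it only constrains $\Pr(\V)$, which is preserved; the result thus holds for arbitrary positivity constraints.

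Constructing the transform is where the hypothesis enters and is the main obstacle. For each functional target $W$ I want to replace the stochastic mechanism $\Pr(W\mid\Pa(W))$ by a deterministic response function driven by a fresh exogenous noise coordinate $R_W$, storing $R_W$ inside the hidden part of the model so that no new observed randomness is introduced. Choosing the law of $R_W$ as the canonical product over instantiations of $\Pa(W)$ makes marginalizing $R_W$ reproduce $\Pr(W\mid\Pa(W))$ exactly, which is what drives (iii). The difficulty is that a functional variable's hidden parent may itself be functional, hence unable to carry fresh noise while staying deterministic. I would resolve this with a structural observation: since $G$ is a finite DAG, every functional variable has a hidden parent by assumption, and roots are never functional, following hidden parents backward from any $W$ must reach a \emph{non-functional} hidden variable along a directed path of hidden nodes. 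I would then \emph{route} $R_W$ from such a non-functional hidden source down this path to $W$, expanding each variable on the path by an extra coordinate that it deterministically copies to its successor; every functional variable on the path stays deterministic (its own value and all copied coordinates are determined by its expanded parents), while the non-functional source absorbs the genuine randomness.

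It remains to verify (i)--(iii) for the result of carrying this out for all $W\in\W$ using disjoint noise coordinates. Properties (i) and (ii) are immediate from the construction. For (iii), the key lemma is that enlarging a hidden variable by an independent coordinate and then functionalizing a descendant through the canonical response-function encoding is marginally equivalent to the original model; since every modification touches only hidden-variable CPTs and the mechanisms of $\W$, and since the mutilation for $do(\x)$ rewrites only the CPTs of the intervened (observed) variables, which are disjoint from the hidden noise carriers, marginalizing all added coordinates recovers $M$'s distribution over original variables in both the original and mutilated graphs. I expect the bookkeeping for routing several disjoint coordinates through shared hidden paths to be the most delicate point; keeping the coordinates disjoint, and equivalently organizing the whole construction as one pass over $\W$ in reverse topological order (so that noise pushed into a not-yet-processed ancestor is handled when that ancestor is reached), should make the marginal-equivalence invariant straightforward to maintain.
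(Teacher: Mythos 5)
Your proposal is correct, and its engine is the same as the paper's: both directions are split the same way (the forward direction is a one-line restriction-of-model-class argument), and the backward direction is proved contrapositively by functionalizing the witness pair with canonical response-function noise that is absorbed into hidden variables by Cartesian-product expansion, so that the graph \(G\), the observational distribution \(\Pr(\V)\) (hence \(\CC_\V\)), and the causal effect are all preserved. The difference is organizational. The paper factors the construction through Lemma~\ref{lem:func-obs2}: if \(Z\) has a hidden parent and is a \emph{non-descendant} of the currently-functional set \(\W\), then F-identifiability wrt \(\W\) and wrt \(\W\cup\{Z\}\) coincide; Theorem~\ref{thm:func-obs2} then follows by applying this lemma to the functional variables one at a time in bottom-up order. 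Crucially, in that induction the response-function noise for \(W^i\) is merged directly into a hidden parent \(T\), and \(T\) is allowed to be a functional variable of \(G\) — it only needs to be outside the already-processed set \(\W^{(i-1)}\), which the bottom-up order plus the non-descendant condition guarantee; any noise parked inside \(T\) simply gets re-functionalized (pushed one level up) when \(T\)'s own turn comes. Your primary framing instead routes each noise coordinate along an explicit directed path of hidden functional variables up to a \emph{non-functional} hidden ancestor, which requires the extra structural observation that such ancestors exist (via the convention that roots are never functional) and the path-copying bookkeeping; your alternative "one pass in reverse topological order" is precisely the paper's induction and is the cleaner of your two organizations. One caveat to note: you assert that mutilation only rewrites CPTs of "intervened (observed) variables," i.e., you implicitly assume \(\X\subseteq\V\) (and that \(\Y\) avoids the expanded hidden carriers). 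The paper's Definitions~\ref{def:identifiability} and~\ref{def:identifiability-func} do not impose this, and when the noise-absorbing hidden parent is itself a treatment or outcome an extra argument is needed (the paper supplies one in the proof of Lemma~\ref{lem:func-obs3}, exhibiting disagreement at a specific expanded state \((r,x)\)); since the paper's own proof of Lemma~\ref{lem:func-obs2} also glosses this corner, this is a shared loose end rather than a defect specific to your argument, but you should either state the assumption \(\X\cup\Y\subseteq\V\) explicitly or patch the case as in that lemma.
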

That is, if we still have functional variables in the causal graph after applying Theorems~\ref{thm:func-unobs-obs} \&~\ref{thm:func-obs1}, and if each such variable has at least one hidden parent, then F-identifiability is equivalent to identifiability.

The method we presented thus far for testing F-identifiability is based on eliminating functional variables from the causal graph, followed by applying existing tools for causal effect identification such as the project-ID algorithm and the do-calculus.
This F-identifiability method is complete if every observed functional
variable either satisfies the conditions of Theorem~\ref{thm:func-obs1}
or has at least one hidden parent that is not functional.

We next present another technique for reducing F-identifiability to identifiability. This method is more general and much more direct than the previous one, 
but it does not allow us to fully exploit some existing tools like the ID algorithm due to the positivity assumptions they make. The new method is based on pretending that some of the hidden functional variables are actually observed and is inspired by Proposition~\ref{thm:D-separation} which reduces D-separation to d-separation using a similar technique.

\begin{theorem}
\label{thm:complete-fid}
Let \(\langle G,\V, \CC_{\V},\W\rangle\) be an F-identifiability tuple where \(\CC_{\V}=
\{\Pr(X) > 0, X \in \X\}.\)
A causal effect of \(\X\) on \(\Y\) is F-identifiable wrt
\(\langle G, \V, \CC_{\V}, \W \rangle\) iff it is identifiable wrt \(\langle G, \CC_{\V}, \V'\rangle\) where \(\V'\) is obtained as follows.
Initially, \(\V'=\V\). Repeat until $\V'$ stops changing: add to $\V'$ a functional variable from $\W$ if its parents are in $\V'$.
\end{theorem}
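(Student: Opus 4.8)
The plan is to reduce F-identifiability over $\V$ to ordinary (constrained) identifiability over the closure $\V'$, exploiting that every variable in $\mathbf{D} := \V' \setminus \V$ is, in any functional model, a deterministic function of $\V$. I will order $\mathbf{D}$ as $D_1, \dots, D_m$ so that the parents of each $D_i$ lie in $\V \cup \{D_1, \dots, D_{i-1}\}$ (possible by the closure construction of $\V'$), and then establish the two directions by transforming models. Throughout, the governing principle is the same one behind Proposition~\ref{thm:D-separation}: a functional variable all of whose parents are ``known'' is itself ``known,'' so revealing $\mathbf{D}$ is harmless.

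For the direction \emph{identifiable over $\V'$ implies F-identifiable over $\V$}, I take two functional models $M_1, M_2$ (satisfying $\CC_\V$, with $\W$ functional) such that $\Pr^1(\V) = \Pr^2(\V)$, and aim to show $\Pr^1_\x(\y) = \Pr^2_\x(\y)$. The central step is a \emph{canonicalization} lemma: each hidden determined variable $D_i$ can be reparameterized so that its function becomes the identity on $\Pa(D_i)$ (it ``copies'' its parents), while the CPTs of its children are reprogrammed to first decode the original value $g_{D_i}(\Pa(D_i))$ and then behave exactly as before. Applying this to $D_1,\dots,D_m$ in order leaves both $\Pr(\V)$ and the causal effect $\Pr_\x(\y)$ unchanged, and replaces the model-specific functions on $\mathbf{D}$ by one canonical choice shared across models. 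After canonicalization the two models agree on $\Pr(\V)$ and assign $\mathbf{D}$ identical deterministic conditionals, hence they agree on $\Pr(\V')$; identifiability over $\V'$ then forces equal effects, and since canonicalization preserved the effect, $M_1$ and $M_2$ have equal effects as well.

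For the converse I argue the contrapositive. Given any two models $N_1, N_2$ (not necessarily functional) that satisfy $\CC_\V$, agree on $\Pr(\V')$, yet induce different effects, I apply the same canonicalization to $\mathbf{D}$ inside each $N_i$, making those variables deterministic copies of their parents without changing $\Pr(\V')$ or the effects. The remaining functional variables of $\W$ are exactly those not in $\V'$, and each such variable has a parent outside $\V'$, i.e.\ a hidden parent; for these I determinize through the hidden parent using the mechanism behind Theorem~\ref{thm:func-obs2}, again preserving $\Pr(\V')$ and the effects. This yields a pair of genuinely functional models that still satisfy $\CC_\V$, still disagree on the effect, and---after marginalizing out the now-observed copies $\mathbf{D}$---still agree on $\Pr(\V)$, thereby witnessing the failure of F-identifiability over $\V$.

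I expect the canonicalization lemma to be the main obstacle: one must verify that turning each determined variable into a copy of its parents and reprogramming its children exactly preserves both the observed marginal $\Pr(\V)$ and the interventional quantity $\Pr_\x(\y)$, including when some $D_i$ is an ancestor of a treatment or of an outcome, and one must check that the remaining functional variables (those with a hidden parent, and any observed functional variable with observed parents) can all simultaneously be made deterministic in the witnesses. A secondary delicacy is the interaction with the weak positivity $\CC_\V = \{\Pr(X) > 0 : X \in \X\}$: because we only assume positivity on the treatments, the deterministic (hence zero-laden) conditionals introduced for $\mathbf{D}$ never violate $\CC_\V$, which is precisely why the theorem can reveal determined variables without assuming strict positivity---mirroring how Proposition~\ref{thm:D-separation} may safely condition on a determined variable once its parents are conditioned on.
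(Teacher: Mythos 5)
Your first direction (identifiability wrt $\langle G, \CC_\V, \V'\rangle$ implies F-identifiability wrt $\langle G,\V,\CC_\V,\W\rangle$) is essentially sound and matches the paper's route: your ``canonicalization'' of the hidden determined variables $\mathbf{D}=\V'\setminus\V$ is precisely the paper's construction of functionally eliminating such a variable and re-introducing it as a one-to-one copy of its parents (the construction method in the proof of Theorem~\ref{thm:felim-Dsep}, used inside Theorems~\ref{thm:func-unobs-obs} and~\ref{thm:func-obs1}), and since both canonicalized models then share the same deterministic mechanism for $\mathbf{D}$, agreement on $\Pr(\V)$ lifts to agreement on $\Pr(\V')$ while $\CC_\V$ and the effects are preserved.

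The converse, however, has a genuine gap. Your witnesses $N_1,N_2$ of non-identifiability wrt $\langle G, \CC_\V, \V'\rangle$ are arbitrary models, so the variables in $\mathbf{D}$ (and in $\W\cap\V$) need not be deterministic in them; there is no function $g_{D_i}$ for the children to ``decode,'' so the canonicalization you defined simply does not apply. Worse, what you claim it achieves is impossible: each $D_i\in\mathbf{D}$ and all of its parents lie in $\V'$, so the conditional $\Pr(D_i\mid\p)$ is pinned down by $\Pr(\V')$ on every parent instantiation $\p$ with $\Pr(\p)>0$; no operation can make $D_i$ deterministic ``without changing $\Pr(\V')$'' unless it already was deterministic. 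What is actually required --- and is the main technical content of the paper's Lemma~\ref{lem:func-obs3} --- is twofold. First, the stochasticity of such a variable $Z$ must be pushed into an auxiliary root $R$ ranging over all functions from parent instantiations to $Z$; because $N_1$ and $N_2$ agree on $\Pr(\V')$, they induce the \emph{same} conditional $\Pr(Z\mid\p)$ wherever $\Pr(\p)>0$, hence the same CPT for $R$ in both models, and $R$ is then merged into a parent of $Z$ so that the result is again a parameterization of $G$ and the two models still agree on the observed distribution. Second, on parent instantiations with $\Pr(\p)=0$ the two models' conditionals are \emph{not} constrained by $\Pr(\V')$, yet interventions can reach exactly those configurations; a careless deterministic choice there can collapse the difference $\Pr^1_{\x}(\y)\neq\Pr^2_{\x}(\y)$ and destroy the witness. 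The paper handles this by expressing each effect as a network polynomial that is linear in the entries $f_Z(\cdot\mid\p)$ and picking the extremal (maximum/minimum) coefficients so the gap can only grow (its ``First Step,'' supported by Lemma~\ref{lem:replace-cpt}). Your proposal contains neither ingredient, and your closing remark misdiagnoses the role of weak positivity: the delicate point is not that deterministic CPTs might violate $\CC_\V$, but that $\Pr(\V')$ leaves the zero-probability conditionals free. Finally, the remaining functional variables after handling $\mathbf{D}$ are not ``exactly those not in $\V'$'': variables in $\W\cap\V$ also need determinization, and when all their parents are observed they require the same two-part mechanism above, not the hidden-parent mechanism of Lemma~\ref{lem:func-obs2}.
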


Consider the causal effect of $X$ on $Y$ in graph $G$ of Figure~\ref{sfig:id-obs-ex0} and suppose the observed variables are \(\V = \{A,B,C,X,Y\},\) the functional variables are \(\{D,E,F\}\) and we have \(\Pr(X) > 0.\) 
By Theorem~\ref{thm:complete-fid},
the causal effect of \(X\) on \(Y\) is F-identifiable iff it is identifiable in $G$ while pretending that variables \(\V'=\{A\), \(B\), \(C\), \(D\), \(E\), \(F\), \(X\), \(Y\}\) are all observed. In this case, the casual effect is not identifiable but we cannot obtain this answer
by applying an identifiability algorithm that requires positivity constraints which are stronger than $\Pr(X)>0.$
If we have stronger positivity constraints that imply
$\Pr(X) > 0, X \in \X,$ then only the if part of Theorem~\ref{thm:complete-fid} will hold, assuming \(\CC_\V\) and \(\W\) are consistent. That is,
confirming identifiability wrt \(\langle G, \CC_{\V}, \V'\rangle\)
will confirm F-identifiability wrt
\(\langle G,\V, \CC_{\V},\W\rangle\)
but if identifiability is not confirmed then
F-identifiability may still hold. 
This suggests that, to fully exploit the power of Theorem~\ref{thm:complete-fid}, one would need a new class of identifiability algorithms that can operate under the weakest possible positivity constraints.

\section{Conclusion}
\label{sec:conclusion}
We studied the identification of causal effects in the presence 
of a particular type of knowledge called functional dependencies. This augments earlier works that considered other types of knowledge such as context-specific independence.
Our contributions include formalizing the notion of functional identifiability; the introduction of an  operation for eliminating functional variables from a causal graph that comes with stronger guarantees compared to earlier elimination methods; and the employment (under some conditions) of existing algorithms, such as the ID algorithm, for testing functional identifiability and for obtaining identifying formulas.
We also provided a complete reduction of functional identifiability to
classical identifiability under very weak positivity constraints, and showed how our results can be used
to reduce the number of 
variables needed in observational data. 

\section*{Acknowledgements}
We wish to thank Scott Mueller and Jin Tian for providing valuable feedback on an earlier version of this paper.
This work has been partially supported by ONR grant N000142212501.

\bibliography{reference}

\newpage

\appendix

\section{More On Projection and ID Algorithm}
\label{app:proj-id}
As mentioned in the main paper, the project-ID algorithm involves two steps: the projection operation and the ID algorithm. We will review more technical details of each step in this section.
\subsection{Projection}
\label{app:projection}
The projection~\citep{Verma93,uai/TianP02b,TianPearl03} of \(G\) 
onto \(\V\) constructs a new DAG \(G'\) over variables \(\V\) 
as follows. Initially, 
DAG \(G'\) contains variables $\V$ but no edges.
Then for every pair of variables \(X, Y \in \V\),
an edge is added from \(X\) to \(Y\) to \(G'\)  
if \(X\) is a parent of \(Y\) in \(G\), or 
if there exists a directed path from \(X\) to \(Y\) in $G$ such that none of the internal nodes on the path is in \(\V\).
A bidirected edge \(X \dashleftarrow\dashrightarrow Y\) is further added between every pair of variables
\(X\) and \(Y\) in \(G'\) if there exists a divergent path\footnote{A divergent path between \(X\) and \(Y\) is a path in the form of \(X \leftarrow \cdots \leftarrow U \rightarrow \cdots \rightarrow Y\).} between \(X\) and \(Y\) in \(G\) such that 
none of the internal nodes on the path is in \(\V\).
For example, the projection of the DAG in Figure~\ref{sfig:mut-ex1} onto \(A,C,D,X_1,X_2,Y\) yields 
Figure~\ref{sfig:mut-ex3}. A bidirected edge \(X \dashleftarrow\dashrightarrow Y\)
is compact notation for $X \leftarrow H \rightarrow Y$
where $H$ is an auxiliary hidden variable. Hence, the projected DAG in Figure~\ref{sfig:mut-ex3} can be interpreted as a classical DAG but with additional, hidden root variables.

The projection operation is guaranteed to produce a DAG \(G'\) in which hidden variables are all roots and each has exactly two children. Graphs that satisfy this property are called \emph{semi-Markovian} and can be fed as inputs to the ID algorithm for testing identifiability. Moreover, projection preserves some 
properties of $G$, such as d-separation among $\V$~\citep{Verma93},
which guarantees that identifiabilty is preserved when working with $G'$ instead of $G$~\citep{uai/TianP02b}.

\subsection{ID Algorithm}
\label{app:id-positivity}
After obtaining a projected causal graph, we can apply the 
ID algorithm for identifying causal effects~\citep{aaai/ShpitserP06,jmlr/ShpitserP08}.
The algorithm returns either an identifying formula if the causal
effect is identifiable or FAIL otherwise. The algorithm is sound since each line of the algorithm can be proved with basic probability rules and do-calculus. The algorithm is also complete since a causal graph must contain a \emph{hedge}, a graphical structure that induces the unidentifiability, if the algorithm returns FAIL. 
The algorithm, however, is only sound and complete under certain positivity constraints, which are weaker but more subtle than strict positivity (\(\Pr(\V) > 0\)). 

The positivity constraints required by ID can be summarized as follows ($\X$ are the treatment variables): (1)~$\Pr(\X|\P)>0$ where $\P = \text{Parents}(\X)\setminus \X$, and 
(2)~\(\Pr(\Z) > 0\) for all quantities \(\Pr(\S | \Z)\) considered by the ID algorithm. 
The second constraint depends on a particular run of the ID algorithm and can be interpreted as follows. First, if the ID algorithm returns FAIL then the causal
effect is not identifiable even under the strict positivity constraint \(\Pr(\V) > 0\). However, if the
ID algorithm returns ``identifiable'' then the causal effect is identifiable under the above constraints which are now well defined given a particular run of the ID algorithm. We illustrate with an example.

\begin{wrapfigure}[4]{r}{0.14\linewidth}
\centering
\vspace{-6mm}
\begin{tikzpicture}[->=stealth,auto,scale=\dagsize,transform shape]
\node[font=\huge] (A) at (1,-1) {$A$};
\node[font=\huge] (B) at (2.5,-1) {$B$};
\node[font=\huge] (U1) at (1.75,0.5) {$U_1$};
\node[font=\huge] (X) at (0,-3) {$X$};
\node[font=\huge] (U2) at (-0.5,-1) {$U_2$};
\node[font=\huge] (C) at (2,-3) {$C$};
\node[font=\huge] (Y) at (4,-3) {$Y$};
\node[font=\huge] (U3) at (4,-1.3) {$U_3$};

\path (A) edge (X);
\path (A) edge (C);
\path (B) edge (C);
\path (C) edge (Y);
\path (X) edge (C);
\path (U1) edge (A);
\path (U1) edge (B);
\path (U2) edge (A);
\path (U2) edge (X);
\path (U3) edge (B);
\path (U3) edge (Y);
\end{tikzpicture}
\end{wrapfigure}
Consider the causal graph on the right which contains observed variables \(\{A,B,C,X,Y\}\). Suppose we are interested in
the causal effect of \(X\) on \(Y\), applying the ID algorithm returns the following identifying formula:
\(\Pr_x(y)\)\(=\sum_{abc}\)\(\Pr(c|a,b,x)\)\(\sum_{x'}\Pr(y|a,b,c,x')\)\(\Pr(a,b,x').\) The positivity constraint extracted from this run of the algorithm is 
\(\Pr(A,B,C,X) > 0.\) That is, we can only safely declare the causal effect identifiable based on the ID algorithm if this positivity constraint is satisfied.

\section{Functional Elimination for BNs}
\label{app:felim-bn}
The functional elimination in Definition~\ref{def:felim} removes
functional variables from a DAG \(G\) and yields another DAG \(G'\) 
on the remaining variables.
We have shown in the main paper that the functional elimination
preserves the D-separations. 
Here we extend the notion of functional elimination to
Bayesian networks (BNs) which contain not only a causal graph (DAG) but also CPTs.
We show that the (extended) functional elimination preserves the 
marginal distribution on the remaining variables.
That is, given any BN with the causal graph \(G\),
we can construct another BN with the causal graph \(G'\)
such that the 
two BNs induce a same distribution on the variables in \(G'\),
where \(G'\) is the result of eliminating functional variables from \(G\). 
Moreover, we show that the functional elimination operation
further preserves the causal effects, 
which makes it applicable to the causal identification.
This extended version of functional elimination and the
corresponding results will be used for the proofs in 
Appendix~\ref{app:proofs}.

Recall that a BN \(\tuple{G}{\FF}\) contains a causal graph
\(G\) and a set of CPTs \(\FF\).
We first extend the definition of functional elimination
(Definition~\ref{def:felim}) from DAGs to BNs.

\begin{definition}
\label{def:felim-bn}
The \underline{functional elimination} of a functional variable $X$ from a BN $\langle G,\FF\rangle$ yields another BN $\langle G',\FF'\rangle$ obtained as follows.
The DAG \(G'\) is obtained from \(G\) by Definition~\ref{def:felim}.
For each child \(C\) of \(X\),
its CPT in \(\FF'\) is \((\sum_{X} f_X f_C)\)
where \(f_X\), \(f_C\) are the corresponding CPTs in \(\FF\).
\end{definition}

We first show that the new CPTs produced by Definition~\ref{def:felim-bn} are well-defined. 

\begin{proposition}
\label{thm:felim-bn-well-def}
Let \(f_X\) and \(f_Y\) be the CPTs for variables \(X\)
and \(Y\) in a BN, then \((\sum_X f_X f_Y)\) is a valid CPT for \(Y\).
\end{proposition}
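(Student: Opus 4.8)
The plan is to verify directly that $g_Y := \sum_X f_X f_Y$ satisfies the two defining properties of a conditional probability table for $Y$. First I would fix notation: write $\P = \Pa(X)$ for the parents of $X$ and recall that, in the elimination setting of Definition~\ref{def:felim-bn}, $X$ is a parent of $Y$, so $\Pa(Y) = \{X\} \cup \mathbf{Q}$ for some set $\mathbf{Q}$ of remaining parents. Then $f_X$ is a function of $X,\P$ and $f_Y$ is a function of $Y,X,\mathbf{Q}$, so their product summed over $X$ is a function $g_Y$ of $Y$ together with the merged parent set $\P \cup \mathbf{Q}$ --- exactly the parents that functional elimination assigns to $Y$. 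A valid CPT for $Y$ requires (i) every entry to lie in $[0,1]$, and (ii) for each instantiation of $\P \cup \mathbf{Q}$, the entries to sum to one over the states of $Y$.

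Non-negativity of each entry of $g_Y$ is immediate, since $f_X$ and $f_Y$ take values in $[0,1]$ and $g_Y$ is a sum of products of such values. For the normalization I would fix an instantiation $\p,\mathbf{q}$ of the parents and compute
\[
\sum_{y} g_Y(y,\p,\mathbf{q}) = \sum_{y}\sum_{x} f_X(x,\p)\, f_Y(y,x,\mathbf{q}) = \sum_{x} f_X(x,\p)\sum_{y} f_Y(y,x,\mathbf{q}) = \sum_{x} f_X(x,\p) = 1,
\]
where the second equality swaps the two finite sums and factors out $f_X(x,\p)$ (which does not depend on $y$), the third uses $\sum_y f_Y(y,x,\mathbf{q}) = 1$ (validity of $f_Y$ for each $x,\mathbf{q}$), and the last uses $\sum_x f_X(x,\p) = 1$ (validity of $f_X$ for each $\p$). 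Property (i) is then completed by observing that a non-negative function whose entries sum to one over $Y$ automatically has all entries bounded by one.

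Two remarks on scope and difficulty. The argument relies only on $f_X$ and $f_Y$ being valid CPTs and on $X$ being a parent of $Y$; it does not use that $X$ is functional, which is reassuring, since the claim is a purely algebraic fact about marginalizing a parent out of a product of CPTs (and the degenerate case $X \notin \Pa(Y)$ is trivial, as then $g_Y = f_Y$). The only point requiring a little care is the bookkeeping of parent sets when $X$ and $Y$ share parents (so $\P \cap \mathbf{Q} \neq \emptyset$): I would treat $g_Y$ as a function over the single merged set $\P \cup \mathbf{Q}$ and instantiate it consistently, so that no variable is summed or counted twice. Beyond this bookkeeping there is no genuine obstacle --- the heart of the proof is the elementary exchange-of-summation computation displayed above.
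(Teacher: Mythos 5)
Your proposal is correct and follows essentially the same route as the paper's proof: a case split on whether $X$ is a parent of $Y$ (the non-child case being trivial since $\sum_X f_X = 1$), followed by the same exchange-of-summation computation $\sum_y \sum_x f_X f_Y = \sum_x f_X \sum_y f_Y = 1$ for each fixed parent instantiation. Your additional remarks (explicit non-negativity, merged-parent bookkeeping, and the observation that functionality of $X$ is never used) are sound but do not change the substance of the argument.
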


The next proposition shows that functional elimination 
preserves the functional dependencies.
\begin{proposition}
\label{thm:felim-bn-fvars}
Let $\MM'$ be the BN resulting from functionally eliminating
a functional variable from a BN $\MM$.
Then each variable (from \(\MM'\)) is functional in \(\MM'\) if it is functional in \(\MM\).
\end{proposition}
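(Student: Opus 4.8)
The plan is to handle the single elimination stated in the proposition by partitioning the variables of $\MM'$ according to whether their CPT is changed by the operation. Write $\MM = \langle G,\FF\rangle$, let $X$ be the eliminated functional variable with parents $\P_X$, and let $\MM' = \langle G',\FF'\rangle$ be the result. By Definition~\ref{def:felim-bn}, the elimination adds edges only from $\P_X$ into the children of $X$ and then deletes $X$, so the only variables whose parent set (hence CPT) can change are the children of $X$; every other variable $V \neq X$ keeps both its parents and its CPT unchanged. For such a $V$, functionality in $\MM'$ is immediate from functionality in $\MM$, so the entire content lies in the children of $X$.

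First I would fix a child $C$ of $X$ that is functional in $\MM$ and record the two determinism facts available: $f_C$ is a $0/1$ table (because $C$ is functional in $\MM$) and $f_X$ is a $0/1$ table (because the eliminated variable $X$ is functional). Writing $\mathbf{Q} = \P_C \setminus \{X\}$ for the non-$X$ parents of $C$, its new parent set in $G'$ is $\mathbf{Q}\cup\P_X$ and its new CPT, by Definition~\ref{def:felim-bn}, is the table $f'_C(c,\p_X,\mathbf{q}) = \sum_x f_X(x,\p_X)\,f_C(c,x,\mathbf{q})$; that this is a legitimate CPT for $C$ is already guaranteed by Proposition~\ref{thm:felim-bn-well-def}, so I only need to check that its entries lie in $\{0,1\}$.

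The key step is to collapse the sum using determinism of $f_X$: for each fixed $\p_X$ there is a unique state $x^\star$ with $f_X(x^\star,\p_X)=1$ while all other summands vanish, so $f'_C(c,\p_X,\mathbf{q}) = f_C(c,x^\star,\mathbf{q})$, which lies in $\{0,1\}$ because $f_C$ is a $0/1$ table. Hence $f'_C$ is deterministic and $C$ is functional in $\MM'$, finishing the proof. I do not anticipate a real obstacle: the whole argument reduces to this one-line collapse, and it uses only that the eliminated variable's CPT is $0/1$-valued, not any specific function it encodes --- which matches the qualitative reading of functional dependencies adopted throughout. The one point to state with care is the bookkeeping from Definition~\ref{def:felim}, namely that no variable other than a child of $X$ acquires a new parent (or loses one), so that the untouched-CPT case is genuinely untouched.
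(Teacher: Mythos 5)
Your proof is correct and follows essentially the same route as the paper's: unaffected variables keep their CPTs, and a functional child $C$ of the eliminated variable $X$ gets the new CPT $\sum_X f_X f_C$, which is $0/1$-valued. In fact, your explicit collapse step (for each $\p_X$ the deterministic $f_X$ selects a unique $x^\star$, so the sum equals the single entry $f_C(c,x^\star,\mathbf{q})$) supplies the justification that the paper states only in passing when it asserts the sum is $0/1$ because both factors are functional.
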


The next theorem shows that the order of functional elimination does not matter.
\begin{proposition}
\label{thm:felim-bn-order}
Let \(\MM\) be a BN and \(\pi_1\), \(\pi_2\) be two variable orders over a set of functional variables \(\W\).
Then functionally eliminating \(\W\) from \(\MM\) according to \(\pi_1\) and \(\pi_2\) yield the same BN.
\end{proposition}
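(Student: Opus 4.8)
The plan is to reduce the statement to a single \emph{swap lemma} and then isolate the one computation that genuinely uses functionality. First I would observe that, by Propositions~\ref{thm:felim-bn-well-def} and~\ref{thm:felim-bn-fvars}, functionally eliminating a functional variable always produces a valid BN in which the remaining variables of $\W$ are still functional; hence the elimination can be carried out in any order, and every intermediate BN is well defined. Since any permutation of $\W$ is a composition of adjacent transpositions, and since two orders that agree on a common prefix yield the same intermediate BN up to the first point of disagreement, it suffices to prove the swap lemma: for any two functional variables $X, Y$ in a BN $\MM$, eliminating $X$ then $Y$ yields the same BN as eliminating $Y$ then $X$. Given this, I would connect $\pi_1$ to $\pi_2$ by a chain of adjacent transpositions, applying the lemma at the relevant two-variable stage of each step, so that the intermediate BNs agree from that point onward and the subsequent eliminations coincide.

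To prove the swap lemma I would treat the DAG and the CPTs separately. For the DAG, I would track parents and children directly via Definition~\ref{def:felim} in two cases: (i) there is no edge between $X$ and $Y$, and (ii) $X$ is a parent of $Y$ (the symmetric case being identical). In case (ii), eliminating $X$ first gives $Y$ the parent set $(\Pa(Y)\setminus\{X\})\cup\Pa(X)$ before $Y$ is removed, whereas eliminating $Y$ first promotes every child of $Y$ to a child of $X$; a short check shows that both orders endow each surviving child $C$ of $Y$ with exactly the parents $\Pa(X)\cup(\Pa(Y)\setminus\{X\})$ together with its own surviving parents, so the resulting DAGs coincide. For the CPTs, the only affected tables are those of the surviving children of $X$ and $Y$, and for each such child both orders should produce a table of the form $\sum_{X,Y} f_X f_Y f_C$ (times the unaffected factors); when $X$ and $Y$ do not interact this is immediate from commutativity of products and the interchange of finite sums $\sum_X\sum_Y=\sum_Y\sum_X$.

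The hard part is the CPT computation in case (ii) when $X$ has more than one child. Here Definition~\ref{def:felim-bn} copies $f_X$ into the table of \emph{each} child of $X$, so eliminating $X$ first and then $Y$ (itself a child of $X$) yields a product containing two independent copies $\sum_{x} f_X(x,\cdot)$ and $\sum_{x'} f_X(x',\cdot)$, which does not literally match the single-copy expression produced by the other order. This is precisely where functionality is indispensable: since $X$ is functional, $f_X(\cdot,\p)$ is a $0/1$ indicator selecting a unique value for each parent instantiation $\p$, so $f_X(x,\p)f_X(x',\p)$ vanishes unless $x=x'$ and $f_X$ is idempotent along $X$. I would invoke this to collapse the duplicated summation, establishing $\big(\sum_x f_X f_Y\big)\big(\sum_{x'} f_X f_C\big)=\sum_X f_X f_Y f_C$, after which both orders yield the common value $\sum_{X,Y} f_X f_Y f_C$. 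Proving this idempotence identity cleanly, and confirming that the scopes of the intermediate tables match those predicted by the DAG computation, is the main technical obstacle; the remaining sum-interchange arguments are routine.
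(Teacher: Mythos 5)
Your proposal is correct and follows essentially the same route as the paper's proof: reduce to adjacent transpositions, check that eliminations of a common prefix coincide, and then verify the swap case-by-case on both the DAG and the CPTs, with the parent--child case being the only nontrivial one. Indeed, your explicit justification of the collapse \(\bigl(\sum_{x} f_X f_Y\bigr)\bigl(\sum_{x'} f_X f_C\bigr)=\sum_{X} f_X f_Y f_C\) via the 0/1 selector property of the functional CPT \(f_X\) supplies the one step that the paper's own proof asserts without comment, namely the equality \(\sum_Y \sum_X f_X f_C f_Y = \sum_Y \bigl(\sum_X f_X f_C\bigr)\bigl(\sum_X f_X f_Y\bigr)\), which is false for general (non-functional) CPTs.
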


The next result shows that eliminating functional variables
preserves the marginal distribution. 
\begin{theorem}
\label{thm:felim-bn-sound}
Consider a BN \(\MM\) which induces $\Pr$.
Let \(\MM'\) be the result of functionally eliminating a set of functional variables $\W$ from \(\MM\) which induces \(\Pr'\).
Then $\Pr' = \sum_{\W} \Pr$.
\end{theorem}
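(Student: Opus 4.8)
The plan is to reduce the set-elimination claim to the single-variable case and then settle the single-variable case by a direct factor computation in which functionality is the essential ingredient.

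First I would reduce to eliminating one variable at a time. By Proposition~\ref{thm:felim-bn-order} the BN $\MM'$ does not depend on the order in which the variables $\W$ are eliminated, so I may fix any order $W_1, \dots, W_k$ of $\W$ and eliminate them one by one. By Proposition~\ref{thm:felim-bn-fvars}, each $W_{i+1}$ remains functional in the BN obtained after eliminating $W_1, \dots, W_i$, so every step is again a \emph{functional} elimination to which the single-variable claim applies. Since $\sum_{W_1}\cdots\sum_{W_k} = \sum_{\W}$, an induction on $k = |\W|$ then yields $\Pr' = \sum_{\W}\Pr$, provided the base case (a single functional variable) is established.

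Next, the single-variable case. Let $X$ be the eliminated functional variable, with CPT $f_X$, and let $\MM'$ carry the CPTs $\FF'$ of Definition~\ref{def:felim-bn}. Since a BN induces the product of its CPTs, I would write $\Pr = f_X \prod_{V\neq X} f_V$ and $\Pr' = \prod_{V\neq X} f'_V$, where $f'_C = \sum_X f_X f_C$ for each child $C$ of $X$ and $f'_V = f_V$ otherwise (these are valid CPTs by Proposition~\ref{thm:felim-bn-well-def}). Each non-child $V$ of $X$ has $X$ outside its parent set, so $f_V$ does not depend on $X$ and pulls out of $\sum_X$. Thus both $\sum_X\Pr$ and $\Pr'$ share the common factor $\prod_{V\text{ non-child}} f_V$, and the claim $\sum_X\Pr = \Pr'$ reduces to the factor identity
\[
\sum_X f_X \prod_{C} f_C \;=\; \prod_{C} \Bigl( \sum_X f_X f_C \Bigr),
\]
where both products range over the children $C$ of $X$.

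The main obstacle, and the only place where functionality is used, is this distributivity identity, which is false for a general (non-deterministic) $f_X$. I would prove it from the fact that $f_X$ is deterministic: for each instantiation $\p$ of the parents of $X$ there is a unique value $g(\p)$ with $f_X(g(\p),\p)=1$ and $f_X(x,\p)=0$ for $x\neq g(\p)$. Hence on the left the sum over $X$ selects only the term $X=g(\p)$, giving $\prod_C f_C$ evaluated at $X=g(\p)$; and on the right each inner sum independently selects the same value $X=g(\p)$, giving the same product. Both sides therefore equal $\prod_C f_C$ evaluated at $X = g(\p)$, which establishes the identity and completes the single-variable case, and hence (by the induction above) the general statement.
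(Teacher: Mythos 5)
Your proposal is correct, and its skeleton matches the paper's: reduce to eliminating a single functional variable $X$ (the set case following by induction), pull the CPTs of non-children out of $\sum_X$, and establish that the sum over $X$ distributes across the children's CPTs. The difference lies in how that key identity is justified. The paper proves it by invoking the CPT-replication machinery of Darwiche (ECAI 2020): since $f_X$ is $0/1$-valued it is idempotent, so the product of CPTs is unchanged if $f_X$ is replaced by $k$ copies, one paired with each child CPT $f_{C_i}$; a cited corollary of that work then pushes $\sum_X$ into each pair, giving $\HH \cdot \prod_i \bigl( \sum_X f_X f_{C_i} \bigr)$ where $\HH$ collects the $X$-free CPTs. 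You instead prove the identity $\sum_X f_X \prod_C f_C = \prod_C \bigl( \sum_X f_X f_C \bigr)$ directly from determinism: for each parent instantiation $\p$, both sides select the unique state $g(\p)$ with $f_X(g(\p),\p)=1$ and hence both equal $\prod_C f_C$ evaluated there. The two arguments rest on the same underlying fact, but yours is self-contained where the paper's leans on external results; you are also more explicit about the reduction step, invoking order-invariance (Proposition~\ref{thm:felim-bn-order}) and preservation of functionality (Proposition~\ref{thm:felim-bn-fvars}), which the paper compresses into ``it suffices to show the single-variable case.''
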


One key property of functional elimination is that it preserves
the interventional distribution over the remaining variables.
This property allows us to eliminate functional variables from a causal graph and estimate the causal effects in the resulting graph. 

\begin{theorem}
\label{thm:fid-invariant}
Let \(\MM'\) be the BN over variables \(\V\) resulting 
from functionally eliminating a set of functional variables \(\W\) from a BN \(\MM\).
Then \(\MM'\) and \(\MM\) attain the same \(\Pr_{\x}(\V)\) for any
\(\X \subseteq \V\).
\end{theorem}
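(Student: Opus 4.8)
The plan is to reduce this interventional statement to the marginal statement already established in Theorem~\ref{thm:felim-bn-sound}, by exploiting the fact that an interventional distribution is, by definition, the ordinary distribution induced by a mutilated BN. Write $\MM_{\x}$ for the mutilation of $\MM$ with respect to $do(\x)$ and $\MM'_{\x}$ for that of $\MM'$, and let $\Pr_{\x}$ and $\Pr'_{\x}$ denote the distributions they induce. The first thing I would record is the disjointness fact driving everything: since $\MM'$ is over $\V$ and is obtained by eliminating $\W$, the sets $\V$ and $\W$ are disjoint, so $\X \subseteq \V$ forces $\X \cap \W = \emptyset$. In particular, mutilation (which only rewrites the CPTs of variables in $\X$ and deletes edges into $\X$) never touches the CPTs of the variables in $\W$, so those variables remain functional in $\MM_{\x}$, and no $W \in \W$ is ever a treatment.

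Because $\MM'$ is over $\V$, the goal $\Pr'_{\x} = \Pr_{\x}(\V) = \sum_{\W}\Pr_{\x}$ is an equality of distributions over $\V$. Since $\W$ is still functional in $\MM_{\x}$, applying Theorem~\ref{thm:felim-bn-sound} to $\MM_{\x}$ shows that functionally eliminating $\W$ from $\MM_{\x}$ yields a BN inducing exactly $\sum_{\W}\Pr_{\x} = \Pr_{\x}(\V)$. Hence it suffices to show that this BN is the same as $\MM'_{\x}$, i.e.\ that functional elimination of $\W$ and mutilation with respect to $do(\x)$ commute as operations on BNs; the two then induce the same distribution and the theorem follows.

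I would prove commutation first for a single functional variable $W \in \W$ (recall $W \notin \X$), checking that the two orders yield identical DAGs and identical CPTs, and then lift to all of $\W$ by induction, using Proposition~\ref{thm:felim-bn-fvars} to keep the remaining variables functional after each elimination and Proposition~\ref{thm:felim-bn-order} to fix an elimination order. For the DAG, the only nontrivial case is a child $C$ of $W$ with $C \in \X$: eliminating then mutilating adds the edges $\Pa(W)\!\to\! C$ and then deletes all edges into $C$, whereas mutilating then eliminating first deletes $W \!\to\! C$ (so $C$ is no longer a child of $W$) and therefore adds no new edge into $C$; both orders leave $C$ with no incoming edges, and every other edge change is clearly independent of the order. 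For the CPTs I would go variable by variable: a variable outside $\X$ and outside the children of $W$ is untouched by both operations; a non-treatment child of $W$ receives $\sum_{W} f_W f_C$ under either order, since mutilation leaves it alone ($C \notin \X$) and $f_W$ is itself unchanged ($W \notin \X$); a treatment outside the children of $W$ ends up with its $do(\x)$ indicator CPT under both orders; and a treatment child $C \in \X$ of $W$ also ends up with its indicator under both orders, because in the mutilate-first order $C$ is no longer a child of $W$ and so is not updated by elimination, while in the eliminate-first order the freshly computed $\sum_{W} f_W f_C$ is immediately overwritten by the indicator.

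The step I expect to be the main obstacle is precisely this last CPT bookkeeping for a treatment child of $W$, where the ``fold $f_W$ into the child'' rule of Definition~\ref{def:felim-bn} interacts with the ``overwrite by an indicator'' rule of mutilation: the subtlety is that whether $C$ still counts as a child of $W$ depends on whether mutilation has already been applied, and one must verify that both bookkeeping routes collapse to the same indicator CPT. Once the single-variable commutation is in hand, the induction over $\W$ and the appeal to Theorem~\ref{thm:felim-bn-sound} are routine, and the disjointness $\X \cap \W = \emptyset$ guarantees the hypotheses of those intermediate results remain valid at every step.
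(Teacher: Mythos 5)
Your proposal is correct and follows essentially the same route as the paper: the paper's Lemma~\ref{lem:fid-invariant1} is exactly your single-variable commutation claim (that functional elimination of a non-treatment functional variable commutes with mutilation, with the treatment children of the eliminated variable being the only delicate case), which the paper then applies inductively over \(\W\) before invoking Theorem~\ref{thm:felim-bn-sound} on the mutilated BN. Your additional bookkeeping via Propositions~\ref{thm:felim-bn-fvars} and~\ref{thm:felim-bn-order}, and your explicit observation that \(\X \cap \W = \emptyset\) keeps \(\W\) functional after mutilation, are implicit in the paper's argument and entirely consistent with it.
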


\section{Proofs}
\label{app:proofs}
The proofs of the results will be ordered slightly different from the order they appear in the main body of the paper.

\subsection*{Proof of Proposition~\ref{prop:treatment-positivity}}
\begin{proof}
Our goal is to construct two different parameterizations \(\FF^1\) and \(\FF^2\) that induce the same \(\Pr(\V)\) but different \(\Pr_{\x}(\y)\). 
This is done by first creating a parameterization \(\FF\) which contains strictly positive CPTs for all variables, and then constructing \(\FF^1\) and \(\FF^2\) based on \(\FF\).

Let \(\PP\) be the directed path from \(X=X_i\) to \(Y\), denoted \(X \rightarrow Z \rightarrow \cdots \rightarrow Y\) which does not contain any treatment variables other than \(X\). Let \(\P_X\) be the parents of \(X\) in \(G\). For each node \(M\) on the path, let \(\P_M\) be the parents of \(M\) except for the parent that lies on \(\PP\). Moreover, for each variable \(M\) on \(\PP\), we will only modify the conditional probability for a single state \(m^*\) of \(M\), where \(x^* \in \x\) is the treated state of \(X\). Let \(\epsilon\) be an arbitrarily small constant (close to 0), we next show the modifications for the CPTs in \(\FF^1\).

\[  f^1(x|\p_X) = 
  \begin{cases}
    0 & \text{if } x = x^* \\
    1 / (|X|-1), & \text{otherwise }\\
  \end{cases}
\]

\[  f^1(z|x,\p_Z) = 
  \begin{cases}
    1 - \epsilon, & \text{if } x = x^*, z = z^*\\
    \epsilon / (|Z|-1), & \text{if } x \neq x^*, z \neq z^* \\
    \epsilon, & \text{if } x \neq x^*, z = z^*\\
    (1-\epsilon)/(|Z|-1), & \text{if } x \neq x^*, z \neq z^*\\
  \end{cases}
\]

For every variable \(T \notin \{X,Z\} \) which has parent \(Q\) on the path \(\PP\), we assign
\[  f^1(t|q,\p_T) = 
  \begin{cases}
    1 - \epsilon, & \text{if } q = q^*, t = t^*\\
    \epsilon / (|T|-1), & \text{if } q \neq q^*, t \neq t^* \\
    \epsilon, & \text{if } q \neq q^*, t = t^*\\
    (1-\epsilon)/(|T|-1), & \text{if } q \neq q^*, t \neq t^*\\
  \end{cases}
\]

We assign the same CPTs for \(X\) and all variables \(T \notin \{X,Z\} \) but a different CPT for \(Z\) in \(\FF^2\).
\[  f^2(z|x,\p_Z) = 
  \begin{cases}
    \epsilon, & \text{if } x = x^*, z = z^*\\
    (1 - \epsilon) / (|Z|-1), & \text{if } x \neq x^*, z \neq z^* \\
    \epsilon, & \text{if } x \neq x^*, z = z^*\\
    (1-\epsilon)/(|Z|-1), & \text{if } x \neq x^*, z \neq z^*\\
  \end{cases}
\]

The two parameterizations \(\FF^1\) and \(\FF^2\) induce the same \(\Pr(\V)\) where \(\Pr(\v) = 0\) if \(x^* \in \v\) and \(\Pr(\v) > 0\) otherwise. We next show that the parameterization satisifies each positivity constraint \(\Pr(\S | \Z)\) as long as it does not imply \(\Pr(X) > 0\). We first show that \(X \in \S\) implies \(\Pr(X) > 0\). This is because \(\Pr(\S) = \sum_{\z} \Pr(\S | \z) \Pr(\z)\) and there must exist some instantiation \(\z\) where \(\Pr(\z) > 0\) and \(\Pr(\S | \z) > 0\) by constraint. This implies \(\Pr(\S) > 0\) and therefore \(\Pr(X) > 0\). Hence, \(\CC_{\V}\) does not contain such constraint \(\Pr(\S | \Z)\) where \(X \in \S\).
Suppose \(X \in \Z\), then \(\Pr(\z) > 0\) if and only if \(x^* \notin \z\). Moreover, since \(\Pr(\v) > 0\) whenever \(x^* \notin \v\), it is guaranteed that \(\Pr(\S, \z) > 0\) when \(\Pr(\z) > 0\), which implies \(\Pr(\S|\Z) > 0\).
Finally, suppose \(X \notin (\S \cup \Z)\), then \(\Pr(\S,\Z) = \sum_x \Pr(\S, \Z, x) > 0\). Hence, the positivity constraint is satisfied by both parameterizations.
By construction, \(\Pr^1\) and \(\Pr^2\) induce different values for the causal effect \(\Pr_{\x}(\y)\) since the probability of \(Y=y^*\) under the treatment \(do(X=x^*)\) will be different for the two parameterizations. 
\end{proof}

\begin{proposition}
\label{prop:consistency-characterization}
Let $G$ be a causal graph and $\V$
be its observed variables.
A set of functional variables \(\W\) is consistent with positivity constraints \(\CC_\V\) if no single constraint in \(\CC_\V\) mentions both \(W \in \W\) and a set \(\H_W\) that intercepts
all directed paths from non-functional variables to \(W\).
\end{proposition}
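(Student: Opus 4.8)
The plan is to prove consistency directly from Definition~\ref{def:pos-consistent-separable} by exhibiting a single parameterization of $G$ under which the variables $\W$ are functional and every constraint in $\CC_\V$ holds. I would give all non-functional variables strictly positive (say uniform) CPTs, and give each $W\in\W$ a \emph{deterministic} CPT chosen in topological order so that $W$ is a surjective function of its parents. Since every root is non-functional (by the paper's standing assumption), all randomness enters at non-functional variables; together with positivity of the non-functional CPTs, this makes the support of the induced $\Pr$ exactly the set of \emph{functionally consistent} full instantiations $\v$, i.e.\ those in which $w=f_W(\p_W)$ for every $W\in\W$. This observation reduces each positivity requirement to a purely combinatorial reachability statement: by Definition~\ref{def:pos-canonical}, $\Pr(\S|\Z)>0$ holds iff every functionally consistent instantiation $\z$ of $\Z$ can be extended to a functionally consistent full instantiation that also agrees with any prescribed target $\s$ on $\S$.

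Next I would fix one constraint $\Pr(\S|\Z)>0$ together with $\s$ and a reachable $\z$, and build the required extension top-down in topological order. Non-functional variables in $\S\cup\Z$ can be set directly to their target values, which is always legal because their CPTs are positive; the entire burden falls on the functional variables $W\in\S\cup\Z$, whose values are forced by their parents and must nonetheless match the target. Here the hypothesis enters: because the constraint mentions $W$, it does not mention the intercepting set $\H_W$, so $\H_W\cap(\S\cup\Z)=\emptyset$. Since $\H_W$ cuts off all non-functional influence on $W$, the value of $W$ is a function of $\H_W$, and having chosen this composite map surjective, I can drive $W$ to its target by varying the free upstream inputs that feed $\H_W$. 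As those inputs lie outside $\S\cup\Z$, they are not pinned by $\s$ or $\z$, so $W$ is realized without perturbing any value the constraint fixes.

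The hard part will be turning ``the shield is unmentioned'' into ``the shield provides genuine, non-interfering, free control of $W$.'' Two difficulties must be dispatched. First, $\H_W$ being disjoint from the constraint does not by itself make its configuration freely attainable, since a pinned variable mentioned by the constraint could lie upstream of $\H_W$ and force it; I would address this by selecting $\H_W$ as a \emph{non-functional} cut disjoint from the constraint (the last non-functional node on each directed path into $W$ yields such an intercepting set), whose members are individually settable because each has a positive CPT. Second, and most delicate, is simultaneity: one constraint may pin several functional variables $W_1,\dots,W_k$ whose shields and intervening regions overlap or share ancestors, and the \emph{same} functional CPTs must satisfy \emph{every} constraint at once; I would handle this by processing the $W_i$ along a topological order and allotting each an independent control input so that fixing $W_i$ only perturbs variables strictly downstream of the controls already used. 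A final, routine side-condition is that each functional CPT can indeed be made surjective, which is possible exactly when the parents of each $W$ admit at least $|W|$ joint configurations; this is implicit in simultaneously demanding that $W$ be functional and that $\CC_\V$ force $W$ to realize all of its states.
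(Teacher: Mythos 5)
Your overall strategy matches the paper's: build a single witness parameterization with strictly positive (uniform) CPTs on non-functional variables and deterministic CPTs on \(\W\), note that the support then consists exactly of the functionally consistent instantiations, and realize pinned values of functional variables by steering them through directed paths that the constraint does not intercept. The genuine gap is at the step you yourself call ``most delicate,'' and your fix for it does not work. First, per-CPT surjectivity is not sufficient, and no choice of functions ``in topological order'' over the given state spaces can be sufficient: take a binary non-functional \(A\) with two functional children \(W_1, W_2\) and the constraint \(\Pr(W_1,W_2)>0\). The hypothesis of the proposition holds (the constraint mentions neither \(A\) nor any other intercepting set), yet \((W_1,W_2)\) is a deterministic function of the binary \(A\), so at most two of the four joint states can receive positive probability no matter which surjective functions you pick. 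The proposition is true only because a ``parameterization'' is also free to choose the variables' state spaces; your proof never exercises this freedom --- indeed your closing side-condition about parents admitting at least \(|W|\) configurations shows you are treating cardinalities as fixed, and in that setting the statement you are trying to prove is false. Second, ``allotting each \(W_i\) an independent control input'' assumes exactly what must be constructed: varying a non-functional control perturbs \emph{all} of its descendants, including already-fixed functional variables reachable from it along paths other than the one you chose, and processing the \(W_i\) in topological order gives no protection since the controls are ancestors whose influence is not confined to ``variables strictly downstream of the controls already used.''

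The paper closes this gap with an explicit construction that your sketch is missing: every variable's state space is enlarged to a tuple of binary indicators, one per ``functional descendant path,'' and each functional variable computes each component of its state as the XOR of the corresponding per-path indicators of its parents. Under uniform CPTs on the non-functional variables these per-path bits are independent, and flipping the bit dedicated to one unintercepted path changes only the suffix indicators along that path and the state bit of its endpoint --- nothing else, not even other pinned descendants of the same control variable. This is what makes simultaneous, non-interfering control of several pinned functional variables (and of the pinned indicator components of pinned functional variables themselves, a wrinkle your proposal does not address) provable rather than assumed. Without this or an equivalent construction, your argument does not go through.
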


\begin{proof} [Proof of Proposition~\ref{prop:consistency-characterization}]
We construct a parameterization \(\FF\) and show that the distribution \(\Pr\) induced by \(\FF\) satisfies the \(\CC_{\V}\), which ensures the consistency.
The states of each variable \(V\) are represented in the form of \((s_V, p_1, \dots, p_m)\) where \(s_V\) and \(p_i\) (\(i \in \{1, \dots, m\}\)) are all binary indicator (0 or 1). Specifically, each of the \(p_i\) corresponds to a ``functional descendant paths'' of \(V\) defined as follows: a functional descendant path of \(V\) is a directed path that starts with \(V\) and that all variables on the path (excluding \(V\)) are functional.
Suppose \(V\) does not have any functional descendant paths, then the states of \(V\) is simply represented as \((s_V)\).

We next show how to assign CPTs for each variable in the causal graph \(G\) based on whether the variable is functional. For each non-functional variable, we assign a uniform distribution. For each functional variable \(W\) whose parents are \(T_1, \dots, T_n\) and whose functional descendant paths are \(\PP_1, \dots, \PP_m\), we assign the CPT \(f_W\) as follows:
\begin{equation}
\begin{split}
s_{W} & \gets p^{T_1}_{Ind(T_1,W)} \oplus \cdots \oplus p^{T_n}_{Ind(T_n,W)}\\
p_1 & \gets p^{T_1}_{1,1} \oplus \cdots \oplus p^{T_n}_{n,1}\\
& \cdots \\
p_m & \gets p^{T_1}_{1,m} \oplus \cdots \oplus p^{T_n}_{n,m}\\
\end{split}
\end{equation}
where \(Ind(T_i,W)\) denotes the index assigned to the path \(\{(T_i, W)\}\) (which contains a single edge) in the state of \(T^i\), and \(p^{T_i}_{i,j}\) denotes the indicator in the state of \(T_i\) for the functional descendant path \(\PP'\) that contains the functional descendant path \(\PP_j\), i.e., \(\PP' = \{(T_i, W)\} \cup \PP_{j}.\)

For simplicity, we call the set of variables \(\H_W\) that satisfies the condition in the proposition a ``functional ancestor set'' of \(W\). We show that \(\Pr(\S, \Z) > 0\) for each positivity constraint in the form of \(\Pr(\S | \Z) > 0\). Let \(\W \subseteq \S \cup \Z\) be a subset of functional variables. Since \(\S \cup \Z\) does not contain any functional ancestor set of \(W\) for each \(W \in \W\), it follows that there exist directed paths from a set of non-functional variables \(\AA'_W\) to \(W\) that are unblocked by \(\M = \S \cup \Z \setminus \{W\}\) and contain only functional variables (excluding \(\AA'_W\)). We can further assume that \(\AA'_W\) is chosen such that the set \(\AA_W = \M \cup \AA'_W\) forms a valid functional ancestor set for \(W\). We next show that for any state \(w\) of \(W\) and instantiation \(\m\) of \(\M\), there exists at least one instantiation \(\a\) of \(\AA'_W\) such that \(\Pr(w, \m, \a) > 0\). 

Let \(\P^W\) denote the set of all directed paths from \(\AA_W\) to \(W\) that do not contain \(\AA_W\) (except for the first node on the path). Let \(\P^W_1 \subseteq \P^W\) be the paths that start with a variable in \(\M\), and \(\P^W_2 \subseteq \P^W\) be other paths that start with a variable in \(\AA'_W\). Moreover, for any path \(\PP\), let \(\mathtt{pathval}(\PP)\) be the binary indicator (e.g., $p_1$) for \(\PP\) in the state of \(\PP(0)\) (first variable in \(\PP\)). Since the value assignments for \(\mathtt{pathval}(\PP)\) are independent for different \(\PP\)'s, we can always find some instantiation \(\a \in \AA'_W\) such that the following equality holds given \(w\) and \(\m\):
\[\bigoplus_{\PP_2 \in \P^W_2} \mathtt{pathval}(\PP_2) = s_W \oplus \bigoplus_{\PP_1 \in \P^W_1} \mathtt{pathval}(\PP_1)\]

We next assign values for other path indicators of \(\a\) such that the indicators for the functional descendant paths in the state \(w\) are set correctly.
In particular, for each functional descendant path \(\PP\) of \(W\),
let \(\P\) be the set of functional descendant paths of \(\AA_W\) that do not contain \(\AA_W\) (except for the first node on the path) and that contain \(\PP\) as a sub-path. Let \(\P_1 \subseteq \P\) be the paths that start with a variable in \(\M\), and \(\P_2 \subseteq \P\) be other paths that start with a variable in \(\AA'_W\). Again, since all the indicators for paths in \(\PP\) are independent, we can assign the indicators for \(\a \in \AA'_W\)  such that
\[\bigoplus_{\PP_2 \in \P_2} \mathtt{pathval}(\PP_2) = \mathtt{pathval}(\PP) \oplus \bigoplus_{\PP_1 \in \P_1} \mathtt{pathval}(\PP_1)\]

Finally, we combine the cases for each individual \(W \in \W\) by creating the following set \(\AA_{\W} = \bigcup_{W \in \W} \AA_{W}\). Since all the functional descendant paths we considered for different \(W\)'s are disjoint, we can always find an assignment \(\a\) for \(\AA_{\W}\) that is consistent with the functional dependencies (does not produce any zero probabilities). Consequently, there must exist some full instantiation \((\u,\v)\) compatible with \(\s\), \(\z\), \(\a\) such that \(\Pr(\u, \v) > 0\), which implies \(\Pr(\s, \z) > 0. \)
\end{proof}

\subsection*{Proof of Proposition~\ref{thm:felim-bn-well-def}}
\begin{proof}
Suppose \(Y\) is not a child of \(X\) in the BN, then \(\sum_X f_X f_Y = f_Y (\sum_X f_X)\) which is guaranteed to be a CPT for \(Y\).
Suppose \(Y\) is a child of \(X\).
Let \(\P_X\) denote the parents of \(X\) and \(\P_Y\) denote the parents
of \(Y\) excluding \(X\).
The new factor \(g = \sum_X f_X f_Y\) is defined over \(\P_X \cup \P_Y \cup \{Y\}\). 
Consider each instantiation \(\p_X\) and \(\p_Y\), then
\(\sum_y g(\p_X, \p_Y, y) = \sum_y\sum_x f_X(x|\p_x)f_Y(y|\p_Y,x) = 
\sum_x f_X(x|\p_X) \sum_y f_Y(y|\p_Y,x) = 1\). Hence, \(g\) is a CPT for \(Y\).
\end{proof}

\subsection*{Proof of Proposition~\ref{thm:felim-bn-fvars}}
\begin{proof}
Let \(X\) be the functional variables that is functionally eliminated.
By definition, the elimination only affects the CPTs for the children
of \(X\). 
Hence, any functional variable that is not a child of \(X\) remains functional.
For each child \(C\) of \(X\) that is functional, 
the new CPT \((\sum_X f_X f_C)\) only contains values that are either
0 or 1 since both \(f_X\) and \(f_C\) are functional. 
\end{proof}

\subsection*{Proof of Proposition~\ref{thm:felim-bn-order}}
\begin{proof}
First note that \(\pi_2\) can always be obtained from
\(\pi_1\) by a sequence of ``transpositions'',
where each transposition swaps two adjacent variables in
the first sequence.
Let \(\pi\) be an elimination order and let \(\pi'\) be the elimination
order resulted from swapping \(\pi_i=X\) and \(\pi_{i+1}=Y\) from the \(\pi\),
i.e.,
\begin{center}
\(\pi = (\dots, X, Y, \dots)\) \ \ \ \ \ 
\(\pi' = (\dots, Y, X, \dots)\)
\end{center}
We show functional elimination according to \(\pi\) and \(\pi'\) yield
a same BN, which can be applied inductively to conclude that 
elimination according to \(\pi_1\) and \(\pi_2\) 
yield a same BN.
Since \(\pi\) and \(\pi'\) agree on a same elimination order
up to \(X\),
they yield a same BN before eliminating variables \(X,Y\).
It suffices to show the BNs resulting from 
eliminating \((X,Y)\)
and eliminating \((Y,X)\) are the same. 
Let \(\tuple{G}{\Pr}\) be the BN before eliminating 
variables \(X,Y\).
Suppose \(X\) and \(Y\) do not belong to a same family (which contains a variable and its parents),
the elimination of \(X\) and \(Y\) are independent and the order of elimination does not matter.
Suppose \(X\) and \(Y\) belong to a same family,
then they are either parent and child or co-parents.\footnote{\(X\) and \(Y\) are co-parents if they have a same child.}

WLG, suppose \(X\) is a parent of \(Y\). 
Eliminating \((X,Y)\) and eliminating \((Y,X)\)
yield a same causal graph that is defined as follows.
Each child \(C\) of \(Y\) has parents \(\P_X \cup \P_Y \cup \P_C \setminus \{X,Y\}\), 
and any other child \(C\) of \(X\) has parents \(\P_X \cup \P_C \setminus \{X\}\).
We next consider the CPTs. 
For each common child \(C\) of \(X\) and \(Y\),
its CPT resulting from eliminating \((X,Y)\) is \(f^1_C = \sum_Y (\sum_X f_C f_X) (\sum_X f_Y f_X)\),
and the CPT resulting from eliminating \((Y,X)\) is
\(f^2_C = \sum_X f_X (\sum_Y f_C f_Y)\).
Since \(X\) is a parent of \(Y\), we have \(Y \notin f_X\)
and 
\begin{equation*}
\begin{split}
f^2_C &= \sum_X \sum_Y f_X f_C f_Y = \sum_Y \sum_X f_X f_C f_Y\\
&= \sum_Y (\sum_X f_X f_C)(\sum_X f_X f_Y) = f^1_C\\
\end{split}
\end{equation*}
We next consider the case when \(C\) is a child of \(Y\) but not a child
of \(X\).
The CPT for \(C\) resulting from eliminating \((X,Y)\) is \(f^1_C = \sum_Y f_C (\sum_X f_Y f_X)\),
and the CPT resulting from eliminating \((Y,X)\) is
\(f^2_C = \sum_X f_X (\sum_Y f_C f_Y)\). 
Again, since \(X\) is a parent of \(Y\), we have 
\(Y \notin f_X\) and 
\begin{equation*}
\begin{split}
f^2_C &= \sum_X \sum_Y f_X f_C f_Y = \sum_Y \sum_X f_X f_C f_Y \\
&= \sum_Y f_C (\sum_X f_X f_Y) = f^1_C\\
\end{split}
\end{equation*}
We finally consider the case when \(C\) is a child of \(X\) but not a child
of \(Y\). 
Regardless of the order on \(X\) and \(Y\),
the CPT for \(C\) resulting from eliminating \(X\) and \(Y\) is \((\sum_X f_X f_C)\).

We next consider the case when \(X\) and \(Y\) are co-parents.
Regardless of the order on \(X\) and \(Y\),
the causal graph resulting from the elimination satisfies the following properties: 
(1) for each common child \(C\) of \(X\) and \(Y\),
the parents for \(C\) are \(\P_X \cup \P_Y \cup \P_C \setminus \{X,Y\}\);
(2) for each \(C\) that is a child of \(X\) but not a child of \(Y\),
the parents for \(C\) are \(\P_X \cup \P_C \setminus \{X\}\);
(3) for each \(C\) that is a child of \(Y\) but not a child of \(X\),
the parents for \(C\) are \(\P_Y \cup \P_C \setminus \{Y\}\).
We next consider the CPTs.
For each common child \(C\) of \(X\) and \(Y\),
its CPT resulting from eliminating \((X,Y)\) is
\(f^1_C = \sum_Y f_Y (\sum_X f_X f_C)\),
and the CPT resulting from eliminating \((Y,X)\) is 
\(f^2_C = \sum_X f_X (\sum_Y f_Y f_C)\).
Since \(X\) and \(Y\) are not parent and child, 
we have \(X \notin f_Y\), \(Y \notin f_X\) and
\[f^1_C = \sum_Y \sum_X f_Y f_X f_C = \sum_X \sum_Y f_Y f_X f_C = f^2_C\]
For each \(C\) that is a child of \(X\) but not a child of \(Y\),
regardless of the order on \(X\) and \(Y\),
the CPT for \(C\) resulting from eliminating variables
\(X\) and \(Y\) is \((\sum_X f_X f_C)\).
A similar result holds for each \(C\) that is a child of \(Y\) but not a child of \(X\).
\end{proof}

\subsection*{Proof of Theorem~\ref{thm:felim-bn-sound}}
\begin{proof}
It suffices to show that \(\Pr' = \sum_X \Pr\) when we eliminate a single variable \(X\).
Let \(\FF\) denote the set of CPTs for \(\MM\).
Since \(f_X\) is a functional CPT for \(X\), we can replicate \(f_X\)
in \(\FF\) which yields a new CPT set (replication) \(\FF'\) 
that induce a same distribution as \(\FF\); see details in~\citep[Theorem~4]{ecai/Darwiche20a}.
Specifically, we pair the CPT for each child \(C\) of \(X\) with an extra copy of \(f_X\), denoted \((f_X,f_C)\),
which yields a list of pairs \((f_X,f_{C_1}), \dots, (f_X,f_{C_k})\)
where \(C_1,\dots,C_k\) are the children of \(X\).
Functionally eliminating \(X\) from \(\FF'\) yields
\begin{equation}
\begin{split}
\sum_X \Pr = \sum_X \FF' &= \HH \cdot (\sum_X f_Xf_{C_1}) \cdots (\sum_X f_Xf_{C_k}) \\
&\text{\ \ \ \ \citep[Corollary~1]{ecai/Darwiche20a}}\\
& = \HH \cdot f'_{C_1} \cdots f'_{C_k} = {\Pr}'\\
\end{split}
\end{equation}
where \(\HH\) are the CPTs in \(\FF\) that do not contain \(X\) and 
each \(f'_{C_i}\) is the CPT for child \(C_i\) in \(\MM'\).
\end{proof}

\subsection*{Proof of Theorem~\ref{thm:fid-invariant}}
\begin{lemma}
\label{lem:fid-invariant1}
Consider a BN \(\tuple{G}{\FF}\) and its mutilated BN \(\tuple{G_{\x}}{\FF_{\x}}\) under \(do(\x)\).
Let \(W\) be a functional variable not in \(\X\)
and let \(\tuple{G'}{\FF'}\) and \(\tuple{G'_{\x}}{\FF'_{\x}}\) be 
the results of functionally eliminating \(W\) from 
\(\tuple{G}{\FF}\) and
\(\tuple{G_{\x}}{\FF_{\x}}\), respectively.
Then \(\tuple{G'_{\x}}{\FF'_{\x}}\) is the mutilated BN for \(\tuple{G'}{\FF'}\).
\end{lemma}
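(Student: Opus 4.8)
The plan is to verify that the two operations — mutilating under $do(\x)$ and functionally eliminating $W$ — commute, by checking that they agree on both the DAG structure and the CPTs. Concretely, I would apply the definition of the mutilated BN to $\tuple{G'}{\FF'}$ and show the result coincides, edge-for-edge and CPT-for-CPT, with the $\tuple{G'_{\x}}{\FF'_{\x}}$ obtained by eliminating $W$ from the mutilated BN $\tuple{G_{\x}}{\FF_{\x}}$. The hypothesis $W \notin \X$ is exactly what makes this work, and the only delicate interaction occurs at children of $W$ that happen to be treatment variables.

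First, the graph level. Since $W \notin \X$, mutilation cuts no edge pointing into $W$, so the parents of $W$ agree in $G$ and $G_{\x}$, both equal to $\Pa_G(W)$. The children, however, differ: a child $C$ of $W$ in $G$ survives in $G_{\x}$ iff $C \notin \X$, because the edge into $C$ is deleted precisely when $C \in \X$. I would then compare the two construction orders. Eliminating $W$ from $G_{\x}$ adds the edges $\Pa_G(W) \to C$ for every child $C \in \Ch_G(W)\setminus\X$ and otherwise leaves $G_{\x}$ intact (in particular, no edge points into $\X$). Going the other way, eliminating $W$ from $G$ adds $\Pa_G(W) \to C$ for \emph{all} $C \in \Ch_G(W)$, and the subsequent mutilation then deletes exactly the added edges with $C \in \X$ together with every original edge into $\X$. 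Both routes produce the same edge set, so the graphs coincide.

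Second, the CPT level. I would track the CPT of each variable along both routes. For a treatment $X \in \X$: on the mutilate-then-eliminate route its CPT in $\FF_{\x}$ is the indicator $f^{\x}_X$, and since $X$ is not a child of $W$ in $G_{\x}$ elimination leaves it untouched; on the eliminate-then-mutilate route elimination may replace it by $\sum_W f_W f_X$ (when $X$ was a child of $W$), but mutilation then overwrites this with the same indicator $f^{\x}_X$. For a non-treatment child $C \in \Ch_G(W)\setminus\X$: both routes yield $\sum_W f_W f_C$, using that $C$ keeps its original CPT $f_C$ on either side and that $f_W$ is unaffected by mutilation because $W \notin \X$. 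Every remaining variable keeps its original CPT on both routes. Hence the CPTs agree, and by the definition of the mutilated BN this is exactly the assertion that $\tuple{G'_{\x}}{\FF'_{\x}}$ is the mutilation of $\tuple{G'}{\FF'}$.

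The main obstacle — and the sole reason the $W \notin \X$ hypothesis is needed — is the case of a treatment $X$ that is a child of $W$. There the two orders genuinely do different intermediate things: one deletes the edge and installs the indicator CPT \emph{before} $W$ is eliminated, while the other first manufactures a merged factor $\sum_W f_W f_X$ and an incoming edge $\Pa_G(W)\to X$ that must later be discarded. The crux of the argument is to show that the mutilation step annihilates precisely these spurious edges and factors, so the two routes reconcile. Once this reconciliation is made explicit, the remaining bookkeeping for parents, non-treatment children, and untouched variables is routine, and the lemma follows.
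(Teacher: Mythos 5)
Your proof is correct and follows essentially the same route as the paper's: both verify that mutilation and functional elimination commute by splitting the children of $W$ into those outside $\X$ (where elimination acts identically on both routes) and those inside $\X$ (where mutilation overwrites whatever elimination did, since $W \notin \X$ guarantees $W$'s own parents and CPT survive mutilation). Your write-up is somewhat more explicit about the edge-set bookkeeping, but the key case analysis and the reconciliation at treatment children of $W$ are exactly the paper's argument.
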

\begin{proof}
First observe that the children of \(W\) in \(G\) 
and \(G_{\x}\)
can only differ by the variables in \(\X\).
Let \(\C_1\) be the children of \(W\) in both \(G\) and \(G_{\x}\) and let \(\C_2\) be the children of \(W\) in \(G\) but
not in \(G_{\x}\).
By the definition of mutilated BN, 
\(W\) has the same set of parents and CPT in \(\tuple{G}{\FF}\) and \(\tuple{G_{\x}}{\FF_{\x}}\).
Similarly,
each child \(C \in \C_1\) has the same set of parents
and CPT in \(\tuple{G}{\FF}\) and \(\tuple{G_{\x}}{\FF_{\x}}\).
Hence, eliminating \(W\) yields the same set of parents and CPT for each \(C \in \C\) 
in \(\tuple{G'}{\FF'}\) and \(\tuple{G'_{\x}}{\FF'_{\x}}\).
We next consider the set of parents and CPT for each 
child \(C \in \C_2\).
Since  \(W\) is not a parent of \(C\) in \(\tuple{G_{\x}}{\FF_{\x}}\),
variable \(C\) has the same set of parents and CPT
in \(\tuple{G'_{\x}}{\FF'_{\x}}\),
The exactly same set of parents (empty) and CPT will be 
assigned to \(C\) in the mutilated BN 
for \(\tuple{G'}{\FF'}\).
\end{proof}

\begin{proof}(Theorem~\ref{thm:fid-invariant})
Consider a BN \(\tuple{G}{\FF}\) and its mutilated BN \(\tuple{G_{\x}}{\FF_{\x}}\).
Let \(\Pr\) and \(\Pr_{\x}\) be the distributions induced by \(\FF\)
and \(\FF_{\x}\) over variables \(\V\), respectively.
By Lemma~\ref{lem:fid-invariant1}, we can eliminate each \(W \in \W\)
inductively from \(\tuple{G}{\FF}\) and \(\tuple{G_{\x}}{\FF_{\x}}\)
and obtain \(\tuple{G'}{\FF'}\) and its mutilated BN 
\(\tuple{G'_{\x}}{\FF'_{\x}}\).
By Theorem~\ref{thm:felim-bn-sound}, the distribution induced by \(\FF'_{\x}\) is exactly \(\sum_{\W} \Pr_{\x}(\V)\).
\end{proof}

\subsection*{Proof of Proposition~\ref{thm:D-separation}}
\begin{proof}
First note that the extended set \(\Z'\) contains \(\Z\)
and all variable that are functionally determined by 
\(\Z\).
Consider any path \(\PP\) between some \(X \in \X\) and
\(Y \in \Y\). 
We show that \(\PP\) is blocked by \(\Z'\) iff it is blocked by \(\Z\) according to the definition in~\citep{networks/GeigerVP90}.
We first show the if-part.
Suppose there is a convergent valve\footnote{See~\citep[Ch.~4]{DarwicheBook09} for more
details on convergent, divergent and sequential valves.} for variable \(W\) 
that is closed when conditioned on \(\Z\), 
then the valve is still closed when conditioned on \(\Z'\)
unless the parents of \(W\) are in \(Z'\).
However, the path \(\PP\) will be blocked in
the latter case since the parents of \(W\) must have sequential/divergent valves.
Suppose there is a sequential/divergent valve that
is closed when conditioned on \(\Z\) according to~\citep{networks/GeigerVP90},
then \(W\) must be in \(\Z'\) since it is functionally
determined by \(\Z\). Hence, the valve is also closed when
conditioned on \(\Z'\).

We next show the only-if part.
Suppose a convergent valve for variable \(W\)
is closed when conditioned on \(\Z'\), 
then none of \(\Z\) is a descendent of \(W\)
since \(\Z'\) is a superset of \(\Z\).
Suppose a sequential/divergent valve for variable \(W\)
is closed when conditioned on \(\Z'\),
then \(W\) is functionally determined by \(\Z\) by the 
construction of \(\Z'\). 
Thus, the valve is closed in~\citep{networks/GeigerVP90}.
\end{proof}

\subsection*{Proof of Theorem~\ref{thm:felim-Dsep}}
\begin{proof}
By induction, it suffices to show that \(\X\) and \(\Y\) are D-separated
by \(\Z\) in \(\tuple{G}{\W}\) iff they are D-separated
by \(\Z\) in \(\tuple{G'}{\W'}\), where 
\(G'\) is the result of functionally eliminating 
a single variable \(T \in \W\) from \(G'\) 
and \(\W' = \W \setminus \{T\}\).
We first show the contrapositive of the if-part.
Suppose \(\X\) and \(\Y\) are not D-separated by \(\Z\) in
\(\tuple{G}{\W}\),
by the completeness of D-separation, there exists
a parameterization \(\FF\) on \(G\) such that
\((\X \dep \Y | \Z)_{\FF}\).
If we eliminate \(T\) from the BN \(\tuple{G}{\FF}\),
we obtain another BN \(\tuple{G'}{\FF'}\) where \(\FF'\)
is the parameterization for \(G'\).
By Theorem~\ref{thm:felim-bn-sound},
the marginal probabilities are preserved for the variables
in \(G'\), which include \(\X, \Y, \Z\).
Hence, \((\X \dep \Y | \Z)_{\FF'}\) and \(\X\) and \(\Y\)
are not D-separated by \(\Z\) in \(\tuple{G'}{\W'}\).

Next consider the contrapositive of the only-if part.
Suppose \(\X\) and \(\Y\) are not D-separated by \(\Z\) in
\(\tuple{G'}{\W'}\), then
there exists a parameterization \(\FF'\) of \(G'\) such that
\((\X \dep \Y | \Z)_{\FF'}\) by the completeness of D-separation.
We construct a parameterization \(\FF\) for \(G\) such that 
\(\FF'\) is the parameterization of $G'$ which results from eliminating \(T\) from the BN \(\langle G,\FF\rangle\).
This is sufficient to show that 
\(\X\) and \(\Y\) are not D-separated by \(\Z\) in
\(\tuple{G}{\W}\) since the marginals are preserved
by Theorem~\ref{thm:felim-bn-sound}.

\noindent{\textbf{Construction Method}}
Let \(\P_T\) and \(\C_T\) denote the parents and children
of \(T\) in \(G\).
Our construction assumes that the cardinality of $T$ is the number of instantiations for its parents $\P_T$. That is, there is a one-to-one correspondence between the states of $T$ and the instantiations of $\P_T$, and
we use $\alpha(t)$ to denote the instantiation $\p_T$ corresponding to state $t$.
The functional CPT for \(T\) is assigned as 
\(f_T(t|\p_T) = 1\) if \(\alpha(t) = \p_T\) and 
\(f_T(t|\p_T) = 0\) otherwise for each instantiation 
\(\p_T\) of \(\P_T\).
Now consider each child \(C \in \C_T\) that
has parents \(\P_C\) (excluding \(T\)) and \(T\) in \(G\).
It immediately follows from Definition~\ref{def:felim}
that \(C\) has parents \(\P_T \cup \P_C\) in \(G'\).
We next construct the CPT \(f_C\) in \(\FF\) based on its CPT \(f'_C\) in \(\FF'\).
Consider each parent instantiation \((t,\p_C)\) where \(t\)
is a state of \(T\) and \(\p_C\) is an instantiation of 
\(\P_C\).
If \(\alpha(t)\) is consistent with \(\p_C\),
assign \(f_C(c|t,\p_C) = f'_C(c|\alpha(t),\p_C)\)
for each state \(c\).\footnote{For clarity, we use the notation \(|\) to separate a variable and its parents
in a CPT.}
Otherwise,
assign any functional distribution for \(f_C(C|t,\p_C)\).
The construction ensures that the constructed CPT \(f_T\) for $T$ is functional, and that the functional dependencies
among other variables are preserved. 
In particular, for each child \(C\) of \(T\),
the constructed CPT \(f_C\) is functional iff \(f'_C\) is functional.
This construction method will be reused later in other proofs.

We now just need to show that BN $\langle G',\FF'\rangle$ is the result 
of eliminating \(T\) from the (constructed) BN \(\langle G,\FF\rangle\).
In particular, we need to check that the CPT for each child
\(C \in \C_T\) in \(\FF'\) is correctly computed from
the constructed CPTs in \(\FF\).
For each instantiation \((\p_T, \p_C)\) and state \(c\) of $C$,
\begin{equation*}
\begin{split}
f'_C(c|\p_T,\p_C) &= f_C(c|t^*,\p_C) = f_T(t^*|\p_T)f_C(c|t^*,\p_C) \\
&= \sum_t f_T(t|\p_T)f_C(c|t,\p_C) \\
\end{split}
\end{equation*}
where $t^*$ is the state of $T$ such that \(\alpha(t^*) = \p_T\).
\end{proof}

\subsection*{Proof of Theorem~\ref{thm:func-unobs-obs}}
\begin{proof}
We prove the theorem by induction.
It suffices to show the following statement:
for each causal graph \(G\) with observed variables \(\V\)
and functional variables \(\W\),
the causal effect \(\Pr_{\x}(\Y)\) is F-identifiable wrt
\(\langle G, \V, \CC_\V, \W \rangle\) iff it is F-identifiable
wrt \(\langle G', \V, \CC_\V, \W'\rangle\) where \(G'\) is the result
of functionally eliminating some hidden functional variable
\(T \in \W\) and \(\W' = \W \setminus \{T\}\).

We first show the contrapositive of the if-part.
Suppose \(\Pr_{\x}(\Y)\) is not F-identifiable wrt 
\(\langle G,\V,\CC_\V, \W \rangle\),
there exist two BNs \(\tuple{G}{\FF_1}\) and \(\tuple{G}{\FF_2}\) which induce distributions
\(\Pr_1, \Pr_2\) such that \(\Pr_1(\V) = \Pr_2(\V)\) but
\(\Pr_{1\x}(\Y) \neq \Pr_{2\x}(\Y)\).
Let \(\tuple{G'}{\FF'_1}\) and \(\tuple{G'}{\FF'_2}\) be the
results of eliminating \(T \notin \V\) from \(\tuple{G}{\FF_1}\) and \(\tuple{G}{\FF_2}\),
the two BNs attain the same marginal distribution 
on \(\V\) but different
causal effects by Theorem~\ref{thm:felim-bn-sound} and Theorem~\ref{thm:fid-invariant}.
Hence, \(\Pr_{\x}(\Y)\) is not F-identifiable wrt 
\(\langle G',\V,\CC_\V, \W' \rangle\) either.

We next show the contrapositive of the only-if part.
Suppose \(\Pr_{\x}(\Y)\) is not F-identifiable wrt 
\(\langle G',\V,\CC_\V, \W' \rangle\),
there exist two BNs \(\tuple{G'}{\FF'_1}\) and \(\tuple{G'}{\FF'_2}\) which induce distributions
\(\Pr_1', \Pr_2'\) such that \(\Pr_1'(\V) = \Pr_2'(\V)\) but
\(\Pr'_{1\x}(\Y) \neq \Pr'_{2\x}(\Y)\).
We can obtain \(\tuple{G}{\FF_1}\) and \(\tuple{G}{\FF_2}\) 
by considering again the construction method in Theorem~\ref{thm:felim-Dsep} where we assign a one-to-one 
mapping for \(T\) and adopt the CPTs from \(\FF'_1\) and \(\FF'_2\) for the children of \(T\).
This way, \(\tuple{G'}{\FF'_1}\) and \(\tuple{G'}{\FF'_2}\) become the
results of eliminating \(T\) from the constructed \(\tuple{G}{\FF_1}\) and \(\tuple{G}{\FF_2}\).
Since \(T \notin \V\),  \(\Pr_1(\V)=\Pr_1'(\V)=\Pr_2'(\V)=\Pr_2(\V)\) by Theorem~\ref{thm:felim-bn-sound},
and \(\Pr_{1\x}(\Y)=\Pr_{1\x}'(\Y) \neq \Pr_{2\x}'(\Y)=\Pr_{2\x}(\Y)\)
by Theorem~\ref{thm:fid-invariant}.
Hence, \(\Pr_{\x}(\Y)\) is not F-identifiable wrt 
\(\langle G,\V,\CC_\V, \W \rangle\) either.
\end{proof}

\subsection*{Proof of Theorem~\ref{thm:func-obs1}}
\begin{proof}
Since we only functionally eliminate variables that have
observed parents, it is guaranteed that each \(Z \in \Z\) has observed parents when it is eliminated. 
By induction, it suffices to show that
\(\Pr_{\x}(\Y)\) is F-identifiable wrt \(\langle G,\V,\CC_\V, \W \rangle\) iff
it is F-identifiable wrt \(\langle G',\V', \CC_\V, \W')\) where
\(G'\) is the result of eliminating a single functional variable \(Z \in \W\) with observed parents from \(G\), \(\V' = \V \setminus \{Z\}\),
and \(\W' = \W \setminus \{Z\}\).

We first show the contrapositive of the if-part.
Suppose \(\Pr_{\x}(\Y)\) is not F-identifiable wrt \(\langle G,\V,\CC_\V, \W \rangle\),
there exist two BNs \(\tuple{G}{\FF_1}\) and \(\tuple{G}{\FF_2}\) which induce distributions
\(\Pr_1, \Pr_2\) where \(\Pr_1(\V) = \Pr_2(\V)\) but
\(\Pr_{1\x}(\Y) \neq \Pr_{2\x}(\Y)\).
Let \(\tuple{G'}{\FF'_1}\) and \(\tuple{G'}{\FF'_2}\) be the
results of eliminating \(Z\) from \(\tuple{G}{\FF_1}\) and \(\tuple{G}{\FF_2}\),
the two BNs induce the same marginal distribution \(\Pr'_1(\V') = \Pr'_2(\V')\) by Theorem~\ref{thm:felim-bn-sound} but different 
causal effects \(\Pr'_{1\x}(\Y) \neq \Pr'_{2\x}(\Y)\) 
by Theorem~\ref{thm:fid-invariant}.
Hence, \(\Pr_{\x}(\Y)\) is not F-identifiable wrt 
\(\langle G',\V',\CC_\V, \W' \rangle\).

We now consdier the ctrapositive of the only-if part.
Suppose \(\Pr_{\x}(\Y)\) is not F-identifiable wrt 
\(\langle G',\V',\CC_\V, \W' \rangle\), then
there exist two BNs \(\tuple{G'}{\FF'_1}\) and \(\tuple{G'}{\FF'_2}\) which induce distributions
\(\Pr_1', \Pr_2'\) such that \(\Pr_1'(\V') = \Pr_2'(\V')\) but
\(\Pr'_{1\x}(\Y) \neq \Pr'_{2\x}(\Y)\).
We again consider the construction method from the proof of Theorem~\ref{thm:felim-Dsep}
which produces two BNs \(\tuple{G}{\FF_1}\) and \(\tuple{G}{\FF_2}\).
Moreover, \(\tuple{G'}{\FF'_1}\) and \(\tuple{G'}{\FF'_2}\) are
the result of eliminating \(Z\) from \(\tuple{G}{\FF_1}\) and \(\tuple{G}{\FF_2}\).
It is guaranteed that the two constructed BNs produce 
different causal effects 
\(\Pr_{1\x}(\Y)=\Pr_{1\x}'(\Y) \neq \Pr_{2\x}'(\Y)=\Pr_{2\x}(\Y)\) by Theorem~\ref{thm:fid-invariant}.
We need to show that \(\tuple{G}{\FF_1}\) and \(\tuple{G}{\FF_2}\) induce a same distribution over variables \(\V = \V' \cup \{Z\}\).
Consider any instantiation \((\v',z)\) of \(\V\).
Since \(\FF_1\) and \(\FF_2\) assign the same one-to-one mapping \(\alpha\) between \(Z\) and its parents in \(G\),
it is guaranteed that the probabilities \(\Pr_1(\v',z) = \Pr_2(\v',z)=0\) except for the single state \(z^*\) where
\(\alpha(z^*) = \p\) where \(\p\) is the parent instantiation of \(Z\) consistent with \(\v'\).
By construction, \(\Pr_1(\v',z^*,\u) = \Pr'_1(\v',\u)\) for 
every instantiation \(\u\) of hidden variables \(\U\). 
Similarly, \(\Pr_2(\v',z^*,\u) = \Pr'_2(\v',\u)\) for 
every instantiation \((\v',z,\u)\).
It then follows that
\(\Pr_1(\v',z^*) = \sum_{\u} \Pr_1(\v',z^*,\u) = \sum_{\u} \Pr'_1(\v',\u) = \Pr'_1(\v')\) \(= \Pr'_2(\v') = \sum_{\u} \Pr'_2(\v',\u) = \sum_{\u} \Pr_2(\v',z^*,\u) = \Pr_2(\v',z^*)\).
This means \(\Pr_{\x}(\Y)\) is not F-identifiable wrt 
\(\langle G,\V,\CC_\V, \W \rangle\) either.
\end{proof}

\subsection*{Proof of Theorem~\ref{thm:func-obs2}}
\begin{lemma}
\label{lem:func-obs2}
Let $G$ be a causal graph, \(\V\) be its observed variables 
and \(\W\) be its functional variables. 
Let \(Z\) be a non-descendant of \(\W\) that has at least one hidden parent, then a causal effect is F-identifiable wrt
\(\langle G, \V, \CC_\V, \W \rangle\) iff it is F-identifiable wrt \(\langle G, \V, \CC_\V, \W \cup \{Z\} \rangle\).
\end{lemma}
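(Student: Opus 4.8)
The plan is to exploit the fact that the two tuples in the statement differ only in whether $Z$ is required to be functional, and that declaring $Z$ functional merely shrinks the class of admissible parameterizations: any parameterization of $G$ in which $\W\cup\{Z\}$ are functional is in particular one in which $\W$ are functional. Hence one direction is immediate. If the causal effect is F-identifiable wrt $\langle G,\V,\CC_\V,\W\rangle$, then every admissible pair for that tuple agrees on $\Pr_\x(\y)$, so in particular every pair drawn from the sub-class $\langle G,\V,\CC_\V,\W\cup\{Z\}\rangle$ agrees, giving F-identifiability wrt the latter. The content of the lemma is the converse, which I would prove by contraposition: from a pair of parameterizations witnessing \emph{non}-F-identifiability wrt $\langle G,\V,\CC_\V,\W\rangle$, I would manufacture a pair witnessing non-F-identifiability wrt $\langle G,\V,\CC_\V,\W\cup\{Z\}\rangle$ in which, additionally, $Z$ is functional.

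The construction pushes the stochasticity of $Z$ into one of its hidden parents. Fix a hidden parent $U$ of $Z$ (which exists by hypothesis) and let $\R=\P_Z\setminus\{U\}$ be the remaining parents. Given a counterexample BN $\tuple{G}{\FF}$ with CPTs $f_U,f_Z$, I would enlarge the domain of $U$ to pairs $(u,\rho)$, where $\rho$ ranges over \emph{response functions} sending each instantiation $\r$ of $\R$ to a state of $Z$, and replace the two CPTs by
\begin{equation*}
f_{U'}'\big((u,\rho)\mid \p_U\big)=f_U(u\mid\p_U)\prod_{\r}f_Z\big(\rho(\r)\mid u,\r\big),\qquad
f_Z'\big(z\mid(u,\rho),\r\big)=\big[\,z=\rho(\r)\,\big],
\end{equation*}
leaving every other CPT unchanged except that the other children of $U$ read only the $u$-coordinate of $(u,\rho)$. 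The factor $f_{U'}'$ is a valid CPT (the response-function weights sum to one), and $f_Z'$ is deterministic, so $Z$ is functional in the new BN.

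I would then verify the three counterexample properties. First, summing the new joint over the auxiliary coordinate $\rho$ factorizes: the constraint $\rho(\r)=z$ pins one coordinate to contribute $f_Z(z\mid u,\r)$ while the remaining coordinates sum to one, so marginalizing $\rho$ recovers the original joint over all other variables; since $U$ (hence $U'$) is hidden, $\Pr'(\V)=\Pr(\V)$, which preserves both the shared observational distribution and the constraints $\CC_\V$. Second, the identical marginalization applies to the mutilated network under $do(\x)$ — whether or not $Z\in\X$, the only $\rho$-dependent factor is $f_Z'$, and if $Z\in\X$ that factor is deleted by the mutilation — so the interventional distributions, and thus the two distinct causal effects, are reproduced unchanged (the same invariance that underlies the proof of Theorem~\ref{thm:fid-invariant}). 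Third, $\W$ must remain functional: the only CPT that becomes genuinely randomized is that of $U$, which is harmless precisely because $U\notin\W$. This is where the hypothesis that $Z$ is a \emph{non-descendant} of $\W$ enters, since a parent of $Z$ lying in $\W$ would make $Z$ a descendant of $\W$; meanwhile every functional child of $U$ keeps a deterministic CPT (it merely ignores the $\rho$-coordinate) and all other CPTs are untouched.

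The main obstacle is the bookkeeping in this last step together with the interventional case $Z\in\X$: I must confirm that enlarging $U$ and reindexing its children's CPTs does not secretly destroy the functionality of any variable in $\W$, and that marginalizing out $\rho$ commutes with the edge deletions in the mutilated graph. Both reduce to the two facts that $U\notin\W$ (from the non-descendant hypothesis) and that $f_Z'$ is the unique factor carrying the auxiliary randomness, so I expect the verification to go through once these are isolated. Applying this single-variable argument to each of the two witnessing parameterizations completes the contrapositive and hence the lemma.
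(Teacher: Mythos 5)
Your proof is correct and takes essentially the same route as the paper's: the easy direction follows from inclusion of model classes, and the converse is proved by contraposition, making \(Z\) functional by absorbing its stochasticity into a hidden parent via a response-function construction, with the non-descendant hypothesis guaranteeing that this parent lies outside \(\W\) so that functionality of \(\W\) is preserved. The only (cosmetic) difference is that the paper first adds an auxiliary response-function root ranging over functions of \emph{all} of \(Z\)'s parents and then merges it with the hidden parent via a Cartesian product, whereas you enlarge the hidden parent directly with a function coordinate over the remaining parents, using \(u\)-dependent weights.
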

\begin{proof}
Let \(\W'\) denote the set \(\W \cup \{Z\}\).
The only-if part holds immediately by the fact that
every distribution that can be possibly induced from \(\tuple{G}{\W'}\) can also be induced from \(\tuple{G}{\W}\).
We next consider the contrapositive of the if part.
Suppose a causal effect is not F-identifiable
wrt \(\langle G, \V, \CC_\V, \W \rangle\),
then there exist two BNs \(\tuple{G}{\FF_1}\) and \(\tuple{G}{\FF_2}\) which induce distributions
\(\Pr_1, \Pr_2\) such that \(\Pr_1(\V) = \Pr_2(\V)\) but
\(\Pr_{1\x}(\Y) \neq \Pr_{2\x}(\Y)\).
We next construct \(\tuple{G}{\FF''_1}\) and \(\tuple{G}{\FF''_2}\) which constitute an example of unidentifiability wrt \(\langle G, \V, \CC_\V, \W' \rangle\).
In particular, the CPTs for \(Z\) need to be functional in the constructed BNs.

WLG, we show the construction of \(\tuple{G}{\FF''_1}\) from \(\tuple{G}{\FF_1}\) which involves two steps (the construction of \(\tuple{G}{\FF''_2}\) from \(\tuple{G}{\FF_2}\) will follow a same procedure).
Let \(\P\) be the parents of \(Z\).
The first step constructs a BN \(\tuple{G'}{\FF'_1}\) based on
the known method that transforms
any (non-functional) CPT into a functional CPT.
This is done by adding an auxiliary hidden root parent \(U\) 
for \(Z\) whose states correspond to the possible 
functions between \(\P\) and \(Z\).
The CPTs for \(U\) and \(Z\) are assigned accordingly
such that \(f_{Z} = \sum_{U} f'_{U}f'_{Z} \) where \(f'_U\) and \(f'_{Z}\) are the constructed CPTs in \(\FF'_1\).\footnote{
Each state \(u\) of \(U\) corresponds to a function \(\gamma_u\) where \(\gamma_u(\p)\) is mapped to some state of \(Z\) for each instantiation \(\p\). 
The variable \(U\) thus has \(|Z|^{|\P|}\) states since there are total of \(|Z|^{|\P|}\) possible functions from \(\P\) to \(Z\).
For each instantiation \((z,u,\p)\),
the functional CPT for \(Z\) is defined as 
\(f'_{Z}(z | u, \p) = 1\) if \(z=\gamma_u(\p)\),
and \(f'_{Z}(z | u, \p) = 0\) otherwise.
The CPT for \(U\) is assigned as
\(f'_{U}(u) = \prod_{\p \in \P} f_{Z}(\gamma_u(\p)|\p) \).}
It follows that \(\FF_1\) and \(\FF'_1\) induce the same distribution
over \(\V\) since \(\FF_1 = \sum_U \FF'_1\).
The causal effect is also preserved since summing out
\(U\) is independent of other CPTs in the mutilated BN for \(\tuple{G'}{\FF'_1}\).

Our second step involves converting the BN \(\tuple{G'}{\FF'_1}\) 
(constructed from the first step) into the BN \(\tuple{G}{\FF''_1}\) over
the original graph \(G\).
Let \(T \in \P\) be the hidden parent of \(W\) in \(G\).
We merge the auxiliary parent \(U\) and \(T\) into a new variable \(T'\) and substitute it for \(T\) in \(G\), i.e.,
\(T'\) has the same parents and children as \(T\).
\(T'\) is constructed as the Cartesian product of 
\(U\) and \(T\):
each state of \(T'\) is represented as a 
pair \((u,t)\) where 
\(u\) is a state of \(U\) and \(t\) is a state of \(T\).
We are ready to assign new CPTs for \(T'\) and its children.
For each parent instantiation \(\p\) of \(\P\) 
and each state \((u,t)\) of \(T'\),
we assign the CPT for \(T'\) in \(\FF''\) as 
\(f''_{T'}((u,t)|\p) = f'_U(u)f'_T(t|\p)\).
Next consider each child \(C\) of \(T'\) that has parents \(\P_C\) (excluding \(T'\)).
For each instantiation \(\p_C\) of \(\P_C\) and 
each state \((u,t)\) of \(T'\),
we assign the CPT for \(C\) in \(\FF''\) as
\(f''_{C}(c|\p_C,(u,t)) = f'_C(c|\p_C)\) for each state \(c\).
Note that \(f''_{C}\) is functional iff \(f'_C\) is functional.
Hence, the CPTs for \(\W\) are all functional in \(\FF''\).

We need to show that \(\tuple{G}{\FF''_1}\) preserves the distribution on \(\V\) and the causal effect from \(\tuple{G'}{\FF'_1}\).
Let \(\U''\) be the hidden variables in \(\tuple{G}{\FF''_1}\)
and \(\U'\) be the hidden variables in \(\tuple{G'}{\FF'_1}\).
The distribution on \(\V\) is preserved since 
there is a one-to-one correspondence between each instantiation \((\v,\u'')\) in \(\tuple{G}{\FF''_1}\) and
each instantiation \((\v,\u')\) in \(\tuple{G'}{\FF'_1}\) 
where the two instantiations agree on \(\v\)
and are assigned with the same probability, i.e.,
\(\Pr_1''(\v,\u'') = \Pr_1'(\v,\u')\).
Hence, \(\Pr_1''(\v) = \sum_{\u''}\Pr_1''(\v,\u'') = \sum_{\u'}\Pr_1'(\v,\u') = \Pr_1'(\v)\) for every instantiation
\(\v\).
The preservation of causal effect can be shown similarly 
but on the mutilated BNs.
Thus, \(\tuple{G}{\FF''_1}\) preserves both the 
distribution on \(\V\) and the causal effect from
\(\tuple{G}{\FF_1}\).
Similarly, we can construct \(\tuple{G}{\FF''_2}\)
which preserves the distribution on \(\V\) and causal effect
from \(\tuple{G}{\FF_2}\).
The two BNs \(\tuple{G}{\FF''_1}\) and \(\tuple{G}{\FF''_2}\)
constitute an example of unidentifiablity wrt
\(\langle G, \V, \CC_\V, \W' \rangle\).
\end{proof}

\begin{proof}[Proof of Theorem~\ref{thm:func-obs2}]
We prove the theorem by induction. We first order all the functional variables in a bottom-up order. Let \(W^i\) denote the \(i^{th}\) functional variable in the order and \(\W^{(i)}\) denote the functional variables that are ordered before \(W^{i}\) (including \(W^i\)). It follows that we can go over each \(W^i\) in the order and show that a causal effect is F-identifiable wrt \(\langle G, \V, \CC_\V, \W^{(i-1)} \rangle\) iff it is F-identifiable wrt \(\langle G, \V, \CC_\V, \W^{(i)} \rangle\) by Lemma~\ref{lem:func-obs2}. Since F-identifiability wrt \(\langle G, \V, \CC_\V, \W^{(0)} \rangle\) is equivalent to identifiability wrt \(\langle G, \V, \CC_\V \rangle\), we conclude that the causal effect is F-identifiable wrt \(\langle G, \V, \CC_\V, \W \rangle\) iff it is identifiabile wrt \(\langle G, \V, \CC_\V \rangle\).
\end{proof}

\subsection*{Proof of Theorem~\ref{thm:complete-fid}}
The proof of the theorem is organized as follows.
We start with a lemma (Lemma~\ref{lem:replace-cpt}) that allows us to modify the CPT of a variable when the marginal probability over its parents contain zero entries.
We then show a main lemma (Lemma~\ref{lem:func-obs3}) that allows us to reduce F-identifiability to identifiability when all functional variables are observed or has a hidden parent. We finally prove the theorem based on the main lemma and previous theorems.

\begin{lemma}
\label{lem:replace-cpt}
Consider two BNs that have a same causal graph
and induce the distributions \(\Pr_1\) and \(\Pr_2\).
Suppose the CPTs of the two BNs only differ by \(f_X(X,\P)\).
Then \(\Pr_1(\p) = 0\) iff \(\Pr_2(\p) = 0\) for all 
instantiations \(\p\) of \(\P\).
\end{lemma}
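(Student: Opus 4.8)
The plan is to prove the stronger statement that the two BNs agree \emph{exactly} on the marginal over $\P$, that is $\Pr_1(\P) = \Pr_2(\P)$, from which the claimed equivalence (that $\Pr_1(\p) = 0$ precisely when $\Pr_2(\p) = 0$) is immediate. The guiding intuition is that the marginal over the parents $\P$ of $X$ cannot possibly depend on the CPT $f_X$, because $X$ sits strictly \emph{below} $\P$ in the DAG and so is never needed to compute $\Pr(\P)$.

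The key step is the standard ``ancestral marginalization'' fact: for any variable set $\S$, the marginal $\Pr(\S)$ depends only on the CPTs $\{f_V : V \in \anc(\S)\}$, where $\anc(\S)$ denotes $\S$ together with all its ancestors in $G$. I would establish this by writing $\Pr(\S) = \sum_{\V \setminus \S} \prod_{V \in \V} f_V$ and then summing out the variables in $\V \setminus \anc(\S)$ in reverse topological order (children before parents). The crucial observation is that $\V \setminus \anc(\S)$ is downward closed: every child of a non-ancestor is again a non-ancestor, since if a child $C$ of $V$ lay in $\anc(\S)$ then $V$ would be an ancestor of $\S$ as well. Hence, when eliminating such a $V$ in reverse topological order, all of its children have already been removed, so $V$ occurs only in its own factor $f_V$, and $\sum_{V} f_V = 1$ makes it vanish without touching any surviving CPT. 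What remains is $\Pr(\S) = \sum_{\anc(\S) \setminus \S} \prod_{V \in \anc(\S)} f_V$, which mentions only the CPTs of $\anc(\S)$.

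To finish, I would apply this fact with $\S = \P$. Since each variable in $\P$ is a parent of $X$, acyclicity of $G$ forbids any directed path from $X$ back into $\P$, so $X \notin \anc(\P)$ (and clearly $X \notin \P$). Consequently $f_X$ is not among the CPTs on which $\Pr(\P)$ depends. As the two BNs share every CPT except $f_X$, they induce the same marginal $\Pr_1(\P) = \Pr_2(\P)$, and in particular $\Pr_1(\p) = 0$ exactly when $\Pr_2(\p) = 0$ for every instantiation $\p$.

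The only point requiring genuine care is the ancestral-marginalization step, specifically verifying that the non-ancestors can be peeled off by a legal elimination order so that each one is a ``barren'' leaf at the moment it is summed out; once this downward-closure/leaf argument is in place, the rest is immediate.
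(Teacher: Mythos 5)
Your proposal is correct and follows essentially the same route as the paper: both prove the stronger claim $\Pr_1(\P) = \Pr_2(\P)$ by observing that the marginal on $\P$ is obtained from the product of CPTs of $\anc(\P)$ only, and that $X \notin \anc(\P)$ by acyclicity, so the differing CPT $f_X$ is irrelevant. Your write-up just makes explicit the barren-node (reverse-topological-order) elimination argument that the paper's proof uses implicitly when it asserts that eliminating all variables outside $\anc(\P)$ leaves exactly $\prod_{Y \in \anc(\P)} f_Y$.
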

\begin{proof}
Let \(f_Y\) and \(g_Y\) denote the CPT for \(Y\) in the first and second BN. Let \(\anc(\P)\) be the ancestors of variables in \(\P\) (including \(\P\)).
If we eliminate all variables other than \(\anc(\P)\), then we obtain the factor set \(\FF_1 = \prod_{Y \in \anc(\P)} f_Y\) for the first BN and \(\FF_2 = \prod_{Y \in \anc(\P)} g_Y\) for the second BN.
Since all CPTs are the same for variables in \(\anc(\P)\),
it is guaranteed that \(\FF_1 = \FF_2\). 
If we further eliminate variables other than \(\P\) from \(\FF_1\) and \(\FF_2\), we obtain the marginal distributions
\(\Pr_1(\P) = \proj_{\P} \FF_1\) and \(\Pr_2(\P) = \proj_{\P} \FF_2\), where \(\proj_{\P}\) denotes the projection operation that sums out variables other than \(\P\) from a factor. 
Hence, \(\Pr_1(\P) = \Pr_2(\P)\) which concludes the proof.
\end{proof}

\begin{lemma}
\label{lem:func-obs3}
If a causal effect is F-identifiable wrt \(\langle G, \V, \CC_\V, \W \rangle\) but is not
identifiable \(\langle G, \V, \CC_\V \rangle\),
then there must exist at least one functional variable that is hidden and whose parents are all observed.
\end{lemma}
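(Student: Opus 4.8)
The plan is to prove the contrapositive. Assume that no hidden functional variable has all of its parents observed, i.e. every functional variable is either observed or has at least one hidden parent, and show that non-identifiability wrt $\langle G,\V,\CC_\V\rangle$ forces non-F-identifiability wrt $\langle G,\V,\CC_\V,\W\rangle$. Concretely, I would start from a plain counterexample — two parameterizations $\FF_1,\FF_2$ of $G$ that agree on $\Pr(\V)$, satisfy $\CC_\V$, yet disagree on $\Pr_{\x}(\y)$ — and massage them into $\FF_1'',\FF_2''$ in which \emph{every} variable of $\W$ is functional, while keeping agreement on $\Pr(\V)$, satisfaction of $\CC_\V$, and the gap in the causal effect. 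I would treat the functional variables in a bottom-up order so that when a variable is processed all of its functional descendants are already functional.

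The easy group consists of the functional variables that have a hidden parent; by the contrapositive hypothesis this already includes every hidden functional variable, and it may also include some observed ones. For each such $W$ I would reuse verbatim the two-step construction from the proof of Lemma~\ref{lem:func-obs2}: introduce an auxiliary hidden root $U$ whose states index the candidate functions from $\P$ to $W$, reproduce the original CPT as $f_W=\sum_U f'_U f'_W$, and then absorb $U$ into a hidden parent of $W$ via a Cartesian-product merge. As established there, this makes $W$ functional while leaving both $\Pr(\V)$ (Theorem~\ref{thm:felim-bn-sound}) and the causal effect (Theorem~\ref{thm:fid-invariant}) unchanged, and it does not disturb the functionality already installed on $W$'s descendants.

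The delicate group is the observed functional variables whose parents are all observed, and this is where Lemma~\ref{lem:replace-cpt} is meant to enter. For such a $W$ with $\P\subseteq\V$, the CPT coincides with the observed conditional, $f_W=\Pr(W\mid\P)$, on every parent instantiation $\p$ with $\Pr(\p)>0$; since $\FF_1$ and $\FF_2$ share $\Pr(\V)$ they share this CPT there, and on the zero-probability instantiations Lemma~\ref{lem:replace-cpt} guarantees that reassigning $f_W$ leaves the parent marginal, and hence $\Pr(\V)$, intact in both models. The plan is to set $f_W$ to a deterministic value on the zero-probability instantiations and to argue that a plain counterexample can be chosen whose shared $\Pr(\V)$ already forces $\Pr(W\mid\p)\in\{0,1\}$ on the positive-probability instantiations, so that $W$ is genuinely functional in both constructed models. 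Intuitively this ought to be achievable because a variable determined by observed parents carries no information beyond those parents and therefore cannot be what disentangles the hidden confounding responsible for non-identifiability.

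I expect this last step to be the main obstacle. We cannot freely deform the shared observational distribution: any change to $\Pr(W\mid\p)$ on positive-probability instantiations must be made identically in $\FF_1$ and $\FF_2$, must preserve agreement on all of $\Pr(\V)$, must preserve the discrepancy in $\Pr_{\x}(\y)$, and must remain consistent with $\CC_\V$ — note that determinism precludes strict positivity, so one has to verify that $\CC_\V$ and $\W$ are consistent in the sense of Definition~\ref{def:pos-consistent-separable}, since an inconsistent pair would otherwise trivialize the statement. Establishing that a \emph{functional-compatible} counterexample always exists, rather than merely an arbitrary one, is the crux; once it is in hand, combining it with the hidden-parent construction of the second paragraph yields $\FF_1'',\FF_2''$ and completes the contrapositive.
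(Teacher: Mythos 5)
Your skeleton matches the paper's proof --- contrapositive, bottom-up induction over \(\W\), reuse of Lemma~\ref{lem:func-obs2} for every functional variable with a hidden parent, and Lemma~\ref{lem:replace-cpt} for the zero-probability parent instantiations --- but the step you yourself flag as ``the crux'' is a genuine gap, and the route you propose for it is not the one that works. You hope to choose a counterexample whose shared \(\Pr(\V)\) already forces \(\Pr(W\mid\p)\in\{0,1\}\) on positive-probability instantiations. Non-identifiability wrt \(\langle G,\V,\CC_\V\rangle\) only guarantees that \emph{some} pair \((\FF_1,\FF_2)\) exists; nothing forces the existence of a pair whose observational distribution has deterministic conditionals for \(W\), and proving that such a special pair always exists is essentially as hard as the lemma itself. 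The paper never needs this. Its key idea is that a variable can be made functional \emph{relative to an enlarged parent}: for an observed \(Z\) with observed parents \(\P\), introduce an auxiliary root \(R\) whose states index the functions \(\varphi_r\) from the positive-probability instantiations \(\P^*\) to \(Z\), set \(f''_Z(z\mid r,\p)=1\) iff \(z=\varphi_r(\p)\), and give \(R\) the mixture weights \(f''_R(r)=\prod_{\p\in\P^*}f'_Z(\varphi_r(\p)\mid\p)\); crucially these weights are pinned down by the shared \(\Pr(\V)\), so \emph{both} models receive the same CPT for \(R\). Then merge \(R\) into an observed parent \(T\) of \(Z\) by a Cartesian product \(T'=(R,T)\), which returns to the original graph \(G\) with \(T\)'s state space enlarged. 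This makes \(Z\) functional in both models, preserves agreement of the observational distributions (via the bijection between instantiations of \(\V\cup\{R\}\) and of the new \(\V\)), and preserves the causal-effect gap, with a separate case analysis when \(T\) happens to be a treatment or an outcome. Your intuition that a variable determined by observed parents ``carries no information'' is exactly what this construction formalizes --- but it is a construction, not a lucky choice of counterexample.

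A second, smaller omission: in the zero-probability step you cannot set \(f_W(\cdot\mid\p)\) to an \emph{arbitrary} deterministic value. Parent instantiations with \(\Pr(\p)=0\) can still carry interventional weight (that is the very source of non-identifiability), so a careless assignment can collapse the gap \(\Pr_{1\x}(\y)\neq\Pr_{2\x}(\y)\). The paper writes each causal effect as a network polynomial that is affine in the entries \(f_Z(z_i\mid\p)\), and picks the deterministic entry maximizing \(\Pr'_{1\x}(\y)\) in the first model and minimizing \(\Pr'_{2\x}(\y)\) in the second, which guarantees \(\Pr'_{1\x}(\y)-\Pr'_{2\x}(\y)\geq\Pr_{1\x}(\y)-\Pr_{2\x}(\y)>0\). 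Lemma~\ref{lem:replace-cpt} only certifies that observational agreement survives the modification; it does not by itself protect the causal-effect gap.
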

\begin{proof}
The lemma is the same as saying that if every functional variable is observed or having a hidden parent, then F-identifiability is equivalent to identifiability.
We go over each functional variable \(W_i \in \W\) in a bottom-up order \(\Pi\) and show the following inductive statement:
a causal effect \(\Pr_{x}(\Y)\) is F-identifiable wrt
\(\langle G, \V, \CC_\V, \W^{(i)} \rangle\) iff it is 
F-identifiable wrt \(\langle G, \V, \CC_\V, \W^{(i-1)} \rangle\),
where \(\W^{(i)}\) is a subset of variables in \(\W\) that are ordered before \(W_i\) (and including \(W_i\)) in \(\Pi\).
Note that \(\W^{(0)} = \emptyset\) and F-identifiability
wrt \(\langle G, \V, \CC_\V, \W^{(0)} \rangle\) collapses into
identifiability wrt \(\langle G, \V, \CC_\V\rangle\).

The if-part follows from the definitions of identifiability and 
F-identifiability.
We next consider the contrapositive of the only-if part.
Let \(Z\) be the functional variable in \(\W\) that is considered in the current inductive step.
Let \(\tuple{G}{\FF_1}\) and \(\tuple{G}{\FF_2}\) be the two 
BNs inducing distributions \(\Pr_1\) and \(\Pr_2\)
that constitute the unidentifiability, i.e.,
\(\Pr_1(\V) = \Pr_2(\V)\) and \(\Pr_{1x}(\Y) \neq \Pr_{2x}(\Y)\).
Our goal is to construct two BNs
\(\tuple{G}{\FF'''_1}\) and \(\tuple{G}{\FF'''_2}\), which induce
distributions \(\Pr'''_1\), \(\Pr'''_2\) and contain functional CPTs for \(Z\), such that \(\Pr'''_1(\V) = \Pr'''_2(\V)\) and \(\Pr'''_{1x}(\Y) \neq \Pr'''_{2x}(\Y)\).
Suppose \(Z\) has a hidden parent, we directly employ Lemma~\ref{lem:func-obs2} to construct the two BNs. 
We next consider the case when \(Z\) is observed and has observed parents. By default, we use \(f_Z\) to \(g_Z\) to denote the CPT for \(Z\) in \(\FF_1\) and \(\FF_2\).

The following three steps are considered to construct an instance of unidentifiability.\smallskip

\noindent \textbf{First Step:} we construct \(\tuple{G}{\FF'_1}\) and \(\tuple{G}{\FF'_2}\) by modifying the CPTs for \(Z\). Let \(\P\) be the parents of \(Z\) in \(G\). For each instantiation \(\p\) of \(\P\) where \(\Pr_1(\p) = \Pr_2(\p) = 0\), we modify the entries \(f'_{Z}(Z|\p)\) and \(g'_{Z}(Z|\p)\) for the CPTs \(f'_Z\) and \(g'_Z\) in \(\FF'_1\) and \(\FF'_2\) as follows.
Since \(\Pr_{1x}(\Y) \neq \Pr_{2x}(\Y)\), there exists 
an instantiation \(\y\) such that \(\Pr_{1x}(\y) \neq \Pr_{2x}(\y)\).
WLG, assume \(\Pr_{1x}(\y) > \Pr_{2x}(\y)\).
Since \(\Pr_{1x}(\y)\) is computed as the marginal probability
of \(\y\) in the mutilated BN for \(\tuple{G}{\FF_1}\),
it can be expressed in the form of \emph{network polynomial} as shown in~\citep{DarwicheJACM03, DarwicheBook09}.
If we treat the CPT entries of \(f_Z(Z|\p)\) as unknown, then we can write \(\Pr_{1x}(\y)\) as follows
\begin{center}
\(\Pr_{1x}(\y) = \alpha_0 + \alpha_1 f_Z(z_1|\p) + \cdots + \alpha_k f_Z(z_k|\p)\)
\end{center}
where \(\alpha_0,\alpha_1, \dots, \alpha_k\) are constants and 
\(z_1, \dots, z_k\) are the states of variable \(Z\).
Similarly, we can write \(\Pr_{2x}(\y)\) as follows
\begin{center}
\(\Pr_{2x}(\y) = \beta_0 + \beta_1 g_Z(z_1|\p) + \cdots + \beta_k g_Z(z_k|\p)\)
\end{center}
Let \(\alpha_i\) be the \emph{maximum} value among 
\(\alpha_1, \dots, \alpha_k\) and 
\(\beta_j\) be the \emph{minimum} value among 
\(\beta_1, \dots, \beta_k\), our construction method assigns \(f'_Z(z_i|\p) = 1\) and \(g'_Z(z_j|\p) = 1\).
By construction, it is guaranteed that \(\Pr'_{1x}(\y) - \Pr'_{2x}(\y) \geq \Pr_{1x}(\y)-\Pr_{2x}(\y) > 0\) where \(\Pr'_{1x}(\y)\),
\(\Pr'_{2x}(\y)\) denote the causal effects under the updated CPTs \(f'_Z(Z|\p)\) and \(g'_Z(Z|\p)\).
We repeat the above procedure for all such \(\p\) where \(\Pr_1(\p) = 0\), which yields the new BNs \(\tuple{G}{\FF'_1}\) and \(\tuple{G}{\FF'_2}\) in which \(f'_Z(Z|\p)\) and \(g'_Z(Z|\p)\) are functional whenever \(\Pr_1(\p)=0\).
We next show that \(\FF'_1\) and \(\FF'_2\) (with the updated \(f'_Z\)
and \(g'_Z\)) constitute an example of unidentifiability. 
\(\Pr'_{1x}(\Y) \neq \Pr'_{2x}(\Y)\) since
\(\Pr'_{1x}(\y) > \Pr'_{2x}(\y)\) for the particular instantiation
\(\y\).
We are left to show that the distributions \(\Pr'_1\) and \(\Pr'_2\) induced by \(\tuple{G}{\FF'_1}\) and \(\tuple{G}{\FF'_2}\) are the same over the observed variables \(\V\).
Consider each instantiation \(\v\) of \(\V\) and \(\p\) of \(\P\) where \(\p\) is consistent with \(\v\).
If \(\Pr_1(\p) = 0\), then \(\Pr'_1(\p) = \Pr'_2(\p) = 0\) by 
Lemma~\ref{lem:replace-cpt} and thus \(\Pr'_1(\v) = \Pr'_2(\v) = 0\).
Otherwise, \(\Pr'_1(\v) = \Pr_1(\v) = \Pr_2(\v) = \Pr'_2(\v)\) since
none of the CPT entries consistent with \(\v\) were modified. 
\smallskip

\noindent \textbf{Second Step:} We construct \(\tuple{G''}{\FF''_1}\), \(\tuple{G''}{\FF''_2}\) from \(\tuple{G}{\FF'_1}\), \(\tuple{G}{\FF'_2}\)
by introducing an auxiliary root parent for \(Z\) and 
assigning a functional CPT for \(Z\). 
We add a root variable, denoted \(R\), to be an auxiliary parent of \(W\)
which specifies all possible functions from \(\P^*\) to \(Z\) where \(\P^*\) contains all instantiations of \(\p^*\) where \(\Pr'_1(\p^*) = \Pr'_2(\p^*) > 0\). Each state \(r\) of \(R\) corresponds to a function \(\varphi_r\) where \(\varphi_r(\p^*)\) is mapped to some
state of \(Z\) for each instantiation \(\p^*\). 
The variable \(R\) thus has \(|Z|^{|\P^*|}\) states since there are total of \(|Z|^{|\P^*|}\) possible functions from \(\P^*\) to \(Z\).
For each instantiation \((z,r,\p)\),
if \(\p \in \P^*\), we define \(f''_{Z}(z | r, \p) = 1\) if \(z=\varphi_r(\p)\), and \(f''_{Z}(z | r, \p) = 0\) otherwise.
If \(\p \notin \P^*\), we define \(f''_{Z}(z | r, \p) = f'_{Z}(z|\p)\).
The CPT for \(R\) is assigned as
\(f''_{R}(r) = \prod_{\p \in \P^*} f'_{Z}(\varphi_r(\p)|\p) \).
Moreover, we remove all the states \(r\) of \(R\) where \(f''_R(r) = 0\), which ensures \(f''_R(r) > 0\) for all remaining \(r\).
We assign the CPT \(g''_{Z}\) in \(\FF''_2\) in a similar way.
Note that \(f''_{R} = g''_{R}\) since \(f'_Z(\varphi_r(\p)|\p) = \Pr'_1(\varphi_r(\p)|\p) = \Pr'_2(\varphi_r(\p)|\p) = g'_Z(\varphi_r(\p)|\p)\) for each \(\p \in \P^*\) (where \(\Pr'_1(\p) = \Pr'_2(\p) > 0\)).

We next show that \(\tuple{G''}{\FF''_1}\) and \(\tuple{G''}{\FF''_2}\)
constitute the unidentifiability.
One key observation is that \(f'_T(t | \p) = \sum_r f''_R(r) f''_T(t| \p, r)\) and \(g'_T(t | \p) = \sum_r g''_R(r) g''_T(t| \p, r)\).
Consider each instantiation \((\v,r)\) which contains the instantiation \(\p\) of \(\P\) and the state \(z\) of \(Z\).
Suppose \(\Pr'_1(\p) = 0\), then \(\Pr''_1(\p) = 0\) since the marginal
over \(\V\) is preserved in \(\Pr''_1\). 
Hence, \(\Pr''_1(\v,r) = \Pr''_2(\v,r) = 0\).
Suppose \(\Pr'_1(\p) \neq 0\),
then \(\Pr''_1(\v,r) = (\prod_{V \in \V\setminus \{Z\}} f'_V) \cdot f''_R(r)\) when \(z = \varphi_r(\p)\),
and \(\Pr''_1(\v,r) =0\) otherwise.
Similarly, \(\Pr''_2(\v,r) = (\prod_{V \in \V\setminus \{Z\}} g'_V) \cdot g''_R(r)\) when \(z = \varphi_r(\p)\),
and \(\Pr''_2(\v,r) =0\) otherwise.
In both cases, \(\Pr''_1(\v,r) = \Pr''_2(\v,r)\) since 
\(\FF''_1\) and \(\FF''_2\) assign a same function \(\varphi_r\) 
for each state \(r\) of \(R\), and \(f'_V = g'_V\) for all \(V \in \V \setminus \{Z\}\).
To see \(\Pr''_{1x}(\Y) = \Pr'_{1x}(\Y)\)
and \(\Pr''_{2x}(\Y) = \Pr'_{2x}(\Y)\),
note that summing-out \(R\) from \(\Pr''_{1x}(\V,R)\) and \(\Pr''_{2x}(\V,R)\) yields \(\Pr'_{1x}(\V)\) and \(\Pr'_{2x}(\V)\).
\smallskip

\noindent \textbf{Third Step:} we construct \(\tuple{G}{\FF'''_1}\), \(\tuple{G}{\FF'''_2}\) from \(\tuple{G''}{\FF''_1}\), \(\tuple{G''}{\FF''_2}\) by merging the auxiliary root variable \(R\) with an observed
parent \(T\) of \(Z\).
We merge \(R\) and \(T\) into a new node \(T'\) 
and substitute it for \(T\) in \(G\),
i.e., \(T'\) has the same parents and children as \(T\) in \(G\).
Specifically, \(T'\) is constructed as the Cartesian product of \(R\) and \(T\): each state of \(T'\) can be represented as \((r,t)\) where \(r\) is a state of \(R\) and \(t\) is a state of \(T\).
We then assign the CPT \(f'''_{T'}(T' | \P_{T})\) in \(\FF'''_1\)
as follows.
For each parent instantiation \(\p_{T'}\) and each state \((r,t)\) of \(T'\), 
\(f'''_{T'}((r,t)|\p_{T'}) = f''_{R}(r)f''_{T}(t|\p_{T'})\).
For each child \(C\) of \(T'\) that has parents \(\P_C\) (excluding \(T'\)), the CPT for \(C\) in \(\FF'''_1\) is assigned as
\(f'''_{C}(c|\p_C,(r,t)) = f''_C(c|\p_C,t)\).
Similarly, we assign the CPTs \(g'''_{T'}\) and \(g'''_{C}\) in
\(\FF'''_2\).
The distributions over observed variables are preserved since 
there is an one-to-one correspondence between the instantiations
over \(\V \cup \{R\}\) in \(\tuple{G''}{\FF''_1}\) and the instantiations over \(\V\) in \(\tuple{G}{\FF'''_1}\).
We next consider the causal effect.
Suppose \(T\) is neither a treatment nor an outcome variable, then the merging does not affect the causal effect by the one-to-one correspondence between instantiations and \(\Pr'''_{1x}(\Y) = \Pr''_{1x}(\Y) \neq \Pr''_{2x}(\Y) = \Pr'''_{2x}(\Y)\).
Suppose \(T\) is an outcome variable, since \(\Pr''_{1x}(\Y) \neq \Pr''_{2x}(\Y)\), there exists an instantiation \((\y',r,t)\) where \(\Y' = \Y \setminus \{T\}\) such that \(\Pr''_{1x}(\y',r,t) \neq \Pr''_{2x}(\y',r,t)\). 
This implies \(\Pr'''_{1x}(\y', (r,t)) \neq \Pr'''_{2x}(\y', (r,t))\) for the particular instantiation \(\y'\) and the state \((r,t)\) of \(T'\).
Suppose \(T\) is the treatment variable \(X\), since \(\Pr''_{1x}(\Y) \neq \Pr''_{2x}(\Y)\), there exists an instantiation \((\y,r)\) such that
\(\Pr''_{1x}(\y,r) \neq \Pr''_{2x}(\y,r)\).
Moreover, \(\Pr''_{1x}(r) = \Pr''_1(r) = \Pr''_2(r) = \Pr''_{2x}(r) > 0\) (otherwise,
\(\Pr''_{1x}(\y,r) = \Pr''_{2x}(\y,r) = 0\)).
This implies \(\Pr''_{1x}(\y|r) \neq \Pr''_{2x}(\y|r)\).
Since \(R\) is a root in the mutilated BN,
\(\Pr''_{1(x r)}(\y) = \Pr''_{1x}(\y|r) \neq \Pr''_{2x}(\y|r) = \Pr''_{2(x r)}(\y)\).
We now consider the treatment \(do(T'=(r,x))\) on \(G\)
instead of the treatment \(do(R=r, X=x)\) on \(G''\).
We have \(\Pr'''_{1((r,x))}(\y) = \Pr''_{1(r,x)}(\y)
\neq \Pr''_{2(r,x)}(\y) = \Pr'''_{2((r,x))}(\y)\) for the particular state \((r,x)\) of \(T'\). Moreover, \(\Pr'''_{1}((r,x)) = \Pr''_1(r,x) = \Pr''_1(r) \Pr''_1(x) > 0\) by the positivity assumption of \(\Pr''_1(x)\). Thus, the positivity still holds for \(\Pr'''_1\) and similarly for \(\Pr'''_2\).
\end{proof}

\begin{proof}[Proof of Theorem~\ref{thm:complete-fid}]
Let \(\H = \V' \setminus \V\) be the set of hidden functional variables
that are functionally determined by \(\V\).
By Theorem~\ref{thm:func-unobs-obs}, F-identifiable wrt
\(\langle G, \V, \CC_\V, \W \rangle\) iff F-identifiable wrt \(\langle G', \V, \CC_\V, \W \setminus \H \rangle\) where \(G'\) is the result of functionally eliminating \(\H\) from \(G\).
By construction, every variable in \(\H\) has parents in \(\V'\).
Hence, by Theorem~\ref{thm:func-obs1}, 
F-identifiable wrt \(\langle G', \V, \CC_\V, \W \setminus \H \rangle\) iff 
F-identifiable wrt \(\langle G, \V', \CC_\V, \W \rangle\).
If we consider each functional variable \(W \in \W\),  it is either in \(\V'\) or having some parent
that is not in \(\V'\) (otherwise, \(W\) would have been added to \(\V'\)).
By Lemma~\ref{lem:func-obs3}, 
F-identifiable wrt \(\langle G, \V', \CC_\V, \W \rangle\) iff identifiable wrt
\(\langle G, \V', \CC_\V \rangle\).
\end{proof}
\end{document}